\begin{document}
\newtheorem{lemma}{Lemma}
\newtheorem{theorem}{Theorem}
\newtheorem{definition}{Definition}
\newtheorem{corollary}{Corollary}
\newtheorem{claim}{Claim}
\newtheorem{proposition}{Proposition}

\theoremstyle{remark}
\newtheorem{rmark}{Remark}
\newtheorem{assumption}{Assumption}

\newcommand{\R}{\mathbb{R}}
\newcommand{\C}{\mathbb{C}}
\newcommand{\onevect}{\ensuremath{\mathbf{1}}}
\newcommand{\zeros}{\ensuremath{\mathbf{0}}}
\newcommand{\ones}{\ensuremath{\mathbf{1}}}
\newcommand{\eye}{\ensuremath{\mathbf{I}}}
\newcommand{\tr}{\ensuremath{\text{tr}}}
\newcommand{\ts}{\textsuperscript}
\newcommand{\poly}{\ensuremath{\text{poly}}}
\newcommand{\vectorize}[1]{\ensuremath{\textrm{vec}\left(#1\right)}}


\newcommand{\G}{\ensuremath{\mathcal{G}}}
\newcommand{\V}{\ensuremath{\mathcal{V}}}
\newcommand{\E}{\ensuremath{\mathcal{E}}}
\newcommand{\Lap}{\ensuremath{\mathbf{L}}}
\newcommand{\LG}{\ensuremath{\Lap_\G}}
\newcommand{\LGs}{\ensuremath{\Lap_{\G_s}}}
\newcommand{\W}{\ensuremath{  \mathbf{W} }}
\newcommand{\D}{\ensuremath{  \mathbf{D} }}
\newcommand{\crd}[1]{\ensuremath{  \left|#1\right| }}
\newcommand{\uv}{\ensuremath{\mathbf{u}}}
\newcommand{\UG}{\ensuremath{\mathbf{U}_\G }}
\newcommand{\UGs}{\ensuremath{\mathbf{U}_{\G_s} }}
\newcommand{\LamG}{\ensuremath{\mathbf{\Lambda}_ \G}}
\newcommand{\lamG}{\ensuremath{\mathbf{\lambda}_ \G}}
\newcommand{\covMat}{\ensuremath{{\mathbf{C}}}}

\newcommand{\x}{\ensuremath{\mathbf{x}}}
\newcommand{\xk}{\ensuremath{\x_k}}
\newcommand{\xm}{\ensuremath{\x_m}}
\newcommand{\y}{\ensuremath{\mathbf{y}}}
\newcommand{\Cx}{\ensuremath{{\covMat_\x}}}
\newcommand{\hatCx}{\ensuremath{{\hat{\covMat}_\x}}}
\newcommand{\estCx}{\ensuremath{{{\covMat}^*_\x}}}
\newcommand{\estC}{\ensuremath{{{\covMat}^*}}}
\newcommand{\Cxk}{\ensuremath{{\covMat_{\xk}}}}
\newcommand{\Ckm}{\ensuremath{{\covMat_{\x_k \x_m}}}}
\newcommand{\Cdistx}{\ensuremath{{\rho_\x}}}
\newcommand{\w}{\ensuremath{\mathbf{w}}}
\newcommand{\psdCone}{\ensuremath{{\mathbb{S}}}}
\newcommand{\gPar}{\ensuremath{\mathcal{P}}}
\newcommand{\tv}{\ensuremath{T}}
\newcommand{\hatz}{\ensuremath{\hat{\mathbf{z}}}}
\newcommand{\z}{\ensuremath{\mathbf{z}}}


\newcommand{\Gk}{\ensuremath{\mathbf{G}_k}}
\newcommand{\Gm}{\ensuremath{\mathbf{G}_m}}
\newcommand{\invGk}{\ensuremath{\Gk^\dag}}
\newcommand{\invGm}{\ensuremath{\Gm^\dag}}
\newcommand{\g}{\ensuremath{\mathbf{g}}}
\newcommand{\gk}{\ensuremath{\mathbf{g}_k}}
\newcommand{\h}{\ensuremath{\mathbf{h}}}
\newcommand{\hk}{\ensuremath{\mathbf{h}_k}}
\newcommand{\Hmat}{\ensuremath{\mathbf{H}}}
\newcommand{\M}{\ensuremath{\mathbf{M}}}
\newcommand{\Smat}{\ensuremath{\mathbf{S}}}
\newcommand{\Gmat}{\ensuremath{\mathbf{G}}}
\newcommand{\bv}{\ensuremath{\mathbf{b}}}
\newcommand{\bk}{\ensuremath{\bv_k}}
\newcommand{\Gam}{\ensuremath{\mathbf{\Gamma}}}
\newcommand{\B}{\ensuremath{\mathbf{B}}}
\newcommand{\mub}{\ensuremath{\gamma}}
\newcommand{\epsh}{\ensuremath{\epsilon}}
\newcommand{\Kcut}{\ensuremath{\kappa_C}}


\newcommand{\Amat}{\ensuremath{\mathbf{A}}}
\newcommand{\Bmat}{\ensuremath{\mathbf{B}}}
\newcommand{\Dmat}{\ensuremath{\mathbf{D}}}
\newcommand{\Emat}{\ensuremath{\mathbf{E}}}
\newcommand{\Umat}{\ensuremath{\mathbf{U}}}
\newcommand{\Vmat}{\ensuremath{\mathbf{V}}}
\newcommand{\Ymat}{\ensuremath{\mathbf{Y}}}
\newcommand{\Zmat}{\ensuremath{\mathbf{Z}}}
\newcommand{\uvect}{\ensuremath{\mathbf{u}}}
\newcommand{\vvect}{\ensuremath{\mathbf{v}}}
\newcommand{\cvect}{\ensuremath{\mathbf{c}}}

\title{Locally Stationary Graph Processes}

\author{Abdullah Canbolat and Elif Vural
\thanks{The authors are with the Dept. of Electrical and Electronics Engineering,
METU, Ankara. This work was supported by the Scientific and Technological
Research Council of Turkey (T\"UB\.ITAK) under grant 120E246.}
\thanks{}}

\markboth{}%
{}

\maketitle

\begin{abstract}

Stationary graph process models are commonly used in the analysis and inference of data sets collected on irregular network topologies. While most of the existing methods represent graph signals with a single stationary process model that is globally valid on the entire graph, in many practical problems, the characteristics of the process may be subject to local variations in different regions of the graph. In this work, we propose a locally stationary graph process (LSGP) model that aims to extend the classical concept of local stationarity to irregular graph domains. We characterize local stationarity by expressing the overall process as the combination of a set of component processes such that the extent to which the process adheres to each component varies smoothly over the graph. We propose an algorithm for computing LSGP models from realizations of the process, and also study the approximation of LSGPs locally with WSS processes. Experiments on signal interpolation problems show that the proposed process model provides accurate signal representations competitive with the state of the art. 

\end{abstract}

\begin{IEEEkeywords}
Locally stationary graph processes, non-stationary graph processes, graph signal interpolation, graph partitioning
\end{IEEEkeywords}

\section{Introduction}

Graph signal models provide effective solutions for analyzing data collections acquired on irregular network topologies in many modern applications. The probabilistic modeling of graph signals has been a topic of interest in the recent years. In particular, the concept of stationary random processes has been extended to graph domains in several recent works \cite{Girault2015,Marques2017,Perraudin2017,Loukas2019}. The wide sense stationary  (WSS) graph process models in these works are based on the assumption that the correlations between different graph nodes can be captured via a single global model coherent with the topology of the whole graph. Meanwhile, in many real-world problems,  the nature of the interactions between nearby graph nodes may vary throughout the graph, e.g., in a social network the correlation patterns within a group of users may show substantial diversity among different communities of the network. In this paper, we propose a new graph signal model that extends the concept of local stationarity to graph processes, hence permitting the statistics of the process to be locally-varying.
 
While local stationarity is a well-studied subject in classical time series analysis  \cite{Silverman1957, Dahlhaus2012}, a comprehensive and detailed treatment of local stationarity remains absent in the graph signal processing literature. Several previous works have briefly touched upon the notion of local stationarity within the context of graph processes. The studies in \cite{Hasanzadeh2019} and \cite{Scalzo2023} consider a piecewise stationary process model, which is linked to but not the same as local stationarity; while the works \cite{Girault2017,Girault2017a} propose a definition of local power spectrum for graph processes, however, without introducing a locally stationary process model. In this paper, we propose a locally stationary graph process (LSGP) model where the overall graph process is expressed through the combination of a set of individual stationary graph processes, each of which is generated through a different spectral kernel. The overall process value at each graph node is related to each component process through a smoothly varying membership function. This model also allows the definition of the vertex-frequency spectrum for graph processes, which is analogous to the concept of time-frequency spectrum for time processes. We then propose an algorithm for computing locally stationary graph process models from a set of possibly partially observed realizations of the process, which is formulated as learning the membership functions and the spectral kernels identifying the component processes.

We then proceed to a connected problem and study whether LSGPs on large graphs can be locally approximated with simpler models. We theoretically show that an LSGP can be locally approximated with a WSS graph process on a subgraph, provided that its membership functions are well-localized on the subgraphs and the kernels generating the component processes are sufficiently separated from each other in the spectral domain. These theoretical findings motivate a second algorithm that partitions a given graph into subgraphs by meeting the above conditions and computes a local approximation of the original process on each subgraph.  Experiments on synthetic and real data show that the  proposed algorithms lead to quite competitive performance compared to recent approaches in graph signal interpolation applications.

The rest of the paper is organized as follows: In Section \ref{sec_rel_work}, we discuss the related literature. In Section \ref{sec:preliminary}, we overview some preliminary concepts in graph signal processing. In Section \ref{sec:vlsp_intro}, we present our LSGP model, and propose a method for learning LSGPs in Section \ref{sec:learning}. In Section \ref{sec:wss_lsp}, we study the local approximation of LSGPs with WSS processes.  We present our experimental results in Section \ref{sec_experiments}, and conclude in Section \ref{sec_concl}.

\section{Related Work}
\label{sec_rel_work}

With the emergence of the theory of graph signal processing (GSP) in the recent years, it has been possible to extend classical signal processing concepts such as the Fourier transform, filtering, stationarity and sampling  to irregular graph domains \cite{Shuman2012,Sandryhaila2013,Perraudin2017,Tanaka2020}. The definition of frequency analysis tools on graphs has enabled the construction of graph filters via functions of shift operators \cite{Girault2015isometric,Shuman2016vertex}, following which  the MA, AR, and ARMA filter models widely used in signal processing have been generalized to graph domains as well \cite{Marques2017}, \cite{Liu2019}. The extension of  stochastic process models to graph domains has been the focus of several studies in the last few years. The works in \cite{Girault2015, Marques2017, Perraudin2017, Loukas2019}  have proposed to generalize the notion of wide sense stationarity (WSS) to irregular topologies through the joint diagonalizability of the process covariance matrix with the graph shift operator. The consequent extension of AR, MA, and ARMA process models to graph domains have found efficient usage in problems related to the inference of graph signals \cite{IsufiLPL19,  Marques2017, MeiM17}. 

Most graph process models in the GSP literature are based on a rather strict global stationarity assumption, and non-stationary models with locally-varying statistics as in our work are relatively uncommon. Several studies have considered node-varying graph filters \cite{Hua2018, Gama2022}, which in fact correspond to non-stationary graph filter kernels. A non-stationary graph process model is briefly hinted at in \cite{Segarra2016}. The piecewise stationary process models proposed in \cite{Hasanzadeh2019} and \cite{Scalzo2023}  are some of the other efforts towards dealing with non-stationarity in graph signals. An important difference between these works and ours is that, they present a more restrictive approach as the global graph is broken into subgraphs and each subgraph is constrained to admit a different individual model with its own frequency content. In contrast, in our model the spectral contents of the component processes are valid on the whole graph and are blended smoothly. Even when partitioning a large graph, we do not disregard this holistically defined frequency content, but rather make use of the assumption that a different spectral kernel is dominant on each subgraph. 

The term locally stationary process (LSP) has previously been used in the works of Girault et al.~\cite{Girault2017,Girault2017a}. An intrinsic stationarity definition at local and global scales has been proposed in \cite{Serrano2018}  based on the local graph variogram. However, these studies briefly present a local spectrum definition for graph processes rather than putting forward a stochastic process model. In our work, we elaborate on the concept of local spectrum through our \textit{vertex-frequency spectrum} definition. This is somewhat related to the vertex-frequency analysis concepts introduced in some earlier works \cite{Segarra2016, Shuman2016vertex, Stankovic2020, Gama2022}.  However, the scopes of these works are limited to vertex-frequency kernels and operators, and they do not focus on stochastic graph processes in particular.

While the aforementioned studies consider local stationarity in graph settings, in a wider scope, the notion of local stationarity has been investigated in the classical signal processing literature first \cite{Silverman1957}, \cite{Dahlhaus2012}. Dahlhaus defines a locally stationary process as a time-varying MA process with smoothly varying coefficients over time \cite{Dahlhaus2012}. Our LSGP model, which describes local stationarity through smoothly varying membership functions, is rather aligned with this definition and extends it to irregular finite-dimensional domains in some way. As for the inference of second-order statistics of non-stationary time processes, current methods mostly rely on Silverman's model \cite{SilvermanWahlberg2007,Mallat1998,Hansson2005} or Dahlhaus's model \cite{Dahlhaus1997,Donoho1996,Roueff2019}. Locally varying statistics are often captured through data tapers and windowing techniques in the time domain  \cite{Pitton2000,Palma2010,Dahlhaus2012}.  However, these works address the non-stationary covariance estimation problem within the traditional time series setting, while we consider the problem over irregular graph topologies.

In this work, we demonstrate the proposed methods in signal interpolation applications. A large body of solutions exist for this problem, such as reconstruction via regularization \cite{Jung2019}, deep algorithm unrolling methods with learnable regularization  parameters \cite{Nagahama2022}, and graph neural network methods \cite{WuPCLZY21}.  Graph attention networks \cite{Vaswani2017}  bear some resemblance to our study on a conceptual level, as the locally defined attention coefficients \cite{Velickovic2018,Brody2021} can be compared to our node-varying membership functions. Lastly, a preliminary version of our study was presented in \cite{Canbolat2022}, which has been significantly extended in the current paper by detailing the theoretical results and experimental evaluation.

\section{Preliminaries}
\label{sec:preliminary}

We write matrices with boldface capital letters (e.g.~$\mathbf{A}$), vectors with boldface lowercase  letters (e.g.~$\mathbf{x}$), and sets with calligraphic letters (e.g.~$\mathcal{V}$). Indexing is shown in parentheses (e.g.~$\mathbf{A}(i,j)$). The notation $\| \cdot\|_F$ stands for the Frobenius norm of a matrix, $\circ $ refers to the Hadamard (elementwise) product, $(\cdot)^T$ denotes the transpose, and $(\cdot)^{\dag}$ denotes the pseudo-inverse of a matrix. The notation $\vectorize{\cdot}$ represents the vectorized form of a matrix. Vectors and matrices consisting of only zeros and only ones are denoted as $\zeros_{N} \in \R^{N\times 1}$, $\ones_{N} \in \R^{N \times 1}$ and  $\zeros_{N \times M} \in \R^{N\times M}$, $\ones_{N \times M} \in \R^{N \times M}$, respectively.  $\eye_N \in \R^{N\times N}$ denotes the identity matrix. Curly inequalities for matrices represent element-wise inequality, e.g.,  $\Amat \preccurlyeq \Bmat$ means $\Amat (i,j ) \leq \Bmat(i,j) $ for all entries $(i,j)$. The notation $| \cdot |$ stands for absolute value for scalars; and element-wise absolute value when the argument is a matrix, i.e., $|\Amat| (i,j) \triangleq | \Amat(i,j) |$ for a matrix $\Amat$. When its argument is a set, $| \cdot |$ denotes cardinality. The covariance matrix of a random vector $\x$ is shown as $\covMat_{\x}$.

\subsection{Graph Signal Processing}

In this work, we consider signals defined on an undirected graph $\G$ consisting of a single connected component. The topology of the graph $\mathcal{G} = (\mathcal{V}, \mathcal{E},\W)$ is defined by the vertex set $\mathcal{V}$, edge set $\mathcal{E}$, and edge weight matrix $\W \in \mathbb{R}^{N \times N}$, where $N = \crd{\mathcal{V}}$ is the number of graph nodes (vertices). We write $i \sim j$ when $\W(i,j) \neq 0$, i.e., the nodes $i,j \in \V$ are neighbors.  A graph signal $x$ is defined as a vertex function $x : \mathcal{V} \rightarrow \mathbb{R}$, which can alternatively be represented as a vector $\x \in \mathbb{R}^N$. The normalized graph Laplacian is defined as $\LG= \D^{-1/2} (\D-\W) \D^{-1/2}$, where $\D$ is the diagonal degree matrix given by $\D(i,i)=\sum_{j}\W(i,j)$.  The graph Laplacian has an eigenvalue decomposition $\LG = \UG \LamG \UG^T$, where the columns $\uv_i \in \mathbb{R}^N$ of $\UG = [\uv_1 \, \uv_2 \, \dots \, \uv_N]$ are the Fourier modes of the graph,  and the diagonal entries of $\LamG$ are regarded as graph frequencies. Also, we stack graph frequencies into a vector $\lamG \in \mathbb{R}^N$ such that $\lambda_\mathcal{G}(i) = \LamG(i,i)$. 

Filtering is defined via frequency domain functions called as kernels on graphs. An input graph signal $\x \in \mathbb{R}^N$ can be filtered with a graph filter kernel $h : \mathbb{R} \rightarrow \mathbb{R}$ as \cite{Shuman2012}

\begin{equation}
\y = \UG h(\LamG) \UG^T \x
\end{equation}
where $ h(\LamG)$ is a diagonal matrix whose entries are obtained by evaluating the kernel function $h(\cdot)$ at the graph frequencies $\lamG(i) $; and $\y$ is the output signal. Hence, a graph filter can be represented as a matrix $h(\LG) = \UG h(\LamG) \UG^T$.

\subsection{Wide Sense Stationary Processes on Graphs}

In classical time series analysis, a wide sense stationary (WSS) process is defined as a constant-mean stochastic process such that the covariance between the process values at two distinct time instants depends only on their time difference. In recent studies, the concept of wide sense stationarity has been extended to graph domains as follows  \cite{Perraudin2017}.

\begin{definition}[Wide Sense Stationary Process]
\label{defn_wssp}
A stochastic graph signal $\x \in \mathbb{R}^N$ on graph $\mathcal{G}$ is called WSS if the following conditions are satisfied \cite{Perraudin2017}:

\begin{itemize}
\item The mean of the process is $E[\x] = c \, \mathbf{1}_N$ for a constant $c\in \mathbb{R}$. 
\item The covariance matrix $\Cx$ of the process is in the form of a graph filter $\Cx = \UG h(\LamG) \UG ^T$, where $h$ is a positive graph kernel.
\end{itemize}
\end{definition}

The second condition above imposes the covariance between any two graph nodes to be given by the response of a graph kernel $h(\lambda)$ localized (centered) at one node and evaluated at the other node. Therefore, a graph process is WSS if there exists a graph kernel $h(\lambda)$ that can fully characterize its correlation pattern on the whole graph.

\subsection{Time-Varying Moving Average Processes}

In this section we give a brief overview of locally stationary processes in classical time series analysis  \cite{Dahlhaus2012}. Locally stationary time processes have spectral characteristics that are close to a stationary process when restricted to a fixed time interval. The validity of the approximation of a locally stationary process by a globally stationary one decreases as one moves away from the anchor time instant where the approximation is made. A locally stationary process is in turn a time-varying process.

Let $x_{t,T} $ be a finite-dimensional random process, where $t \in \mathbb{Z}$ is a time instant and $T \in \mathbb{Z}^+$ is the total number of time instants in the process.  Also, let $w_{t}$ represent the value of a white noise process at time instant $t$. The process $x_{t,T}$ is then called  a linear locally stationary process if it can be expressed as a time-varying $MA(\infty)$ process as \cite{Dahlhaus2012}
\begin{equation}
\label{eq:time_lsp}
x_{t,T} = \sum_{i = -\infty}^{\infty} a_{t,T}(i) \ w_{t-i} + \mu\left(\frac{t}{T}\right).
\end{equation} 
Here $\mu:[0,1] \rightarrow \mathbb{R}$ is the mean function of the process and $a_{t,T}:\mathbb{Z}\rightarrow \mathbb{R}$ are time-varying moving average (MA) filter coefficients. For each time shift value $i$, the filter coefficient $a_{t,T}(i) $  is approximately given through a kernel $f$ as  $a_{t,T}(i) \approx f(\frac{t}{T},i)$ where $f(.,i):[0,1]\rightarrow \mathbb{R}$.

The process $x_{t,T}$ has the spectral decomposition \cite{Dahlhaus2012}
\begin{equation}
\label{eq:time_lsp_spec}
x_{t,T} = \mu\left(\frac{t}{T}\right) + \frac{1}{\sqrt{2\pi}} \int_{-\pi}^{\pi} A_{t,T}(\lambda) e^{j\lambda t} d\xi(\lambda)
\end{equation}
where $\xi(\lambda)$ is an orthogonal increment process and $A_{t,T}(\lambda) \approx F(\frac{t}{T},\lambda)$, with $A_{t,T}$ and $F(\frac{t}{T},.)$ being the frequency domain representations of the filters generated by $a_{t,T}$ and $f(\frac{t}{T},.)$, respectively. These are also called as the time-frequency spectrum of the process.

\section{Locally Stationary Graph Processes}
\label{sec:vlsp_intro}

\subsection{Proposed Process Model}
We propose a locally stationary graph process model of the form 
\begin{equation}
    \x = \sum_{k=1}^K \Gk \UG h_k(\LamG) \UG^T  \w
    \label{eq:nvkernel_graphvalid}
\end{equation}
where the process $\x\in \mathbb{R}^N$ of interest is obtained by filtering a white noise process $\w \in \mathbb{R}^N$ of unit variance. Here, the overall filtering operation on the white process is expressed in terms of $K$ graph filters of the form $\UG h_k(\LamG) \UG^T$, each of which is defined by the kernel $h_k(\lambda)$. For each $k$, the term $\UG h_k(\LamG) \UG^T \w$ is an individual WSS graph process, which forms a ``component'' of the locally stationary graph process model. The matrices $\Gk \in \mathbb{R}^{N \times N}$ are diagonal matrices representing the ``membership'' of the overall process $\x$ with respect to each component process: For each $n$-th node, the $n$-th diagonal entry $\Gk(n,n)$ indicates how much the $k$-th component process contributes to the overall process $\x$. Note that an analogy can be drawn between \eqref{eq:nvkernel_graphvalid} and the classical locally stationary time process model in \eqref{eq:time_lsp}, such that the filter coefficients $a_{t,T}$ in \eqref{eq:time_lsp} would correspond to the graph filters $\UG h_k(\LamG) \UG^T$ in \eqref{eq:nvkernel_graphvalid}. While the local behavior of the process model is captured by the dependence of the filters $a_{t,T}$ on the time instant $t$ in  \eqref{eq:time_lsp}, in  \eqref{eq:nvkernel_graphvalid}  it is captured by the membership matrices $\Gk$ whose $n$-th entries define the local behavior of the process at the $n$-th graph node. In our process definition, memberships are allowed to take negative values as well as positive ones, which permits a more compact model representation.

In  \eqref{eq:nvkernel_graphvalid} we do not make any general assumptions about the spectral characteristics of the graph filters $h_k(\cdot)$ defining the model. In return, in order to provide the model with a meaningful notion of local stationarity, one must ensure that the statistics of the process change smoothly between neighboring nodes. We achieve this by imposing that the membership functions of the process vary slowly on the graph: Defining the vectorized form $\gk \in \mathbb{R}^N$ of the diagonal matrices $\Gk$ such that $\gk(n) = \Gk (n,n)$, the vector $\gk $ can be regarded as a membership function on the graph, which identifies how much each graph node conforms to the $k$-th component process model. An upper bound on the term $\gk^T \LG \gk$ then determines the rate at which the membership function $\gk$ varies on the graph. This motivates the following definition of locally stationary graph processes (LSGP): 
\begin{definition}[Locally Stationary Graph Process]
\label{def:local_filter}
A stochastic graph signal $\x$ is called a locally stationary graph process (LSGP) with variation rate $C$, if it can be expressed as
\begin{equation}
    \x = \sum_{k=1}^K \Gk \UG h_k(\LamG) \UG^T \w
    \label{eq_locstat_model}
\end{equation}
such that $\gk^T \LG  \gk \leq C$ and $\|  h_k(\LamG) \|_F^2=1$, for $k=1, \dots, K$. 
\end{definition}

Let $\hk \in \mathbb{R}^N$ denote the vectorized form of the diagonal entries of  $h_k(\LamG)$ such that $\hk(n)=h_k(\LamG)(n,n)$.  In the above definition, the normalization condition  $\|   h_k(\LamG) \|_F^2=1$ (or equivalently $\| \hk \|=1$) is imposed on the kernels in order to avoid the scale ambiguity arising from the product of the membership values $\gk$ with the amplitudes of the kernels $\hk$.

\subsection{Vertex-Frequency Spectrum}
\label{sec:vf_spec}

In classical time series analysis, a locally stationary time process is associated with a time-frequency spectrum $A_{t,T}$ as in \eqref{eq:time_lsp_spec}.  We now show that, in analogy, a vertex-frequency spectrum can be defined for LSGPs.

\begin{theorem}[Vertex-Frequency Spectrum]
A locally stationary graph process with variation rate $C$ can be expressed as $\x=\Hmat \w$, where the filter  $\Hmat = \sum_{k=1}^K \Gk \UG h_k(\LamG) \UG^T $ can be written in the form
\begin{equation}
    \Hmat = (\UG \circ \M)  \UG^T
    \label{eq:compact_local}
\end{equation}
with $\M= \sum_{k=1}^K  \gk \hk^T$.  Here the matrix $\M \in \mathbb{R}^{N\times N}$ identifies the vertex-frequency spectrum of the process, such that $\M(n,i)$ gives the local spectrum for the $i$-th graph frequency $\lamG(i)$ at graph node $n$. For each graph frequency $\lamG(i)$, the local spectrum $\M(\cdot, i)$ has bounded variation over the whole graph when regarded as a graph signal, such that the total variation of the local spectra is upper bounded as
\[ \textup{\tr}(\M^T  \LG \M)\leq K^2 C.\]
\label{th:vertex_freq_spec}
\end{theorem}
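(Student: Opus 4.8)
The plan is to establish the three claims in turn: (i) the factorization $\Hmat = (\UG \circ \M)\UG^T$ with $\M = \sum_k \gk \hk^T$, (ii) the interpretation of $\M(n,i)$ as the local spectrum, and (iii) the variation bound $\tr(\M^T \LG \M) \le K^2 C$. The first part is a direct computation. Writing $\Hmat = \sum_{k=1}^K \Gk \UG h_k(\LamG) \UG^T$, I would look at a single term $\Gk \UG h_k(\LamG)$ before the final $\UG^T$. Since $\Gk$ is diagonal with diagonal $\gk$ and $h_k(\LamG)$ is diagonal with diagonal $\hk$, the $(n,i)$ entry of $\Gk \UG h_k(\LamG)$ equals $\gk(n)\,\UG(n,i)\,\hk(i) = \UG(n,i)\,(\gk\hk^T)(n,i)$, i.e.\ $\Gk \UG h_k(\LamG) = \UG \circ (\gk \hk^T)$. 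Summing over $k$ and using linearity of the Hadamard product gives $\sum_k \Gk \UG h_k(\LamG) = \UG \circ \M$ with $\M = \sum_k \gk \hk^T$, and right-multiplying by $\UG^T$ yields \eqref{eq:compact_local}. The local-spectrum interpretation in part (ii) then follows by reading off \eqref{eq:compact_local}: the output at node $n$ is $\x(n) = \sum_i (\UG(n,i)\M(n,i)) (\UG^T\w)(i)$, so $\M(n,i)$ plays exactly the role of a node-dependent frequency response weighting the $i$-th Fourier mode, generalizing the constant kernel $h(\lamG(i))$ of a WSS process; this part is essentially a remark and requires no real work.

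The substantive part is the bound $\tr(\M^T \LG \M) \le K^2 C$. I would expand $\M = \sum_{k=1}^K \gk \hk^T$ and use bilinearity of the form $(\A,\B) \mapsto \tr(\A^T \LG \B)$:
\begin{equation}
\tr(\M^T \LG \M) = \sum_{k=1}^K \sum_{m=1}^K \tr\!\big( (\gk \hk^T)^T \LG (\gm \h_m^T) \big) = \sum_{k=1}^K \sum_{m=1}^K (\hk^T \h_m)\,(\gk^T \LG \gm),
\end{equation}
where I used $\tr(\h_k \gk^T \LG \gm \h_m^T) = \tr(\h_m^T \h_k \, \gk^T \LG \gm) = (\hk^T\h_m)(\gk^T\LG\gm)$ by cyclicity of the trace and the fact that the inner quantities are scalars. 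Now each factor must be bounded. For the kernel factor, $\|\hk\| = 1$ for all $k$ (the normalization $\|h_k(\LamG)\|_F^2 = 1$), so by Cauchy--Schwarz $|\hk^T \h_m| \le \|\hk\|\,\|\h_m\| = 1$. For the Laplacian factor, I would note that $\LG \succeq 0$ (normalized Laplacian is positive semidefinite), so the bilinear form $(\a,\b)\mapsto \a^T\LG\b$ is a valid (semi-)inner product, and Cauchy--Schwarz gives $|\gk^T \LG \gm| \le \sqrt{\gk^T\LG\gk}\,\sqrt{\gm^T\LG\gm} \le \sqrt{C}\sqrt{C} = C$ using the hypothesis $\gk^T\LG\gk \le C$. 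Combining, each of the $K^2$ summands is at most $1 \cdot C = C$ in absolute value, hence $\tr(\M^T\LG\M) \le \sum_{k,m} |\hk^T\h_m|\,|\gk^T\LG\gm| \le K^2 C$.

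The main obstacle, such as it is, is purely bookkeeping: making sure the trace manipulation $\tr\!\big((\gk\hk^T)^T \LG (\gm\h_m^T)\big) = (\hk^T\h_m)(\gk^T\LG\gm)$ is carried out with the factors in the right order, and being slightly careful that $\tr(\M^T\LG\M)$ is automatically nonnegative (since $\LG\succeq 0$) so that bounding its absolute value bounds the quantity itself. One could sharpen the constant — the diagonal terms ($k=m$) are bounded by $\|\hk\|^2(\gk^T\LG\gk) \le C$ directly, and cross terms by $C$ via the two Cauchy--Schwarz steps — but the stated $K^2 C$ follows immediately from the crude termwise bound, so I would present exactly that.
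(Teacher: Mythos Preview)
Your proposal is correct and follows essentially the same approach as the paper: expand $\tr(\M^T\LG\M)$ bilinearly into $\sum_{k,m}(\hk^T\h_m)(\gk^T\LG\gm)$ and bound each summand by $C$ via Cauchy--Schwarz on both factors (using $\|\hk\|=1$ and $\LG\succeq 0$). The only cosmetic difference is that the paper derives the factorization $\Gk\UG h_k(\LamG)=\UG\circ(\gk\hk^T)$ by invoking the identity $\Dmat(\Amat\circ\Bmat)\Emat=\Amat\circ(\Dmat\Bmat\Emat)$ for diagonal $\Dmat,\Emat$ with $\Bmat=\mathbf{1}_{N\times N}$, whereas you compute entries directly; both are one-line arguments.
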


The proof of Theorem \ref{th:vertex_freq_spec} is given in Appendix B. An immediate implication of Theorem \ref{th:vertex_freq_spec} is that the vertex-frequency spectrum $\M$ of an LSGP is the counterpart of the time-frequency spectrum $A_{t,T}$ for locally stationary time processes; hence, $\M$ defines a vertex-varying spectrum for LSGPs. Noticing that a WSS graph process is always an LSGP due to Definition \ref{defn_wssp}, it is easy to verify that the vertex-frequency spectrum $\M$ of a WSS graph process takes the trivial form of a rank-1 matrix with identical row vectors.

\subsection{Extension and Restriction of LSGPs}
\label{sec:extension_lsp}

In the analysis of graph signals, a question of interest is whether a graph signal model can be extended to larger graphs in a mathematically consistent way, e.g., due to the addition of new nodes. Similarly, one may want to restrict the model to a subgraph of the original graph as well, e.g., due to node removal. In Appendix C, we discuss the extension and restriction of LSGPs. We show that the family of LSGPs are sufficiently comprehensive so as to permit their extension and restriction to larger and smaller graphs by still remaining in the class of LSGP models.

 \section{Learning LSGP Models}
 \label{sec:learning}

In Section \ref{sec:vlsp_intro}, we proposed a model for locally stationary graph processes and examined some of its properties. We next study the problem of learning LSGPs in this section. We propose a framework for computing an LSGP model from a set $\{\x^l \}_{l=1}^L$  of $L$ graph signals that are considered as realizations of a process $\x$. Our aim is then to compute a process model of the form \eqref{eq_locstat_model}.  In many practical applications, the graph signals of interest can be only partially observed; hence, we address a flexible setting where the given realizations $\x^l \in \mathbb{R}^{N}$ of the process may contain missing values.

We consider that the value $\x^l(i)$ of the realization is known at a subset of the graph nodes $i \in \{1, \dots, N\}$. Let us denote as $I^l$ the index set of the nodes $i$ where $\x^l(i)$ is known. In order to compute an LSGP model, we first obtain an estimate $\hatCx$ of the covariance matrix $\Cx$ of the process from the available observations, which can be chosen as the sample covariance estimate. Our approach is then based on learning a process model by fitting the parameters of the LSGP in  \eqref{eq_locstat_model} to the covariance estimate $\hatCx$. The covariance matrix of the process $\x=\Hmat \w$ as defined in Theorem \ref{th:vertex_freq_spec} is 
\begin{equation}
\label{eq:covariance}
\Cx = E[\x \x^T] = \Hmat \Hmat^T
\end{equation}
assuming that $\w$ is a zero-mean white process with unit variance. We then propose to learn the process model by solving the following optimization problem:

\begin{equation}
\label{eq:optimization_coarse}
\begin{split}
\underset{\{\g_k\}_{k = 1}^{K},\{\h_k\}_{k = 1}^{K}}{\text{minimize}} ||\hatCx - \Hmat \Hmat^T ||^2_F+ \mu_1 \, \tr(\Gmat^T \LG \Gmat) \\
\textrm{subject to } \Gmat = [\g_1 \ \dots \ \g_K], \quad 
\Hmat = \sum_{k=1}^K \Gk  \UG h_k(\LamG) \UG^T.\\
\end{split}
\end{equation}

The above optimization problem is motivated by the local stationarity definition in Definition \ref{def:local_filter}. The first term in the objective function enforces the covariance matrix $\Cx=\Hmat \Hmat^T$ of the learnt process model to fit the initial empirical estimate $\hatCx$. Gathering all membership functions $\g_k$ in the matrix $\Gmat \in \R^{N \times K}$, the second term aims to reduce the variation rate $C$ of the locally stationary process as much as possible, so that the process statistics change smoothly over the graph.

The optimization problem in \eqref{eq:optimization_coarse} is difficult to solve as it is nonconvex and it involves $2NK$ optimization variables. In order to put the problem in a more tractable form, we first constrain the graph kernels $h_k(\cdot)$ to be polynomial functions, which is a common choice due to its various convenient properties such as good vertex-domain localization \cite{Segarra2016}. The entries of the spectrum vector $\h_k$ are then of the form
\begin{equation}
\label{eq:polynomial_kernel}
\h_k(i) = \sum_{q = 0}^{Q-1} b_{q,k} \lamG^q (i)
\end{equation}
where $\lamG^q (i)$ denotes the $q$-th power of the $i$-th graph frequency $ \lamG(i)$; and $b_{q,k} $ are polynomial coefficients.  Let us define the polynomial coefficient vector $\bk = [b_{0,k} \ b_{1,k} \ \dots \ b_{Q-1,k}]^T \in \R^{Q}$ of the $k$-th kernel, as well as the overall coefficient vector  $\bv = [\bv_1^T \ \bv_2^T \ \dots \ \bv_K^T]^T \in \R^{QK}$. Let also $\g = [\g_1^T \ \g_2^T \ \dots \ \g_K^T]^T \in \R^{NK}$ denote the vectorized form of the matrix $\Gmat$. Then we propose to relax the nonconvex problem \eqref{eq:optimization_coarse} into a convex one, by introducing the new optimization variables
\begin{equation}
\label{eq_defn_B_Gamma}
 \Gam=\g \g^T \in \R^{NK \times NK}, \qquad \qquad \B= \bv \bv^T \in \R^{QK \times QK}.
\end{equation}
In Appendix D, we show that the term $\Hmat \Hmat^T$ can be directly expressed as a function of $\Gam$ and  $\B$; and the term $\tr(\Gmat^T \LG \Gmat)$ can be written as a linear function of $\Gam$. We can thus define the functions $f_1(\Gam, \B)=||\hatCx - \Hmat \Hmat^{T}||^2_F$ and $f_2(\Gam)=\tr(\Gmat^T \LG \Gmat)$ representing the first and the second terms of the objective in \eqref{eq:optimization_coarse}. Meanwhile, for the decompositions in \eqref{eq_defn_B_Gamma} to be valid, the matrices $\Gam$ and $\B$ need to be rank-1 and positive semi-definite. We thus propose to relax the original problem \eqref{eq:optimization_coarse} into the optimization problem
\begin{equation}
\label{eq:optimization_poly}
\begin{split}
\underset{\Gam,\B}{\text{minimize}} \ f_1(\Gam,\B)+ \mu_1  f_2(\Gam)+ \mu_2 \tr(\B) + \mu_3 \tr(\Gam) \\
\textrm{subject to } \Gam \in \psdCone_+^{NK},  \ \ \B \in \psdCone_+^{QK}
\end{split}
\end{equation}
where $\psdCone_+^{n}$ denotes the cone of $n\times n$ positive semi-definite matrices and $\mu_1, \mu_2, \mu_3$ are positive weight parameters. The terms $\tr(\B)$  and $ \tr(\Gam) $ in the objective function give the nuclear norms of the positive semi-definite matrices $\B$ and $\Gam$, aiming to minimize their ranks by providing a relaxation of the rank-1 constraint.

The term $f_1(\Gam,\B)$ in \eqref{eq:optimization_poly} is quadratic individually in $\Gam$ and in $\B$, and the term $f_2(\Gam)$ is linear in $\Gam$, whose explicit forms can be found in Appendix D. While the overall objective function is not jointly convex in $\Gam$ and $\B$, it is convex in only $\Gam$ and only $\B$. We propose to minimize the objective iteratively with an alternating optimization procedure. In each iteration, first fixing $\Gam$, and then fixing $\B$, the problem \eqref{eq:optimization_poly} can be rewritten respectively in the forms \eqref{eq:explicit_fd_b} and \eqref{eq:explicit_fd_gamma}. Here the positive semi definite matrices $\mathbf{Q}_{\Gam}, \mathbf{Q}_{\B} $ and the vectors $\cvect_{\Gam},  \cvect_{\B}$ depend respectively on the fixed variables $\Gam$ and $\B$ of each problem, as well as the problem constants $\LG$, $\hatCx $, $\mu_1$, $\mu_2$, and $\mu_3$.

\begin{subequations}

  \begin{equation}
    \label{eq:explicit_fd_b}
    \begin{split}
      \underset{\B}{\text{minimize}} \ \vectorize{\B}^T \mathbf{Q}_{\Gam} \vectorize{\B} + \cvect_{\Gam}^T \vectorize{\B} \\
      \textrm{subject to } \B \in \psdCone_+^{QK}
      \end{split}
  \end{equation}
  
  \begin{equation}
    \label{eq:explicit_fd_gamma}
    \begin{split}
      \underset{\Gam}{\text{minimize}} \ \vectorize{\Gam}^T \mathbf{Q}_{\B} \vectorize{\Gam} + \cvect_{\B}^T \vectorize{\Gam} \\
      \textrm{subject to } \Gam \in \psdCone_+^{NK}
      \end{split}
  \end{equation}

\end{subequations}

The objectives in \eqref{eq:explicit_fd_b} and \eqref{eq:explicit_fd_gamma} are quadratic and convex in $\B$ and $\Gam$, respectively. These problems can be solved with semi-definite programming (SDP) by linearizing the quadratic functions \cite{Toh2006, Yang2015, Yurtsever2021} or solving the quadratic semidefinite programming problem directly with specialized solvers \cite{Li2018}. Once  $\Gam$ and $\B$ are found by solving  \eqref{eq:explicit_fd_b} and \eqref{eq:explicit_fd_gamma} in an alternating way, the model parameter vectors $\g$ and $\bv$ are computed through their rank-1 approximations via SVD. The filter $\Hmat$ can then be computed from $\g$ and $\bv$ using the relations in \eqref{eq:compact_local} and \eqref{eq:polynomial_kernel}. The proposed method for learning LSGP models is summarized in Algorithm  \ref{alg:lsp_learning}.

 \begin{algorithm}[t]
  {\small
    \caption{\small Learning LSGP models}\label{alg:lsp_learning}
      \hspace*{\algorithmicindent} \textbf{Input:} Graph $\G$, initial covariance estimate $\hatCx$, number of process components $K$ \\
   \hspace*{\algorithmicindent} \textbf{Output:} LSGP model parameters $\gk$, $\hk$, $\M$, $\Hmat$\\
    \vspace{-0.2cm}\\
    \textbf{procedure} $\text{\texttt{learnLSGP}}(\G, \hatCx, K)$ \\
    \vspace{-0.5cm}
    \begin{algorithmic}[1]
    \STATE \hspace{0.5cm}Estimate $\Gam$ and $\B$ by solving \eqref{eq:explicit_fd_b}-\eqref{eq:explicit_fd_gamma} alternatingly
    \STATE \hspace{0.5cm}Find $\g$ and $\bv$ with rank-1 decompositions of $\Gam$ and  $\B$
    \STATE \hspace{0.5cm}Compute kernels $\{\hk\}_{k = 1}^K$ using \eqref{eq:polynomial_kernel}
    \STATE \hspace{0.5cm}From $\{\g_k\}_{k=1}^K$ and $\{\h_k\}_{k = 1}^K$, compute $\M = \sum_{k=1}^K \g_k \h_k^T$
    \STATE \hspace{0.5cm}Estimate $\Hmat$ from \eqref{eq:compact_local} \\
     \hspace*{0.5cm}\textbf{return}  $\{\gk\}$, $\{\hk\}$, $\M$, $\Hmat$
         \end{algorithmic} }
\end{algorithm}

In Appendix E, we present a complexity analysis of Algorithm \ref{alg:lsp_learning}, which can be summarized as $O(\poly(N) K^2)$ with $\poly(\cdot)$ denoting polynomial complexity.

 \section{Locally Approximating LSGPs with WSS Processes}
 \label{sec:wss_lsp}
 
 In Section \ref{sec:learning}, we have proposed a method for learning an LSGP model from realizations of the process. Data statistics on large networks are likely to vary gradually throughout the network, which justifies the assumption of local stationarity. On the other hand, a common problem in graph signal processing is the potential complexity of learning graph signal models over a whole network as the network size increases. Motivated by these observations, in this section we explore a constructive approach for handling local stationarity in large graphs. Our approach is based on partitioning a given graph $\G$ into a set of $K$ disjoint subgraphs $\{\G_k\}_{k = 1}^K$. We consider the LSGP model in \eqref{eq_locstat_model} and express the process $\x$ on the original graph $\G$ as $ \x = \sum_{k=1}^K \Gk \x_k$, where each component process 
 \[
 \x_k = \UG h_k(\LamG) \UG^T \w
 \]
 is a WSS graph process. We then would like to partition $\G$ such that the process $\x$ can be approximated through only the component process $\x_k$ over each subgraph $\G_k$. The feasibility of such an approximation of course depends on the specific LSGP at hand; in particular, the characteristics of the membership functions $\gk$ and the kernels $\hk$. In Section \ref{sec:cov_bounds}, we explore the conditions on  $\gk$ and $\hk$ that permit accurate local approximations of $\x$ with WSS processes as above. We then study the covariance matrix $\Cx$ of the process $\x$ and show that, under these conditions, $\x$ is weakly correlated across different subgraphs $\G_k$. Finally, these theoretical findings give rise to an algorithm in Section \ref{sec:subgraph_algorithm} for suitably partitioning a graph $\G$ based on the covariance of $\x$, and locally approximating $\x$ with a WSS process $\x_k$ on each subgraph.

 \subsection{Covariance Analysis of LSGPs}
 \label{sec:cov_bounds}
Let $\G= (\mathcal{V}, \E, \W)$ be a graph with $N$ nodes, and let $\{\G_k\}_{k = 1}^K$ be disjoint subgraphs of $\G$ with vertex sets $\{\V_k\}_{k = 1}^K$ such that $\bigsqcup_{k = 1}^K \mathcal{V}_k = \mathcal{V}$. For each subgraph $\G_k$, let $\Smat_k \in \{0,1\}^{|\mathcal{V}_k |\times N}$ denote a binary selection matrix representing an inclusion map between $\G_k$ and $\G$, such that $\Smat_k (i,j)=1$ if and only if the $i$-th node in $\G_k$ corresponds to the $j$-th node in $\G$. We consider an LSGP $ \x = \sum_{k=1}^K \Gk \UG h_k(\LamG) \UG^T \w$ on graph $\G$ whose membership functions $\gk$'s have the following property:
\begin{assumption}
\label{ass:vertex_separation}
Each membership function $\gk$ is localized over the subgraph $\G_k$ such that there exist constants $\delta,\mu,\mub >0$ with $| \Smat_k \Gmat_m | \preccurlyeq \delta \Smat_k $ for $k \neq m$, and $  \mu \Smat_k \preccurlyeq \Smat_k \Gmat_k \preccurlyeq  \mub \Smat_k$ for $k=1, \dots, K$.
 \end{assumption} 
According to the above assumption, each membership function $\gk$ must be relatively strong on the corresponding subgraph $\G_k$ as lower bounded by the parameter $\mu$, while it should take weaker values on the other subgraphs $\G_m$ as upper bounded by the parameter $\delta$. The parameter $\mub$ stands for an upper bound that prevents the membership functions from taking unbounded values at an arbitrary node.

We first wish to determine how well the process $\x$ can be approximated by the component processes $\xk$, which we characterize in terms of their second-order statistics. Let 
\begin{equation*}
\Ckm = E[ \xk \xm^T ]
\end{equation*}
denote the cross-covariance matrix of the component processes $\xk$ and $\xm$. In the next main result, we provide an upper bound on the deviation between $\Cx$ and the cross-covariances $\Ckm $ of the component processes.

 \begin{theorem}
 \label{theorem:arbitrary_clusters}
Let Assumption \ref{ass:vertex_separation} hold for the LSGP $\x$. Then for all $k, m \in \{ 1, \dots, K \}$, the cross-covariance $\Ckm $ of $\xk$ and $\xm$ approximates the overall covariance $\Cx$ on $\G_k$ and $\G_m$ according to the following bound:
 \begin{equation}
  \begin{split}
    &| \Smat_k \invGk \Cx  (\invGm)^T  \Smat_m^T - \Smat_k \Ckm \Smat_m ^T| \\
    &\preccurlyeq  \left(2 (K-1) \frac{\delta}{\mu} + (K-1)^2 \left(\frac{\delta}{\mu} \right)^2 \right)  \mathbf{1}_{\crd{\mathcal{V}_k} \times \crd{\mathcal{V}_m }} \\
  \end{split}
 \end{equation}
 \label{thm_dev_Cx_Ckm}
 \end{theorem}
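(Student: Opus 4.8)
\textbf{Setup and main idea.} The plan is to expand both $\Cx$ and $\Ckm$ in terms of the component covariances and cross-covariances, and then control the extra terms using Assumption~\ref{ass:vertex_separation}. Writing $\x = \sum_{m'=1}^K \Gmat_{m'} \x_{m'}$ with $\x_{m'} = \UG h_{m'}(\LamG)\UG^T\w$, we get
\begin{equation*}
\Cx = E[\x\x^T] = \sum_{k'=1}^K \sum_{m'=1}^K \Gmat_{k'} \covMat_{\x_{k'}\x_{m'}} \Gmat_{m'}.
\end{equation*}
Conjugating by the selection and inverse-membership operators $\Smat_k \invGk$ on the left and $(\Smat_m\invGm)^T$ on the right, the single diagonal-in-$k',m'$ term with $k'=k$, $m'=m$ contributes exactly $\Smat_k \invGk \Gmat_k \covMat_{\x_k\x_m}\Gmat_m (\invGm)^T\Smat_m^T$. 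The first step is to argue that, restricted to the subgraphs, $\invGk\Gmat_k$ acts like the identity up to the selection — more precisely that $\Smat_k\invGk\Gmat_k\covMat_{\x_k\x_m}\Gmat_m(\invGm)^T\Smat_m^T = \Smat_k\covMat_{\x_k\x_m}\Smat_m^T$, because $\Gmat_k$ is diagonal and $\Smat_k\invGk$ inverts its action on the support of $\Smat_k$ (where $\mu\Smat_k\preccurlyeq\Smat_k\Gmat_k$ guarantees invertibility on $\V_k$). This isolates the ``main'' term $\Smat_k\Ckm\Smat_m^T$, so the left-hand side of the claimed inequality equals the absolute value of the sum over all $(k',m')\neq(k,m)$ of the cross terms $\Smat_k\invGk\Gmat_{k'}\covMat_{\x_{k'}\x_{m'}}\Gmat_{m'}(\invGm)^T\Smat_m^T$.

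\textbf{Bounding the cross terms.} Next I would bound a single off-diagonal term entrywise. The key estimate from Assumption~\ref{ass:vertex_separation} is that $|\Smat_k\invGk\Gmat_{k'}|\preccurlyeq (\delta/\mu)\,\Smat_k$ when $k'\neq k$ (combine $|\Smat_k\Gmat_{k'}|\preccurlyeq\delta\Smat_k$ with the entrywise lower bound $\mu\Smat_k\preccurlyeq\Smat_k\Gmat_k$ which makes each diagonal entry of $\invGk$ on $\V_k$ at most $1/\mu$), and $|\Smat_k\invGk\Gmat_k|\preccurlyeq\Smat_k$. Here I will need a normalization fact about the component (cross-)covariances: since $\|h_{k'}(\LamG)\|_F = 1$ and the $\UG$ are orthogonal, $\covMat_{\x_{k'}\x_{m'}} = \UG h_{k'}(\LamG) h_{m'}(\LamG)\UG^T$ has controlled entries — in particular, by Cauchy–Schwarz on rows/columns, $|\covMat_{\x_{k'}\x_{m'}}(i,j)|\le 1$ for all $i,j$, so $|\covMat_{\x_{k'}\x_{m'}}|\preccurlyeq\ones_{N\times N}$. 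Propagating these entrywise bounds through the matrix products (each $\preccurlyeq$ is preserved under left/right multiplication by the nonnegative selection matrices and by $\ones$-bounded factors) gives, for $(k',m')\neq(k,m)$, a bound of the form $c_{k'}c_{m'}\,\ones_{|\V_k|\times|\V_m|}$ where $c_{k'}=1$ if $k'=k$ and $c_{k'}=\delta/\mu$ otherwise, and similarly for $c_{m'}$.

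\textbf{Summing up.} Finally I sum these entrywise bounds over all $(k',m')\neq(k,m)$: the number of pairs with $k'=k,m'\neq m$ is $K-1$, each contributing $(\delta/\mu)$; symmetrically $K-1$ pairs with $m'=m,k'\neq k$ contributing $(\delta/\mu)$; and $(K-1)^2$ pairs with both $k'\neq k$ and $m'\neq m$, each contributing $(\delta/\mu)^2$. This yields exactly the coefficient $2(K-1)(\delta/\mu) + (K-1)^2(\delta/\mu)^2$ multiplying $\ones_{|\V_k|\times|\V_m|}$, as claimed. (The triangle inequality for $|\cdot|$ applied entrywise turns the sum of absolute-value bounds into a bound on the absolute value of the sum.)

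\textbf{Main obstacle.} The routine part is the bookkeeping of which factor is bounded by $1$ versus $\delta/\mu$; the part requiring genuine care is the normalization estimate $|\covMat_{\x_{k'}\x_{m'}}|\preccurlyeq\ones_{N\times N}$ (or whatever constant ultimately appears) — one must check that the $\|h_k(\LamG)\|_F=1$ condition, together with orthogonality of $\UG$, is strong enough to yield a clean entrywise bound on the component covariances that is absorbed into the stated constant without leaving extra factors. I would also double-check the edge case $k=m$, where the diagonal term $(k',m')=(k,k)$ is the main term and the remaining $K^2-1$ pairs still organize into the same count $2(K-1)+(K-1)^2 = K^2-1$, consistent with the bound.
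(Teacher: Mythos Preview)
Your proposal is correct and follows essentially the same route as the paper's proof: expand $\Cx=\sum_{k',m'}\Gmat_{k'}\covMat_{\x_{k'}\x_{m'}}\Gmat_{m'}$, identify the $(k',m')=(k,m)$ term with $\Smat_k\Ckm\Smat_m^T$, bound each cross-covariance entrywise by $|\covMat_{\x_{k'}\x_{m'}}|\preccurlyeq\ones_{N\times N}$, use Assumption~\ref{ass:vertex_separation} to get $|\Smat_k\invGk\Gmat_{k'}|\preccurlyeq(\delta/\mu)\Smat_k$ for $k'\neq k$ and $\preccurlyeq\Smat_k$ for $k'=k$, and then count the three groups of off-terms. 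The only cosmetic difference is in the normalization step you flagged as the ``main obstacle'': the paper obtains $|\Ckm|\preccurlyeq\ones_{N\times N}$ via $|\uv_n\uv_n^T|\preccurlyeq\ones_{N\times N}$ together with $\sum_n|\h_k(n)\h_m(n)|\le\tfrac12\sum_n(\h_k(n)^2+\h_m(n)^2)=1$, rather than Cauchy--Schwarz, but both arguments yield the same bound.
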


The proof of Theorem  \ref{thm_dev_Cx_Ckm}  is given in Appendix F. The theorem compares the covariance $\Cx$ of the process $\x$ (normalized by the inverse membership functions for appropriate scaling) with the cross-covariance $\Ckm $ of the component processes, when locally restricted to the nodes on the subgraphs $\G_k$ and $\G_m$. Note that by choosing $k=m$, the statement of the theorem pertains to the approximation of $\Cx$ by the covariance $\Cxk$ of the component process $\xk$ on the subgraph $\G_k$. The theorem implies that as the ratio $\delta/\mu$ decreases, which is a measure of how well the supports of the memberships $\gk$'s are restricted to the subgraphs $\G_k$'s, the covariance of $\x$ can be more accurately approximated by the covariances of $\xk$'s. 

This result brings about the possibility of identifying suitable subgraphs $\G_k$'s such that the LSGP $\x$ can be approximated with the WSS process $\xk$ on each subgraph $\G_k$. However, in order to achieve this, the processes $\{\xk\}$ must be weakly correlated with each other as well; i.e.,  $\Cx$ must have negligible entries over its off-diagonal blocks corresponding to $\Ckm$ for $k\neq m$. In order to establish this condition, in addition to Assumption \ref{ass:vertex_separation}, the kernels $\{\hk\}$ must also be sufficiently different from each other. This is characterized via their spectral separation in the following assumption:

 \begin{assumption}
  \label{ass:frequency_separation}
 For all $k, m \in \{1,2, \dots, K\}$ with $k \neq m$, the spectral supports of the kernels $\h_k, \h_m$ are separated from each other such that $\sum_{i = 1}^N \left | \h_k (i) \, \h_m(i)\right| \leq \epsh$.
 \end{assumption}
The parameter $\epsh \geq 0$ in Assumption \ref{ass:frequency_separation} is thus a spectral separation parameter such that small values of $\epsh$ ensure the incoherence of the kernels $\hk$. Assumptions \ref{ass:vertex_separation} and \ref{ass:frequency_separation} then guarantee an upper bound on the process cross-covariance across different subgraphs, which is stated in the next result.

 \begin{theorem}\label{cor:average_cov_bound}
Let Assumptions \ref{ass:vertex_separation} and \ref{ass:frequency_separation} hold. Then, the average squared cross-covariance of $\x$ across different subgraphs $\G_k$, $\G_m$ is upper bounded as 
 \begin{equation}\label{eq:cor_avg}
    \begin{split}
      & \frac{1}{\mub ^4 N^2}  \sum_{k=1}^K    \sum_{\substack{m=1\\ m \neq k}}^K \sum_{(i,j) \in \mathcal{V}_k \times \mathcal{V}_m} \left| \Cx(i,j) \right|^2 \leq  \frac{2}{N}K(K-1)\epsh^2 \\
      & + \frac{2}{N^2} \crd{\bigcup_{k=1}^K    \bigcup_{\substack{m=1\\ m \neq k}}^K    \mathcal{V}_k \times \mathcal{V}_m} \left(2 (K-1) \frac{\delta}{\mu} + (K-1)^2 \left(\frac{\delta}{\mu}\right)^2 \right)^2. \\
    \end{split}
\end{equation}
 \end{theorem}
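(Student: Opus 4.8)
The plan is to decompose each entry $\Cx(i,j)$ with $(i,j) \in \mathcal{V}_k \times \mathcal{V}_m$, $k \neq m$, into a "diagonal" contribution coming from the cross-covariance $\Ckm$ of the relevant component processes and an "error" contribution controlled by Theorem~\ref{thm_dev_Cx_Ckm}. Concretely, writing $\x = \sum_k \Gk \xk$, the block $\Smat_k \Cx \Smat_m^T$ equals $\sum_{p,q} (\Smat_k \Gmat_p)\, \Smat_k \covMat_{\x_p \x_q} \Smat_m^T (\Smat_m \Gmat_q)$, but Theorem~\ref{thm_dev_Cx_Ckm} tells us that the normalized version $\Smat_k \invGk \Cx (\invGm)^T \Smat_m^T$ is within an entrywise distance of $\eta := 2(K-1)\delta/\mu + (K-1)^2(\delta/\mu)^2$ (times the all-ones matrix) of $\Smat_k \Ckm \Smat_m^T$. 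So the first step is to use $\mu \Smat_k \preccurlyeq \Smat_k \Gmat_k \preccurlyeq \mub \Smat_k$ to pass between $\Cx$ and its normalized form: on the block $\mathcal{V}_k \times \mathcal{V}_m$ we have $|\Cx(i,j)| \le \mub^2 \big( |(\Smat_k \invGk \Cx (\invGm)^T \Smat_m^T)(i,j)| \big)$ up to the membership scaling, hence $|\Cx(i,j)| \le \mub^2 \big( |(\Smat_k \Ckm \Smat_m^T)(i,j)| + \eta \big)$. Then $|\Cx(i,j)|^2 \le 2\mub^4 \big( |(\Smat_k \Ckm \Smat_m^T)(i,j)|^2 + \eta^2 \big)$ by the elementary inequality $(a+b)^2 \le 2a^2 + 2b^2$.

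The second step is to bound $\sum_{k \neq m}\sum_{(i,j) \in \mathcal{V}_k \times \mathcal{V}_m} |(\Smat_k \Ckm \Smat_m^T)(i,j)|^2 = \sum_{k\neq m} \|\Smat_k \Ckm \Smat_m^T\|_F^2 \le \sum_{k \neq m} \|\Ckm\|_F^2$. Here I would use the fact that $\Ckm = E[\xk \xm^T] = \UG h_k(\LamG) \UG^T \UG h_m(\LamG) \UG^T = \UG h_k(\LamG) h_m(\LamG) \UG^T$ (since $\UG$ is orthogonal and $\w$ is unit-variance white), so $\|\Ckm\|_F^2 = \|h_k(\LamG) h_m(\LamG)\|_F^2 = \sum_{i=1}^N \hk(i)^2 \hm(i)^2$. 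By Assumption~\ref{ass:frequency_separation}, $\sum_i |\hk(i)\hm(i)| \le \epsh$, and since each term is nonnegative, $\sum_i \hk(i)^2 \hm(i)^2 \le \big(\sum_i |\hk(i)\hm(i)|\big)^2 \le \epsh^2$. Summing over the $K(K-1)$ ordered pairs $(k,m)$ with $k \neq m$ gives $\sum_{k \neq m}\|\Ckm\|_F^2 \le K(K-1)\epsh^2$. The third step is to bound the error contribution: $\sum_{k\neq m}\sum_{(i,j) \in \mathcal{V}_k \times \mathcal{V}_m} \eta^2 = \eta^2 \big| \bigcup_{k \neq m} \mathcal{V}_k \times \mathcal{V}_m \big|$, which is exactly the cardinality term appearing in the claimed bound. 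Assembling: the left-hand side of \eqref{eq:cor_avg} is $\frac{1}{\mub^4 N^2}\sum_{k\neq m}\sum_{(i,j)}|\Cx(i,j)|^2 \le \frac{2}{N^2}\sum_{k\neq m}\|\Ckm\|_F^2 + \frac{2}{N^2}\eta^2 |\bigcup_{k\neq m}\mathcal{V}_k \times \mathcal{V}_m| \le \frac{2}{N^2}K(K-1)\epsh^2 + \frac{2}{N^2}\eta^2 |\bigcup \cdots|$; comparing with \eqref{eq:cor_avg}, the $\epsh^2$ term there has a $\frac{2}{N}K(K-1)$ coefficient rather than $\frac{2}{N^2}$, so I would need to be careful about the exact normalization — the $\frac{1}{N}$ versus $\frac{1}{N^2}$ discrepancy suggests either a different averaging convention in the source or an additional bound $\|\Ckm\|_F^2 \le N$ times something; I would reconcile this by tracking whether a trace/operator-norm bound like $\|\Ckm\|_F^2 \le N \|\Ckm\|_2^2 \le N \epsh^2$ or $\sum_i \hk(i)^2\hm(i)^2 \le \max_i |\hk(i)\hm(i)| \cdot \sum_i |\hk(i)\hm(i)| \le \epsh^2$ is being invoked and at which stage the extra factor of $N$ enters.

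The main obstacle I anticipate is precisely this bookkeeping of the normalization constants and making the membership-scaling step airtight: Theorem~\ref{thm_dev_Cx_Ckm} is stated for $\Smat_k \invGk \Cx (\invGm)^T \Smat_m^T$, and moving back to $\Smat_k \Cx \Smat_m^T$ requires multiplying by $\Smat_k \Gmat_k$ on the left and $(\Smat_m \Gmat_m)^T$ on the right, whose entries are bounded in magnitude by $\mub$ by Assumption~\ref{ass:vertex_separation}; one must check that $\invGk$ composed with $\Smat_k$ behaves well (i.e., that $\Gmat_k$ is invertible on the relevant support, guaranteed by $\mu > 0$) and that the pseudo-inverse $\invGk$ restricted via $\Smat_k$ acts as an honest diagonal scaling. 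A secondary subtlety is that $\xk$, $\xm$ are defined on the whole graph $\G$ (not the subgraphs), so $\Ckm$ is $N \times N$ and we are only summing its $\mathcal{V}_k \times \mathcal{V}_m$ block; the bound $\|\Smat_k \Ckm \Smat_m^T\|_F^2 \le \|\Ckm\|_F^2$ is the clean way to avoid dealing with subgraph-restricted kernels. Everything else is the elementary inequality $(a+b)^2 \le 2a^2 + 2b^2$ applied entrywise and then summed, together with the two assumptions, so the proof is essentially a careful assembly once the scaling step is pinned down.
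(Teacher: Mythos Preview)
Your approach is essentially the same as the paper's: normalize by the memberships using Assumption~\ref{ass:vertex_separation}, apply the entrywise inequality $(a+b)^2\le 2a^2+2b^2$ to split into a $\Ckm$ term and an error term, bound the error term via Theorem~\ref{thm_dev_Cx_Ckm}, and bound the $\Ckm$ term via Assumption~\ref{ass:frequency_separation}. The only real difference is how you control $\sum_{(i,j)\in\mathcal{V}_k\times\mathcal{V}_m}|\Ckm(i,j)|^2$.

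The paper applies Cauchy--Schwarz entrywise,
\[
\Big|\sum_{l}\hk(l)\hm(l)\uv_l(i)\uv_l(j)\Big|^2
\le \Big(\sum_l \hk(l)^2\hm(l)^2\Big)\Big(\sum_n \uv_n(i)^2\uv_n(j)^2\Big),
\]
then sums over $(i,j)$ and uses $\sum_n\big(\sum_{i\in\mathcal{V}_k}\uv_n(i)^2\big)\big(\sum_{j\in\mathcal{V}_m}\uv_n(j)^2\big)\le N$, which is where the extra factor of $N$ enters and produces the $\frac{2}{N}K(K-1)\epsh^2$ term in \eqref{eq:cor_avg}. You instead use orthogonal invariance of the Frobenius norm directly: $\|\Smat_k\Ckm\Smat_m^T\|_F^2\le\|\Ckm\|_F^2=\|h_k(\LamG)h_m(\LamG)\|_F^2=\sum_i\hk(i)^2\hm(i)^2\le\epsh^2$. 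This is cleaner and strictly sharper; it yields $\frac{2}{N^2}K(K-1)\epsh^2$ rather than $\frac{2}{N}K(K-1)\epsh^2$.

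So the discrepancy you flag is not a gap in your argument. Your bound is simply tighter than the one stated in the theorem, and the theorem as written follows from yours since $\frac{2}{N^2}\le\frac{2}{N}$. There is no need to hunt for a missing factor of $N$; your reconciliation attempts (``$\|\Ckm\|_F^2\le N\|\Ckm\|_2^2$'' etc.) would only weaken a bound that is already sufficient. The membership-scaling step you outline is exactly right: on $\mathcal{V}_k$ the diagonal entries of $\Gmat_k$ lie in $[\mu,\mub]$, so $\Smat_k\invGk$ acts as a diagonal scaling bounded in $[\mub^{-1},\mu^{-1}]$, and multiplying back by $\Smat_k\Gmat_k$ and $(\Smat_m\Gmat_m)^T$ contributes at most $\mub^2$ per entry, as you wrote.
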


Theorem \ref{cor:average_cov_bound} is proved in Appendix G. In the theorem, the first term in the right hand side decreases with $\epsh$; hence, when the kernels $\hk$ have lesser frequency content in common, the cross-correlation between different subgraphs $\G_k$, $\G_m$  weakens.  Then, the second term reflects the effect of the localization of the membership functions on the process cross-covariance. As each membership function $\gk$ attains better localization on $\G_k$, the ratio $\delta/\mu$ decreases due to Assumption \ref{ass:vertex_separation}, reducing the cross-covariance across different subgraphs. We notice from the second term that the cross-covariance magnitudes increase at a rate of $O(K^2 \, (\delta/\mu)^2)$ with $K$. This practically suggests that the number $K$ of subgraphs must be at most of $O\big( (\delta/\mu)^{-1}\big)$, so that the process models on different subgraphs remain distinguishable. In addition to the  cross-covariance upper bound presented in Theorem \ref{cor:average_cov_bound}, in Appendix H we also show that a lower bound can be derived on the within-subgraph covariance values under the assumption that the process varies sufficiently slowly on each subgraph.

 \subsection{Proposed Algorithm for Locally Approximating LSGPs}
 \label{sec:subgraph_algorithm}

The analysis in Sec. \ref{sec:cov_bounds} shows that, under certain assumptions, the second-order statistics of an LSGP can be locally approximated by that of a WSS graph process. In this section, inspired by these results, we propose an algorithm for partitioning a given graph $\G$ such that an LSGP $\x$ defined on $\G$ can be approximated by an individual WSS process $\xk$ on each subgraph $\G_k$. We have seen in Sec. \ref{sec:cov_bounds}  that if the LSGP model $\x$ admits a local approximation, the cross-covariance of $\x$ across different subgraphs must be relatively weak, while ensuring a lower bound on the covariance of the process on each individual subgraph. Assuming that an initial estimate $\hatCx$ of the covariance matrix is available, we thus propose to inspect $\hatCx$ along with the graph topology $\G$ in order to identify a set of subgraphs $\{\G_k\}_{k=1}^K$, such that the weak entries in $\hatCx$ are associated with between-subgraph cross-covariance values, and the strong entries in $\hatCx$ correspond to within-subgraph covariance values.

In order to determine the subgraphs $\{\G_k\}$, we first use the covariance estimate $\hatCx$ for defining a distance function $\Cdistx:  \E \rightarrow \R^+$ on the edges $\E$ of the graph $\G$. The distance function $\Cdistx(i,j)=d(\hatCx(i,j))$ is computed through a continuous and even kernel $d: \R \rightarrow \R^+$  that is strictly decreasing on $\R^+ \cup \{0\}$. A suitable choice for $d(x)$ is the Gaussian function $\exp\left(-x^2/\theta\right)$, where the parameter $\theta$ adjusts the mapping between the covariance values and the distances. Once the distance function  $\Cdistx(i,j)$ is obtained, we employ a graph partitioning algorithm $\gPar(\G, \Cdistx, K)$ that partitions the graph $\G$ into $K$ disjoint subgraphs $\{ \G_k \}_{k=1}^K$ by cutting the edges $(i,j)$ associated with high distance values $\Cdistx(i,j)$ and retaining those with low distances. Many alternatives exist in the literature for the choice of the partitioning algorithm $\gPar$; an example method can be found in the study \cite{Chien2019}, where the edges in $\G$ are progressively removed in a geometry-dependent manner based on the Ricci curvature induced by the distance $\Cdistx$. The resulting graph partitioning procedure is shown as \texttt{partitionGraph} in Algorithm \ref{alg:subgraph_clustering}. Once the subgraphs $\{ \G_k \}_{k=1}^K$ are determined, we construct their selection matrices $\{\Smat_k\}_{k = 1}^K$, and restrict the covariance estimate $\hatCx$ to each subgraph as $ \Smat_k \hatCx \Smat_k^T$. One can then compute a WSS graph process on each subgraph $\G_k$ through the method described in Algorithm \ref{alg:lsp_learning} by setting $K=1$ in the LSGP model. The overall procedure is outlined in Algorithm \ref{alg:subgraph_clustering}.

\begin{algorithm}
  {\small
    \caption{\small Local Approximation of LSGPs}\label{alg:subgraph_clustering}
    \hspace*{\algorithmicindent}
    \textbf{Input:} 
    Graph $\G = (\mathcal{V}, \E, \W)$, 
    initial covariance estimate $\hatCx$,
    number of subgraphs $K$,
    distance function $d $,
    graph partitioning method $\gPar$ \\
    \hspace*{\algorithmicindent} \textbf{Output:} Subgraphs $\{\G_k\}$, an individual process model $(\gk, \hk)$ on each subgraph $\G_k$.   
        \begin{algorithmic}[1]
    \STATE $\{\G_k\}_{k=1}^K \gets  \texttt{partitionGraph}\left(\G, \hatCx, \gPar, d\right)$
    \STATE \textbf{for} $k = 1, 2, \dots K$ 
    \STATE \hspace{0.5cm}Construct selection matrix $\Smat_k$ for subgraph $\G_k$
    \STATE \hspace{0.5cm}$ (\g,\h, \sim, \sim) \ \gets \ \text{\texttt{learnLSGP}}(\G_k, \Smat_k \hatCx \Smat_k^T, 1)$
    \STATE \hspace{0.5cm} $\gk=\g$;   $\ \hk=\h$
    \STATE \textbf{end}
    \end{algorithmic}}
        \vspace{0.15cm}  
        \small
    \textbf{procedure} \texttt{partitionGraph}$(\G, \hatCx, \gPar, d)$ \\
    \hspace*{0.5cm}$\Cdistx: \E \rightarrow  \R^+$  \\
    \hspace*{0.5cm}\textbf{for} $(i,j) \in \E$ \\
    \hspace*{1cm}$\Cdistx  (i,j) = d\left( \hatCx (i,j)\right)$ \\
    \hspace*{0.5cm}\textbf{end} \\ 
    \hspace*{0.5cm}$\{\G_k\}_{k=1}^K \gets \gPar(\G, \Cdistx, K)$ \\
    \hspace*{0.5cm}\textbf{return} $\{\G_k \}_{k=1}^K$
    \vspace{-0.2cm} \\
\end{algorithm}

\section{Experimental Results}
\label{sec_experiments}
In this section, we evaluate the performance of the presented algorithms on synthetic and real datasets.

\subsection{Performance Analysis of the Proposed Methods}

\subsubsection{Sensitivity of Algorithm \ref{alg:lsp_learning} to noise level}
\label{sec:perf_noise}

In order to study the performance of the proposed LSGP algorithm under noise, we construct a synthetic $5$-NN graph with $N=36$ nodes from 2D points with random locations, where the edge weights are determined with a Gaussian kernel. We then generate realizations of an LSGP $\x$ with parameters $K = 3, Q = 4$ on this graph according to the model $\eqref{eq_locstat_model}$. The realizations are corrupted with additive white Gaussian noise at different signal-to-noise ratio (SNR) levels.

We study two problems. In the first problem, we initially compute a sample covariance (SC) estimate $\hatCx$ of the process from all available realizations. We then learn a model by giving $\hatCx$ as input to the proposed  Algorithm \ref{alg:lsp_learning}. We finally obtain an estimate $\estCx= \Hmat \Hmat^T$ of the covariance according to the learnt model, which is denoted as (SC+LSGP) in the results. We determine the covariance discrepancy (CD) between the learnt process and the true process as CD $= \left|\left| \Cx - \estCx \right|\right|_F / \|  \Cx \|_F$, 
where $\Cx $ denotes the true covariance matrix of the process. The CD of the sample covariance $\hatCx$ is computed similarly. The CD values of the methods (SC), (SC+LSGP) are plotted in Fig.~\ref{fig_cd_variation}.

In the second problem, we address a scenario where 10000 realizations $\x^l$ of the process are given, with half of the values in the realizations being missing.  We first obtain the initial covariance estimate with two alternative approaches: the sample covariance estimate (SC), and the sparse correction of the sample covariance with the shrinkage estimator in \cite{Lounici2014} ($\ell$1). Then these two covariance estimates are provided as input to Algorithm \ref{alg:lsp_learning}, whose results are respectively denoted as (SC+LSGP) and ($\ell$1+LSGP). Once the covariance matrix $\estCx $ of the proposed LSGP method is found,  the LMMSE estimates of the missing values in each realization $\x^l$ are obtained as 
\begin{equation}
\label{eq_lmmse_est_z}
	\hatz^l = (\covMat_{\z \y}^*)^l \, ((\covMat_{\y}^*)^l)^{-1} \ \y^l 
\end{equation}
where the vectors $\y^l$ and $\z^l$ respectively contain the available and the initially missing entries of each realization $\x^l$, and $\hatz^l$ is the estimate of $\z^l$. The matrix $(\covMat_{\z \y}^*)^l$ denotes the estimated cross-covariance of $\z^l$ and  $\y^l$, and $(\covMat_{\y}^*)^l$ is the estimated covariance of $\y^l$, which can be obtained by extracting the corresponding entries of $\estCx$ for each realization $\x^l$. Defining a concatenated vector $\z$ that consists of the missing values $\z^l$  in all realizations and its estimate $\hatz$, we evaluate the estimation error with respect to the normalized mean error NME $=||\z - \hatz||_2 / ||\z||_2$, the mean absolute error MAE $= ||\z - \hatz||_1/ L_\z$, and the mean absolute percentage error MAPE $=1 / L_\z \sum_{i = 1}^{L_\z}  | \z(i) - \hatz(i) | / | \z(i) | $ metrics, where $L_\z$ denotes the length of $\z$. The errors of the other estimates (SC), ($\ell$1) are computed similarly. The estimation errors of all methods are plotted in Fig.~\ref{fig_noise_nme_variation}.

In Fig.~\ref{fig_cd_variation}, the CD of the sample covariance estimate $\hatCx$ remains above that of the LSGP algorithm output $\estCx$, with the gap reaching around $0.05$ at $-3$ dB and around $0.04$ at infinite SNR. Despite the seemingly minor difference in the CD values of $\hatCx$ and $\estCx$ in Fig.~\ref{fig_cd_variation}, the proposed method (LSGP) significantly improves the estimation performance of the sample covariance (SC) and the shrinkage estimates ($\ell$1) in the interpolation problem in Fig.~\ref{fig_noise_nme_variation}. In particular, in Fig.~\ref{fig_noise_nme_variation} the sample covariance estimate yields quite high errors at high SNR values. This is because the diagonal loading effect of the noise covariance is lost at high SNR values, which results in $\hatCx$ estimates with negative eigenvalues in this scenario with missing observations. This undesired artifact is efficiently corrected by the proposed algorithm, providing a substantial improvement over the initial estimates of the process statistics.

\begin{figure}[t]
  \begin{center}
    \hspace{-0.7cm}
       \subfloat[]
       {\label{fig_cd_variation}\includegraphics[height=3.8cm]{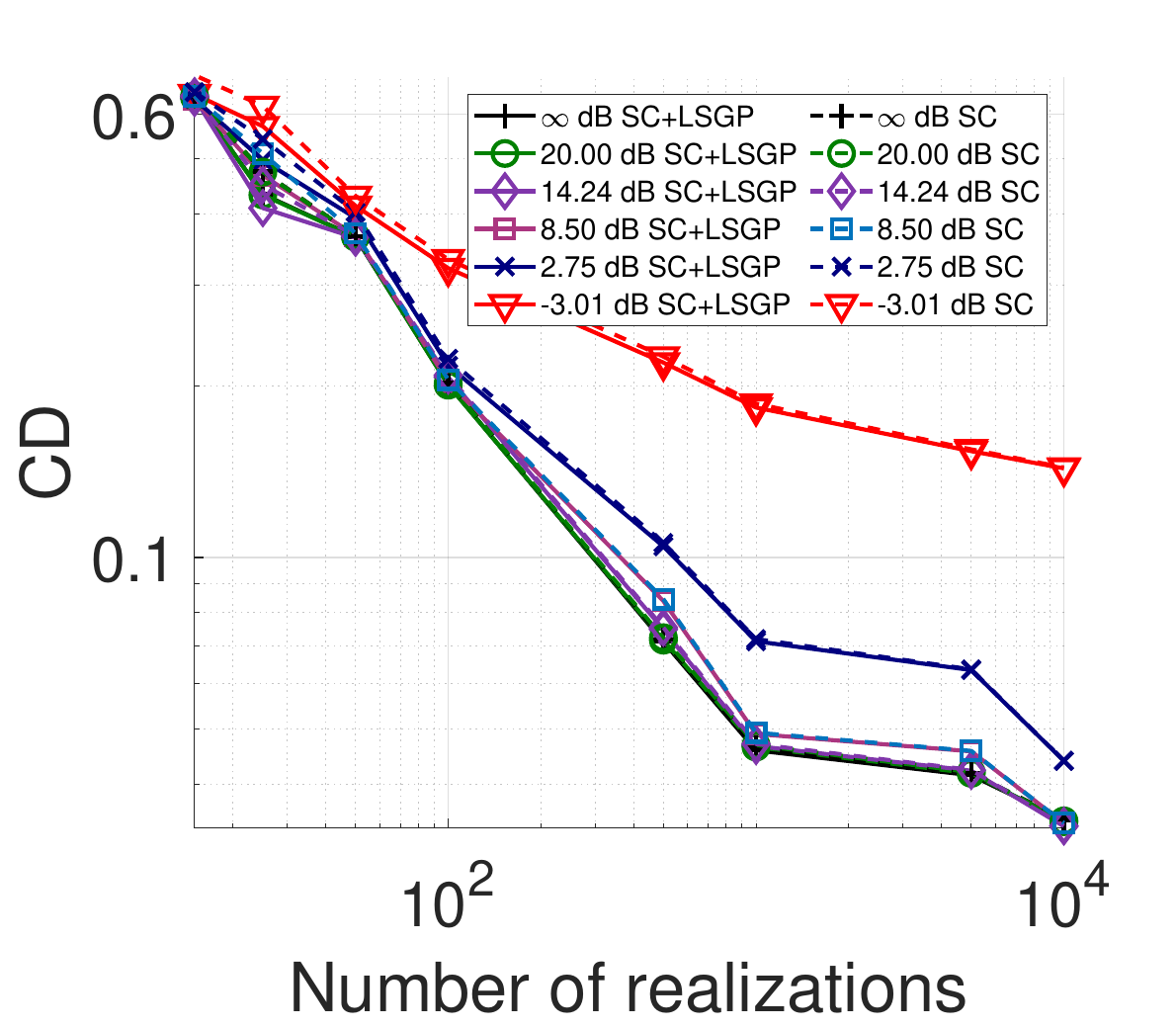}}
       \hspace{0.1cm}
      \subfloat[]
      {\label{fig_noise_nme_variation}\includegraphics[height=3.8cm]{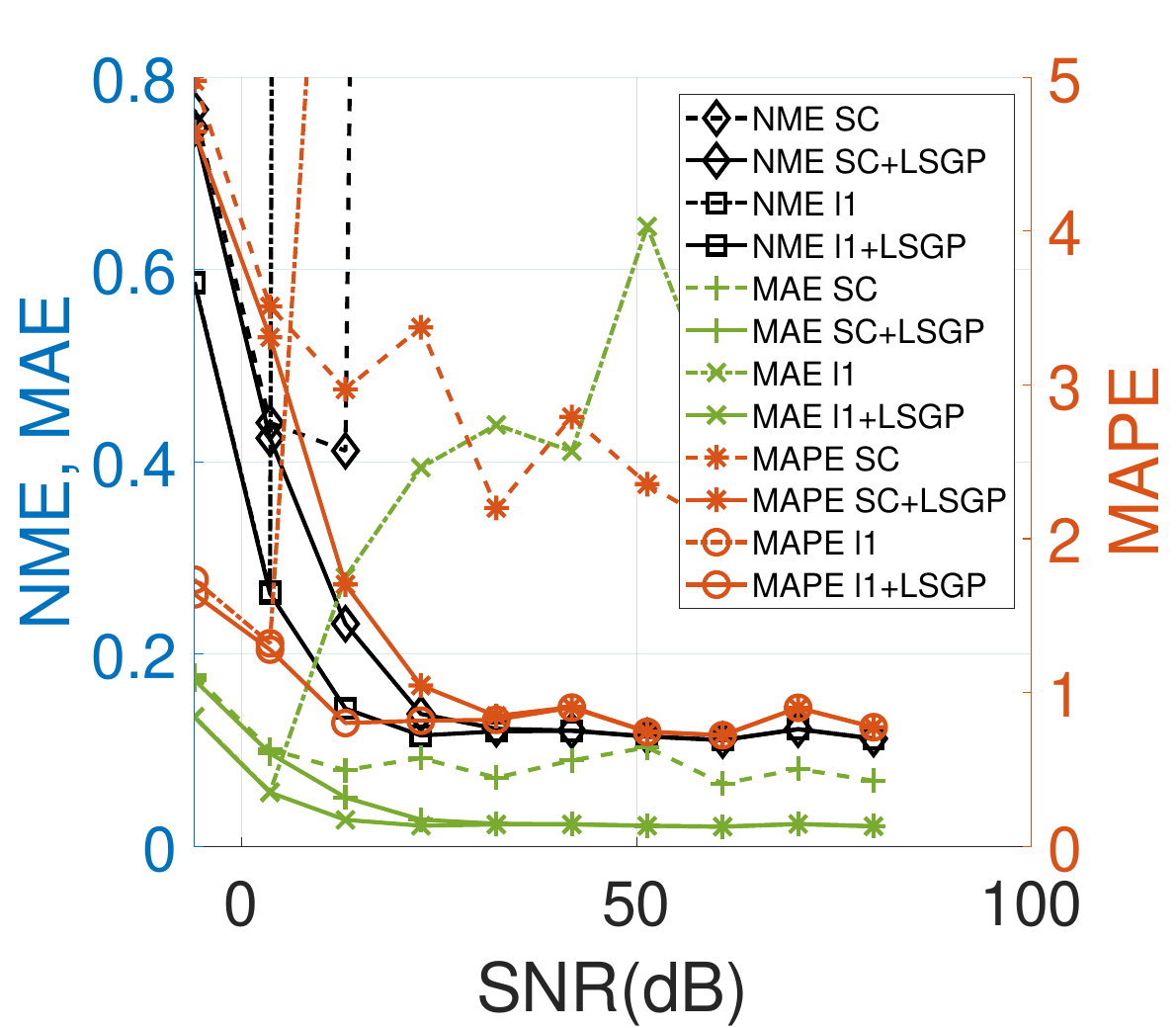}}\\
   \end{center}
    \caption{(a) Variation of the covariance discrepancy with respect to the number of realizations. (b) Variation of the estimation errors with the SNR.}

   \label{fig_noise_errors}
  \end{figure}

\subsubsection{Additional analyses}
In Appendix I, we provide additional performance analysis results, where we first study the effect of the model parameters $K$ and $Q$, and the regularization parameters $\mu_1, \mu_2, \mu_3$  on the performance of Algorithm \ref{alg:lsp_learning}. The results show that as the model complexity increases, the number of realizations required to attain a certain performance level also increases as expected, while the performance of Algorithm \ref{alg:lsp_learning} is rather stable with respect to the variations in the regularization parameters in a relatively wide region. We then evaluate the theoretical findings of Section \ref{sec:wss_lsp} by conducting a performance analysis of Algorithm \ref{alg:subgraph_clustering}.

\subsection{Comparative Experiments on Real Data Sets}
\label{sec:comparative_experiments}

In this section, we evaluate our signal estimation performance on the following real data sets:

\textit{COVID-19 pandemic data set.} The COVID-19 data set consists of the number of daily new COVID-19 cases in $N = 37$  European countries of highest populations between February 15, 2020 and July 5, 2021 \cite{COVID-cases}.  A $4$-NN graph is constructed by considering each country as a graph node. Edge weights are determined with a Gaussian kernel based on a hybrid distance measure that combines geographical distances and numbers of flights accessed via \cite{COVID-flights}. Normalized by country populations and smoothed out with a moving average filter over one week, the numbers of daily new cases are taken as graph signals. 

 \textit{Mol\`ene weather data set.} This data set consists of hourly temperature measurements taken in $N = 37$ measurement stations in the Brittany region of France in January 2014  \cite{Girault2015}. The graph is constructed with a $5$-NN topology by considering each station as a graph node, with Gaussian edge weights based on the geographical distance between the stations. Experiments are done on $744$ graph signals.

\textit{NOAA weather data set.} The NOAA data set contains hourly temperature measurements for one year taken in weather stations across the United States averaged over the years 1981-2010 \cite{NOAA-dataset}. We construct a  $7$-NN graph from $N = 246$ weather stations  with Gaussian edge weights. The experiments are done on $8760$ graph signals.

\textit{USA COVID-19 data set.} This data set \cite{Dong2020} consists of the number of COVID-19 patients recorded in the United States. We construct a  $10$-NN graph with Gaussian edge weights from $N = 1238$ locations in the east of the United States and experiment on $1044$ graph signals.

\subsubsection{Comparative performance evaluation of the LSGP algorithm} \label{sec:comparative_exps} We first study a signal interpolation problem on the Mol\`ene and COVID-19 data sets by considering two scenarios: 

\begin{itemize}
  \item (Random data loss) Missing observations of the graph signals occur at nodes selected uniformly at random.
  \item (Structured data loss) Missing observations occur at particular regions of the graph over a local clique of neighboring graph nodes. 
\end{itemize}

When testing the proposed method (LSGP), each graph signal is treated as the realization of a locally stationary graph process, an LSGP model is learnt with Algorithm \ref{alg:lsp_learning}, and LMMSE estimates of the missing observations are computed as in \eqref{eq_lmmse_est_z}. The proposed LSGP model is compared to two other stochastic graph process models; namely, wide sense stationary graph processes (WSS) \cite{Perraudin2017} and graph ARMA processes (Graph-ARMA) \cite{Marques2017}. We also include three reference non-stochastic graph signal interpolation approaches in our comparisons, based on the total variation regularization of graph signals (TV-minimization) \cite{Jung2019}, the deep algorithm unrolling method (Nest-DAU) recently proposed in \cite{Nagahama2022}, and graph attention networks with dynamic attention coefficients (GATv2) \cite{Brody2021}.  All algorithms employing the process covariance matrix have been provided the sample covariance estimate as input. Algorithm hyperparameters  are determined with validation for all methods that require parameter tuning. 

\begin{figure}[t]
  \begin{center}
    \hspace{-0.7cm}
       \subfloat[COVID-19 (Random data loss)]
       {\label{fig_nme_covid_sp}\includegraphics[height=3.5cm]{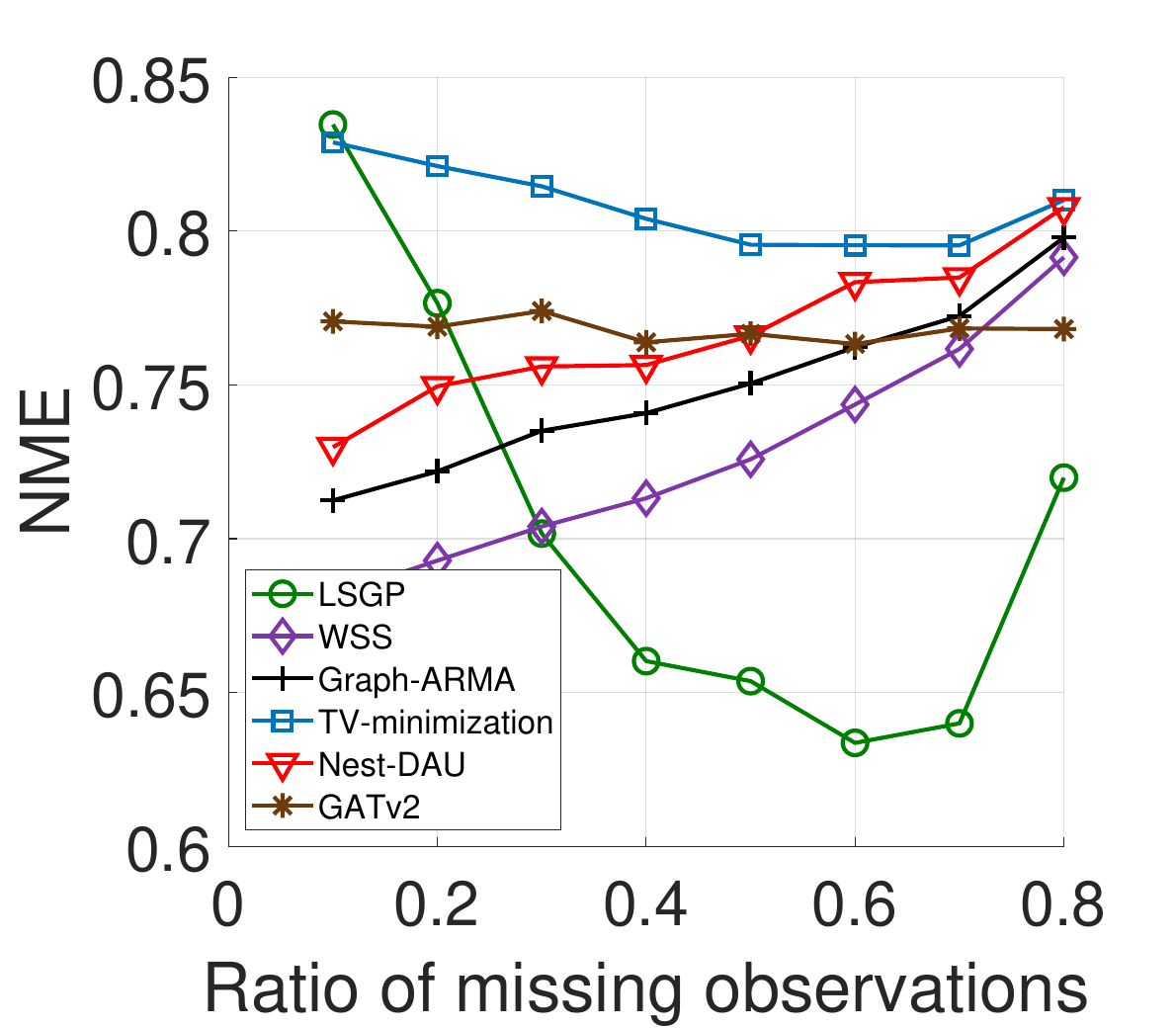}}
       \hspace{0.1cm}
      \subfloat[Mol\`ene (Random data loss)]
      {\label{fig_nme_molene_sp}\includegraphics[height=3.5cm]{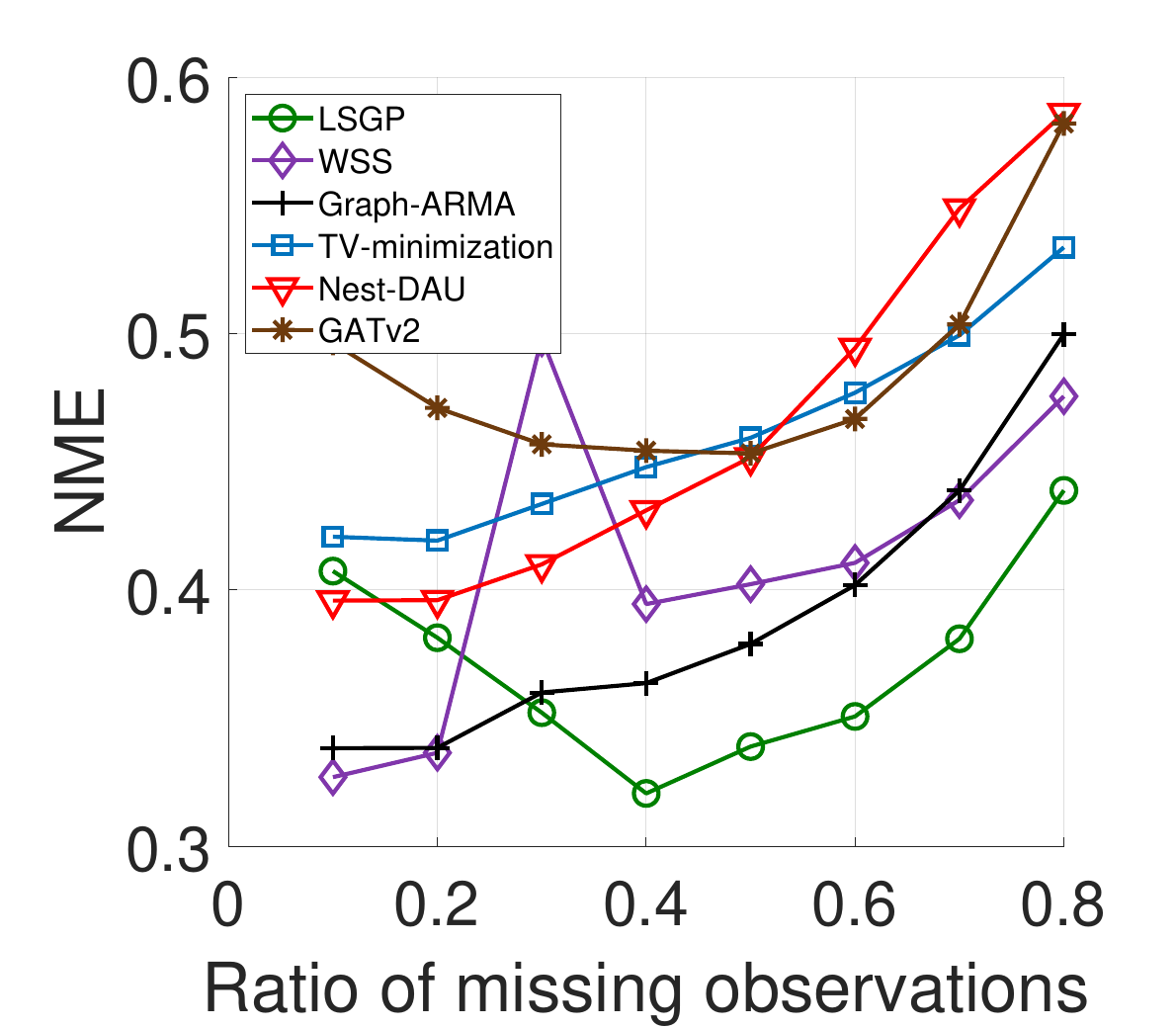}}\\
         \subfloat[COVID-19 (Structured data loss)]
         {\label{fig_nme_covid_nm}\includegraphics[height=3.5cm]{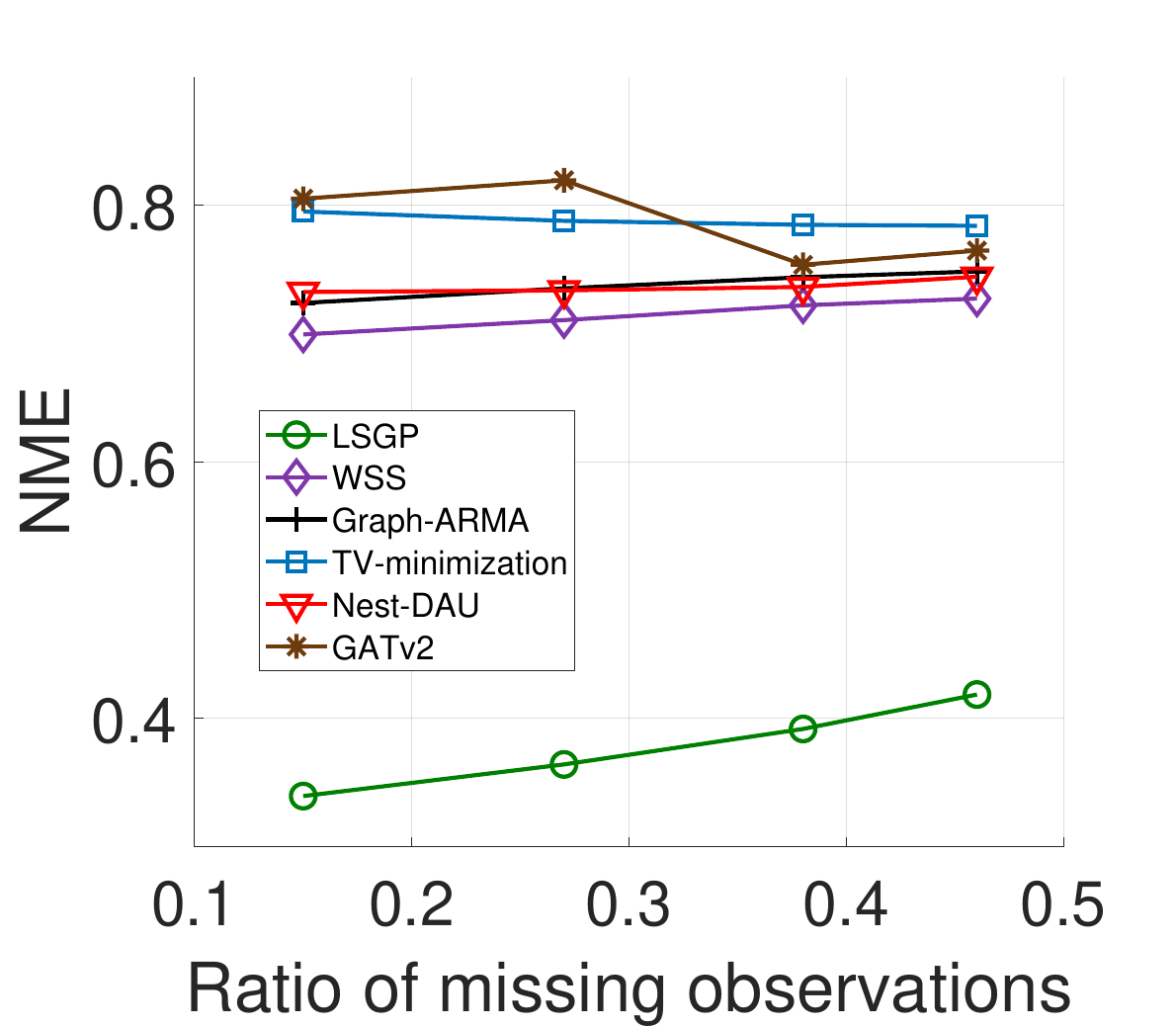}}
         \hspace{0.1cm}
        \subfloat[Mol\`ene (Structured data loss)]
        {\label{fig_nme_molene_nm}\includegraphics[height=3.5cm]{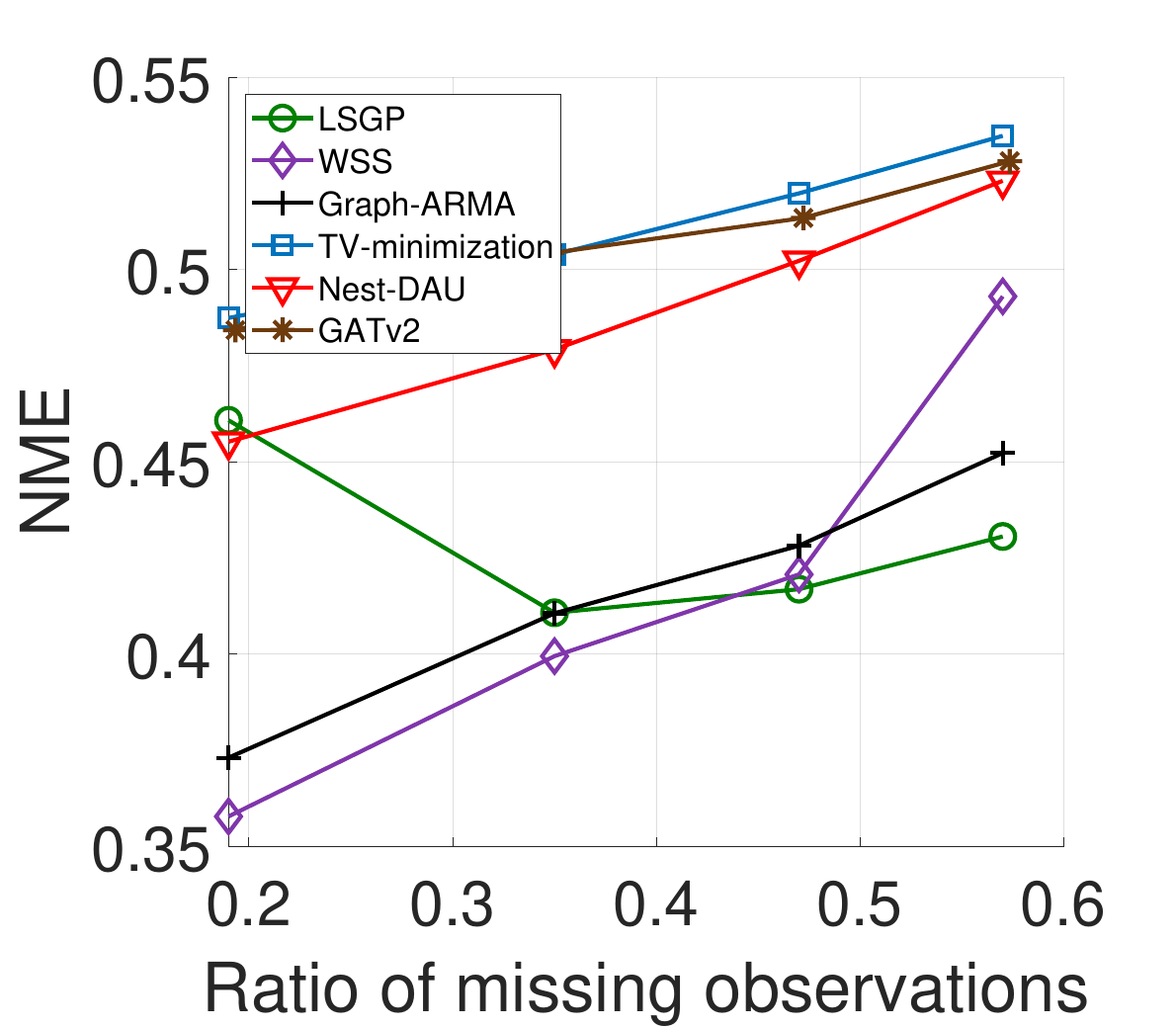}}\\
   \end{center}
    \caption{NME of compared algorithms on COVID-19 and Mol\`ene data sets}
   \label{fig_nmes}
  \end{figure}

The variation of the NME of the methods with the ratio of missing observations is shown in Figure \ref{fig_nmes} (the results with the MAE and the MAPE metrics can also be found in Appendix J). The signal estimation performance of the proposed LSGP method is seen to be competitive with the other methods, often outperforming them especially at middle-to-high missing observation ratios. In most instances, the stochastic process based methods LSGP, WSS, and Graph-ARMA provide smaller error than the non-stochastic TV-minimization, Nest-DAU and GATv2 methods. The error of the proposed LSGP algorithm often shows a non-monotonic variation with the missing observation ratio, which is a somewhat surprising finding. We interpret this in the following way: When selecting the algorithm hyperparameters $K,Q$, $\mu_1,\mu_2,\mu_3$ via validation, the ratio of missing observations must fall within a suitable interval so that the error obtained on the validation data is well representative of that obtained on the missing data. While the performance of  LSGP is similar between the COVID-19 and the Mol\`ene data sets in the random data loss scenario, its behavior is quite different among the two data sets for structured data loss. The COVID-19 data has weaker vertex stationarity than Mol\`ene \cite{Guneyi2023}, indicating that the process characteristics show higher diversity across different graph regions.  While the other algorithms learn a global model for the whole graph and therefore find it harder to compensate for the loss of information in a local region through the average signal statistics on the whole graph, the proposed LSGP algorithm can learn a model whose local statistics are successfully adapted to different neighborhoods, achieving a substantial performance improvement over the other methods.

\subsubsection{Local approximation of LSGPs}\label{sec:cluster_exps} We finally study the performance of locally approximating LSGPs with smaller processes in problems where one needs to analyze data acquired on large network topologies. We experiment on the relatively larger NOAA weather and the USA COVID-19 data sets in order to test the methods on both the whole graphs and their partitioned versions.

\begin{figure}[t]
  \begin{center}
      \subfloat[Partitioning of the NOAA  graph]
      {\label{fig_noaa_clusters}\includegraphics[height=3.5cm]{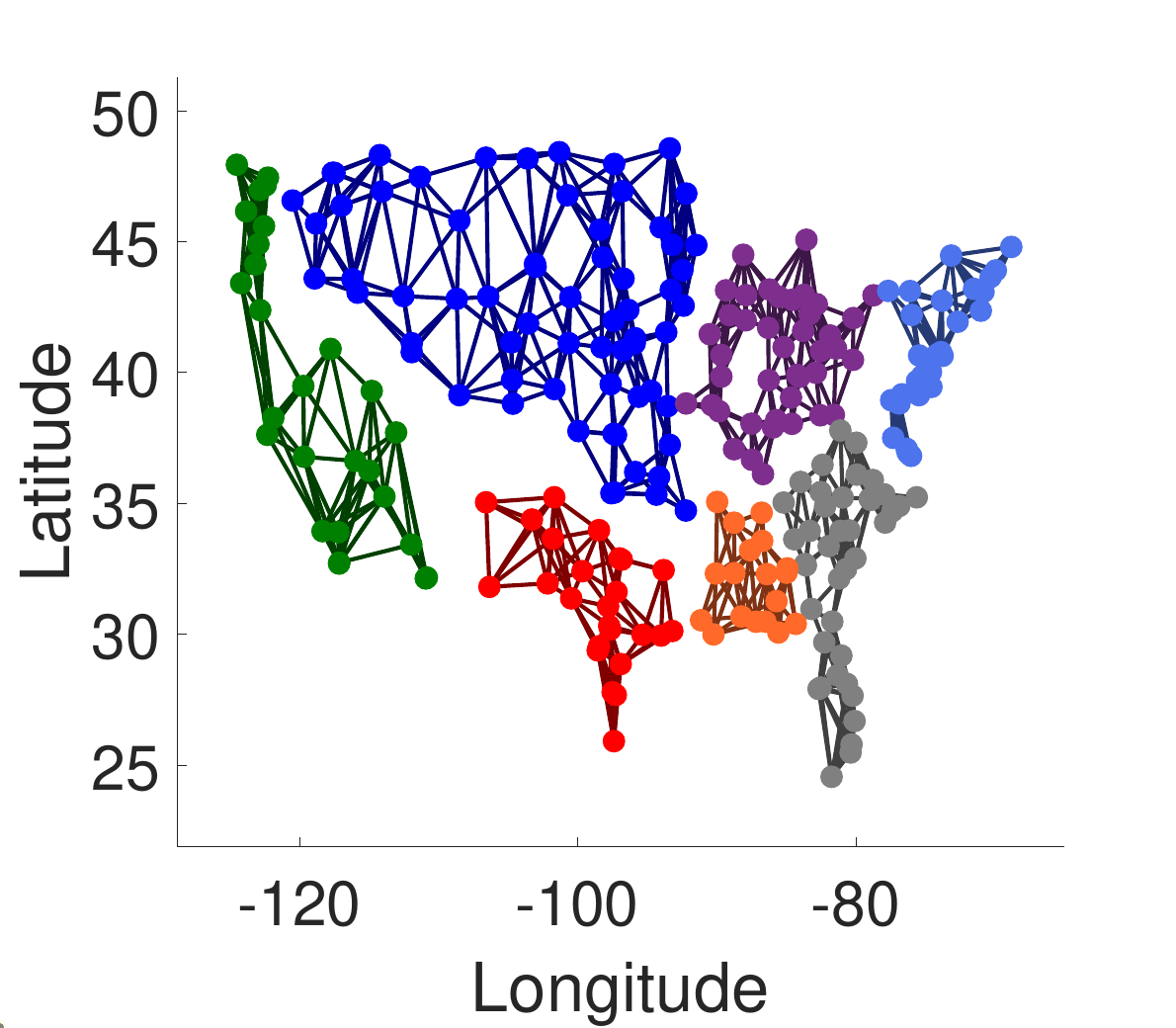}}
      \subfloat[Partitioning of USA COVID-19]
      {\label{fig_eus_clusters}\includegraphics[height=3.5cm]{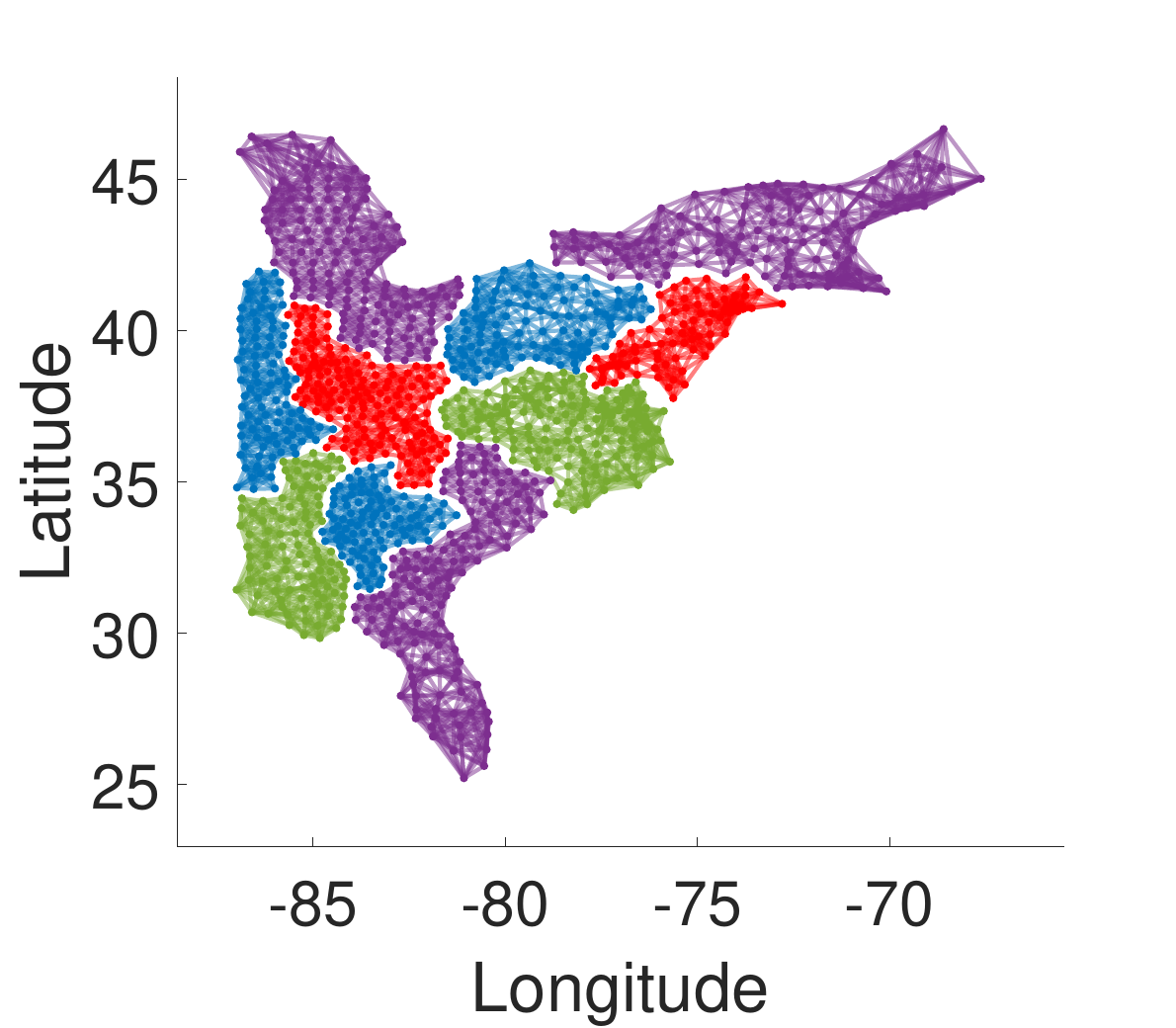}}\\
       \subfloat[NOAA data set]
      {\label{fig_nme_noaa_sp}\includegraphics[height=3.5cm]{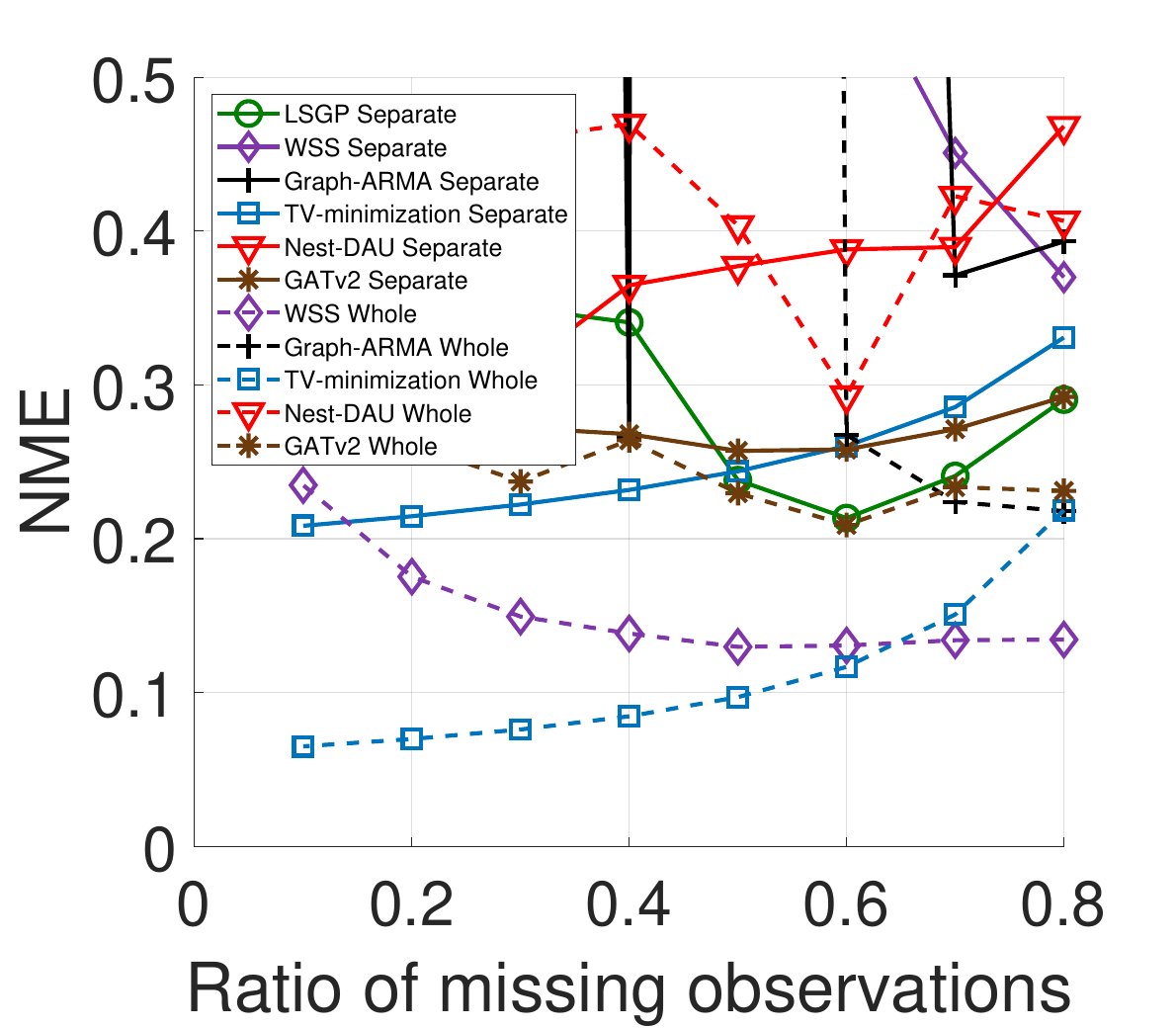}}
      \subfloat[USA COVID-19 data set]
      {\label{fig_nme_eus_sp}\includegraphics[height=3.5cm]{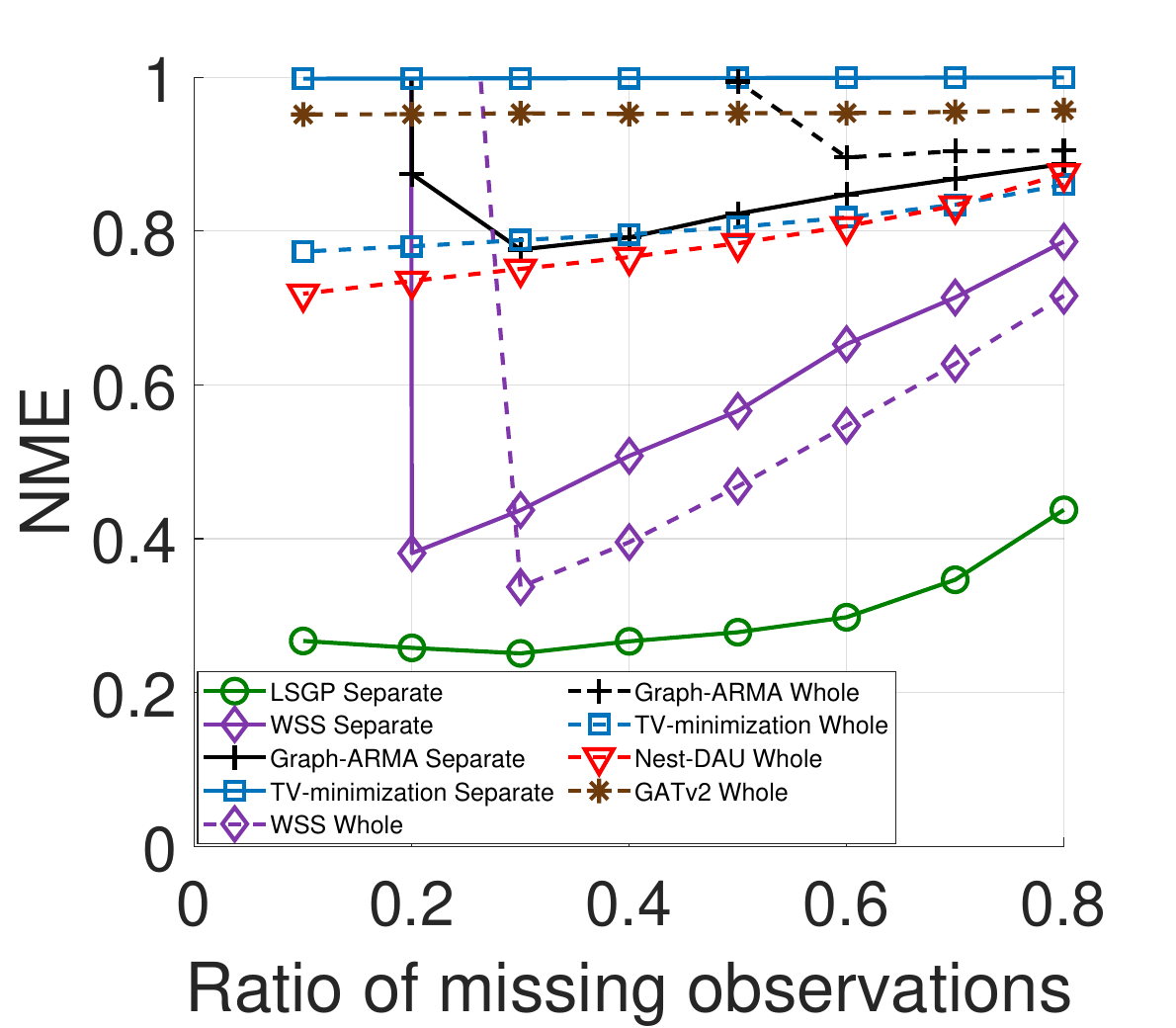}}
   \end{center}
    \caption{Results obtained on the NOAA and the USA COVID-19 data sets}
   \label{fig_noaa_partitioning}
  \end{figure}

We first partition the graphs using Algorithm \ref{alg:subgraph_clustering} where we set the number of subgraphs as $K=7$ for NOAA and $K=10$ for USA COVID-19. The partitioning results are shown in Fig.~\ref{fig_noaa_clusters}-\ref{fig_eus_clusters}. We select the missing observations uniformly at random and consider two signal estimation settings for the compared methods: In the first setting we learn distinct models on separate subgraphs (Separate), while in the second setting we learn a single model on the whole graph (Whole).  The algorithms are tested in both settings and compared with respect to their NME in Fig.~\ref{fig_nme_noaa_sp}-\ref{fig_nme_eus_sp}. The estimation of the signals on the whole graph often provides higher accuracy than on separate subgraphs, as expected\footnote{The Nest-DAU and the GATv2 methods have been excluded from the USA COVID-19 separate subgraphs setting, and the LSGP method has been excluded from the whole graph settings due to their complexities.}. However, the additional results with the MAE and MAPE metrics given in Appendix J present an interesting exception to this, where the estimation on separate subgraphs results in smaller MAE for most methods. This is because the error in separate modeling concentrates sparsely along subgraph boundaries and therefore has relatively small $\ell_1$-norm. The performance of LSGP is quite competitive with the other methods, offering a promising solution in the separate subgraphs setting. We also compare the time complexities of the methods in Appendix J by reporting their average runtimes, which show that the  runtimes of stochastic process methods differ by a factor of at least 4 between the separate subgraph and the whole graph settings. This situation illustrates a common challenge faced by many graph signal processing algorithms, whose complexities are typically around $O(N^3)$, making them impractical to use in a straightforward way as the graph size $N$ grows. The experiments in this section have aimed to provide some insight for handling the scalability issue in large graphs via the local modeling and processing of graph signals on suitably identified neighborhoods.

\section{Conclusion}
\label{sec_concl}
In this paper, we have proposed a graph signal model that
extends the classical concept of local stationarity to
irregular graph domains. In contrast to globally stationary processes,
the proposed locally stationary graph process (LSGP)
model permits the process statistics to vary locally over the
graph. After a theoretical discussion of some useful properties
of LSGPs such as vertex-frequency spectrum, we have presented an algorithm for learning LSGP
models from realizations of the process. Then,
considering potential scalability issues
regarding the computation of process models on large graphs,
we have studied the problem of locally approximating LSGPs
on smaller subgraphs. Experimental results on graph signal
interpolation applications suggest that the proposed graph
signal model provides promising performance in comparison
with reference approaches. Some possible future directions
of our study consist of the extension of the proposed
model to incorporate the temporal dimension as well, and the
investigation of alternative representations of LSGP models
towards developing the scalability of learning algorithms.

{\appendices

\section*{Appendix A. Useful Lemmas}

In this section, we present several lemmas that will be useful in the proofs of our results.

\begin{lemma}
    \label{lemma:absolute_matrix}
For any matrices $\Umat, \Amat, \Vmat $ of compatible size such that $\Umat \Amat \Vmat^T \in \R^{M\times N}$, the inequality $|\Umat \Amat \Vmat^T|\preccurlyeq|\Umat| |\Amat|  |\Vmat^T|$ is satisfied.
\end{lemma}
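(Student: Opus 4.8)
The plan is to prove the inequality entrywise by expanding the triple matrix product and applying the scalar triangle inequality. First I would fix an arbitrary entry index $(i,j)$ with $1 \le i \le M$, $1 \le j \le N$, and write, using associativity of matrix multiplication and compatibility of the dimensions,
$(\Umat \Amat \Vmat^T)(i,j) = \sum_{p} \sum_{q} \Umat(i,p)\, \Amat(p,q)\, \Vmat^T(q,j)$, where $p,q$ range over the appropriate inner dimensions.

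Next, taking the absolute value of this scalar and using $\left| \sum_k a_k \right| \le \sum_k |a_k|$ together with $|ab\,c| = |a|\,|b|\,|c|$, I obtain $\left| (\Umat \Amat \Vmat^T)(i,j) \right| \le \sum_{p} \sum_{q} |\Umat(i,p)|\, |\Amat(p,q)|\, |\Vmat^T(q,j)|$. By the definition of matrix multiplication and the entrywise absolute-value convention $|\Amat|(i,j) \triangleq |\Amat(i,j)|$ introduced in the Preliminaries, the right-hand side is precisely the $(i,j)$ entry of the product $|\Umat|\, |\Amat|\, |\Vmat^T|$. Since the bound holds for every $(i,j)$, the element-wise (curly) inequality $|\Umat \Amat \Vmat^T| \preccurlyeq |\Umat|\, |\Amat|\, |\Vmat^T|$ follows.

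There is no substantive obstacle in this argument; the only minor points requiring care are, first, that one may either treat $\Umat \Amat \Vmat^T$ directly as a single double sum as above or, equivalently, apply the two-factor version $|\Umat \Amat| \preccurlyeq |\Umat|\, |\Amat|$ twice (together with the fact that $\Bmat \preccurlyeq \Cmat$ with nonnegative factors is preserved under multiplication by a nonnegative matrix), and second, that the transpose is harmless since $|\Vmat^T| = |\Vmat|^T$. Both facts are immediate from the definitions, so the proof is essentially a one-line consequence of the triangle inequality.
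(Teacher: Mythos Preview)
Your proposal is correct and follows essentially the same approach as the paper: both expand the $(i,j)$ entry of $\Umat\Amat\Vmat^T$ as a double sum over the inner indices and apply the scalar triangle inequality termwise. The paper merely packages the same computation using row-vector notation $\uvect_i^T \Amat \vvect_j$ in place of your explicit index sums.
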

    
\begin{proof}
  
Let us denote the $n$-th rows of $\Umat$ and $\Vmat$ respectively as $\uvect_n^T$ and $\vvect_n^T$. We then have 
 \[
 \Umat \Amat \Vmat^T =\begin{bmatrix}
  \uvect_1^T \Amat \vvect_1 & \uvect_1^T \Amat \vvect_2 & \cdots & \uvect_1^T \Amat \vvect_N \\ 
  \uvect_2^T \Amat  \vvect_1 & \uvect_2^T \Amat \vvect_2 & \cdots & \vdots \\
  \vdots & \vdots & \ddots & \vdots \\ 
  \uvect_M^T \Amat \vvect_1 & \cdots & \cdots & \uvect_M^T  \Amat \vvect_N \end{bmatrix}
.  \]
For any $(i,j)$ index pair,  we obtain
\begin{equation*}
\begin{split}
& | \uvect_i^T \Amat \vvect_j | 
= \left|\sum_{k }\sum_{l } \Amat(k,l) \uvect_i^T (k) \vvect_j(l) \right| \\
 &\leq 
 \sum_{k }\sum_{l } |\Amat(k,l)| \,  |\uvect_i^T (k)| \, |\vvect_j(l)| 
 = |\uvect_i^T | \, |\Amat| \, |\vvect_j|
 .
 \end{split}
 \end{equation*}
It follows that $ |\Umat \Amat \Vmat^T | \preccurlyeq |\Umat| |\Amat| |\Vmat^T|$.
\end{proof}
    
\begin{lemma}
    \label{lemma:absolute_matrix_2}
Let $\Umat, \Amat, \Bmat, \Vmat $ be real matrices of compatible size such that $\Umat \Amat \Vmat^T \in \R^{M\times N}$ and $\Umat \Bmat \Vmat^T \in \R^{M\times N}$.    Assume that $ \Umat, \Vmat \succcurlyeq 0 $ and $ \Bmat \preccurlyeq \Amat $. Then $\Umat \Bmat \Vmat^T \preccurlyeq \Umat \Amat \Vmat^T$.
\end{lemma}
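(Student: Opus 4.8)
The plan is to reduce the matrix inequality to a scalar statement about each entry $(i,j)$ of the matrices involved, and then exploit the sign information carried by the nonnegativity of $\Umat$ and $\Vmat$. Writing $\uvect_i^T$ and $\vvect_j^T$ for the $i$-th row of $\Umat$ and the $j$-th row of $\Vmat$ respectively (as in the proof of Lemma \ref{lemma:absolute_matrix}), the $(i,j)$ entry of $\Umat \Amat \Vmat^T$ is $\uvect_i^T \Amat \vvect_j = \sum_{k,l} \Umat(i,k)\, \Amat(k,l)\, \Vmat(j,l)$, and similarly for $\Bmat$. So it suffices to show
\[
\sum_{k,l} \Umat(i,k)\, \Bmat(k,l)\, \Vmat(j,l) \;\leq\; \sum_{k,l} \Umat(i,k)\, \Amat(k,l)\, \Vmat(j,l)
\]
for every fixed $(i,j)$.

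First I would form the entrywise difference and write the claim as $\sum_{k,l} \Umat(i,k)\, \big(\Amat(k,l) - \Bmat(k,l)\big)\, \Vmat(j,l) \geq 0$. By hypothesis $\Bmat \preccurlyeq \Amat$ (entrywise), so each factor $\Amat(k,l) - \Bmat(k,l)$ is nonnegative. By the hypotheses $\Umat \succcurlyeq 0$ and $\Vmat \succcurlyeq 0$ — which in this paper's notation means entrywise nonnegativity — each factor $\Umat(i,k)$ and $\Vmat(j,l)$ is also nonnegative. Hence every summand is a product of three nonnegative reals and is therefore nonnegative, so the sum is nonnegative. This gives the entrywise inequality $\Umat \Bmat \Vmat^T \preccurlyeq \Umat \Amat \Vmat^T$, which is exactly the assertion.

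There is essentially no obstacle here; the only point requiring a moment's care is the interpretation of the symbol $\succcurlyeq 0$. Throughout the paper curly inequalities between matrices denote entrywise comparison (as fixed in the Preliminaries), so $\Umat, \Vmat \succcurlyeq 0$ is read as having all nonnegative entries rather than as positive semidefiniteness; under that reading the argument above is immediate. I would also remark that the conclusion can be obtained directly from Lemma \ref{lemma:absolute_matrix} is not needed — the sign structure alone suffices — so the proof stands on its own with just the termwise nonnegativity observation.
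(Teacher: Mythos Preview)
Your proof is correct and follows essentially the same approach as the paper's: reduce to a fixed entry $(i,j)$, expand $\uvect_i^T \Bmat \vvect_j$ and $\uvect_i^T \Amat \vvect_j$ as double sums, and use the entrywise nonnegativity of $\Umat$, $\Vmat$ together with $\Bmat \preccurlyeq \Amat$ to conclude. The only cosmetic difference is that you phrase it as nonnegativity of the difference $\sum_{k,l} \Umat(i,k)(\Amat(k,l)-\Bmat(k,l))\Vmat(j,l)$, whereas the paper compares the two sums directly.
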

    
\begin{proof}
Fixing any index pair $(i,j)$, and using the same notation for row vectors as in the proof of Lemma \ref{lemma:absolute_matrix},  we have
\begin{equation*}
\begin{split}
& \uvect_i^T \Bmat \vvect_j = \sum_k \sum_l \Bmat(k,l) \uvect_i^T(k) \vvect_j^T(l) \\
& \leq  \sum_k \sum_l  \Amat(k,l) \uvect_i^T(k) \vvect_j^T(l)
 = \uvect_i^T \Amat \vvect_j
 \end{split}
 \end{equation*} 
since $\Bmat \preccurlyeq \Amat.$ 
\end{proof}

The following lemma is due to \cite{Styan}.
\begin{lemma}
    \label{lemma:basic_lemma}
    For any $\Amat, \Bmat \in \mathbb{R}^{N\times M}$ and diagonal matrices $\Dmat, \Emat$, the following equality holds
    \begin{center}
    $\Dmat (\Amat \circ \Bmat) \Emat = \Amat \circ (\Dmat \Bmat \Emat)$.
    \end{center}
\end{lemma}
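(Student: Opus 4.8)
The plan is to prove the identity by a direct entrywise comparison, which is the natural approach since both the Hadamard product and multiplication by a diagonal matrix act coordinatewise. Note first that for the expressions to be well defined we need $\Dmat \in \R^{N \times N}$ and $\Emat \in \R^{M \times M}$, so that $\Dmat \Bmat \Emat \in \R^{N \times M}$ has the same shape as $\Amat$, and all Hadamard products are between matrices of equal size.

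First I would fix an arbitrary index pair $(i,j)$ and evaluate the left-hand side. Since $\Dmat$ is diagonal, left multiplication by $\Dmat$ scales the $i$-th row by $\Dmat(i,i)$, so $\bigl(\Dmat (\Amat \circ \Bmat)\bigr)(i,j) = \Dmat(i,i)\, \Amat(i,j)\, \Bmat(i,j)$; then, since $\Emat$ is diagonal, right multiplication by $\Emat$ scales the $j$-th column by $\Emat(j,j)$, giving
\[
\bigl(\Dmat (\Amat \circ \Bmat) \Emat\bigr)(i,j) = \Dmat(i,i)\, \Amat(i,j)\, \Bmat(i,j)\, \Emat(j,j).
\]
Next I would evaluate the right-hand side the same way: $(\Dmat \Bmat \Emat)(i,j) = \Dmat(i,i)\, \Bmat(i,j)\, \Emat(j,j)$, so that
\[
\bigl(\Amat \circ (\Dmat \Bmat \Emat)\bigr)(i,j) = \Amat(i,j)\, \Dmat(i,i)\, \Bmat(i,j)\, \Emat(j,j).
\]
Comparing the two displays and using commutativity of scalar multiplication shows the $(i,j)$ entries agree, and since $(i,j)$ was arbitrary the two matrices are equal.

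There is essentially no obstacle here; the only thing requiring a little care is bookkeeping — making explicit that $\Dmat$ and $\Emat$ have the compatible sizes $N\times N$ and $M\times M$ and that the diagonal structure is what lets row/column scaling pass through the Hadamard product. One could alternatively phrase the argument using $\Dmat \Bmat \Emat = (\dv \ev^T) \circ \Bmat$ where $\dv = \mathrm{diag}(\Dmat)$ and $\ev = \mathrm{diag}(\Emat)$, combined with associativity of the Hadamard product, but the entrywise computation above is the cleanest and most self-contained.
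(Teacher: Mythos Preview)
Your entrywise argument is correct and complete; the only point worth noting is that the paper does not actually prove this lemma but simply attributes it to Styan~\cite{Styan}, so there is no ``paper's own proof'' to compare against. Your direct computation is the standard way to establish this identity and would serve perfectly well as a self-contained proof in place of the citation.
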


\section*{Appendix B. Proof of Theorem \ref{th:vertex_freq_spec}}

\begin{proof}
Taking $\Dmat= \Gmat_k$, $\Amat=\UG$, $\Bmat = \textbf{1}_{N \times N}$ and $\Emat=h_k( \LamG)$ in Lemma \ref{lemma:basic_lemma}, we have 
  \begin{equation}
  \begin{split}
  \Hmat &= \sum_{k=1}^K \Gmat_{k} \UG h_k(\LamG ) \UG^T \\
  &= \sum_{k=1}^K \Gmat_{k} (\UG  \circ \textbf{1}_{N \times N})   h_k(\LamG ) \UG^T \\
  &=\sum_{k=1}^K \big( \UG \circ ( \Gmat_{k} \ \textbf{1}_{N \times N} \ h_k(\LamG ) )\big)\ \UG ^T \\
  &=  \left( \UG \circ \left( \sum_{k=1}^K \Gmat_{k} \ \textbf{1}_{N \times N} \ h_k(\LamG ) \right) \right) \ \UG ^T 
  \end{split}
  \end{equation}
  where the third and the fourth equalities follow respectively from Lemma \ref{lemma:basic_lemma} and the linearity of the Hadamard product. Defining 
  \begin{equation}
  \label{eq_defn_M}
  \M=\sum_{k=1}^K \Gmat_{k} \ \textbf{1}_{N \times N} \ h_k(\LamG) 
  \end{equation}
  we arrive at the equality in \eqref{eq:compact_local}. 
  
 The matrix $\M$ in \eqref{eq_defn_M} can equivalently be written as $\M= \sum_{k=1}^K  \gk \hk^T$. We then observe that each $i$-th row of $\M$ is given by $\sum_k \gk(i) \hk^T$, hence represents the overall spectrum at node $i$ resulting from the contributions of all individual spectra $\hk$ weighted by the membership $\gk(i)$ of the $i$-th node to the $k$-th model. Therefore, the matrix $\M$ provides the vertex-frequency spectrum of the locally stationary graph process $\x$.
  
  We next bound the variation of the local spectra on the graph as follows. Writing 
  \begin{equation}
  \begin{split}
  \tr(\M^T \LG \M) & = \sum_{i,j = 1}^K \tr(\h_i \g_i^T \LG \g_j {\h_j}^T) \\
  & = \sum_{i,j = 1}^K  \g_i^T \LG \g_j \tr(\h_i \h_j^T)
  \end{split}
  \end{equation}
  and noting that $\tr(\h_i \h_j^T) = \left< \h_i, \h_j \right> \triangleq \h_i^T \h_j$, we have
  \begin{equation}
  \begin{split}
  &\tr(\M^T \LG  \M)  = |\tr(\M^T \LG \M)  | = \left | \sum_{i,j = 1}^K  \g_i^T \LG  \g_j \left< \h_i, \h_j \right>  \right | \\
  & \leq \sum_{i,j = 1}^K   \left| \g_i^T \LG \g_j  \right| 
  \left| \left< \h_i, \h_j \right>   \right|
   \leq C  \sum_{i,j = 1}^K \left | \left< \h_i, \h_j \right> \right | \leq K^2 C
  \end{split}
  \end{equation}
  where the last two inequalities follow from the Cauchy-Schwarz inequality and the fact that $\LG$ is a positive semi-definite matrix.\\
\end{proof}

\section*{Appendix C. Extension and Restriction of LSGPs}

Here we present some additional results on the extension and restriction of locally stationary graph processes. Let us consider a graph $\G = (\mathcal{V}, \E, \W)$ with a given subgraph $ \G_s = (\mathcal{V}_s, \E_s, \W_s) $. The relation between the subgraph $\G_s$ and the supergraph $\G$ can be represented through an inclusion map $\iota = (\iota_v,\iota_e,\iota_w): \G_s \hookrightarrow  \G$, where $\iota_v$, $\iota_e$, and $\iota_w$ denote the inclusion maps defined over the vertices, edges, and edge weights, respectively. One can then define a binary selection matrix $\Smat_s \in \{0,1\}^{\crd{\mathcal{V}_s}\times\crd{\mathcal{V}}}$ such that $\Smat_s(v_s,v) = 1$ if and only if $\iota_v(v_s) = v$ for a given enumeration of the vertices in $\mathcal{V}_s$ and  $\mathcal{V}$.

\subsubsection{Extension of LSGPs} We consider an LSGP $\x_s$ on the subgraph $\G_s$ defined by the kernels $ \h_{s,k}$ and the membership functions $\g_{s,k}$ for $k=1, \dots, K$. Due to Theorem \ref{th:vertex_freq_spec}, the process $\x_s$ can be expressed in terms of its vertex-frequency spectrum as
\begin{equation}
    \x_s = \left(\UGs \circ \sum_{k = 1}^K  \g_{s,k} \h_{s,k}^T \right) \UGs^T \ \w_s
\end{equation}
where $\w_s \in \mathbb{R}^{\crd{\mathcal{V}_s}}$ is a unit-variance white process. In order to extend the process  $\x_s$ to the supergraph $\G$, in addition to the inclusion map $\iota $ in the vertex domain, we will also make use of an inclusion map in the frequency domain, defined as $\hat{\iota}:\{1,2,\dots,\crd{\mathcal{V}_s}\} \hookrightarrow \{1,2,\dots,\crd{\mathcal{V}}\}$. The inclusion map $\hat{\iota}$ determines how the frequencies in the spectrum of $\LGs$ should relate to those of $\LG$. We maintain a generic setting  by treating  $\hat{\iota}$ as an arbitrary injection, whose selection is in practice a matter of choice among several possible strategies. The spectral selection matrix $\hat{\Smat} \in \{0,1\}^{\crd{\mathcal{V}_s}\times\crd{\mathcal{V}}}$ associated with $\hat{\iota}$ is given by  $\hat{\Smat}(v_s,v) = 1$ if and only if $\hat \iota(v_s) = v$.
 
 We can then define the extension of the process model $\x_s$ to the supergraph $\G$ as
\begin{equation}
\label{eq:vertex_frequency_extension}
    \x = \left(\UG \circ \sum_{k = 1}^K \Smat_s^T \g_{s,k} \h_{s,k}^T \hat{\Smat}\right) \UG^T \w.
\end{equation}
The extended process model is thus obtained by extending the membership functions and the kernels to the graph $\G$, respectively as $\gk = \Smat_s^T \g_{s,k}$ and $\h_k = \hat{\Smat}^T \h_{s,k}$. Noticing that the choice of $\hat{\iota}$, and thus $\hat{\Smat}$, determines the vertex-frequency spectrum  of the extended process $\x$, the extension operation in \eqref{eq:vertex_frequency_extension} is observed to define an injective map between the set of LSGPs on the subgraph $\G_s$ and the set of LSGPs on $\G$ by retaining the model order $K$. The adoption of the normalized graph Laplacian in our process model provides a convenient basis for the extension procedure in \eqref{eq:vertex_frequency_extension} due to the boundedness of its eigenvalues.  It can be verified that the extended process $\x$ takes the value $0$ on the nodes in $\mathcal{V} \setminus \mathcal{V}_s$.

\subsubsection{Restriction of LSGPs} We next consider an LSGP $\x$ on the supergraph $\G$ given by 
\begin{equation}
    \x = \left(\UG \circ \sum_{k = 1}^K  \gk \hk^T \right) \UG^T \w.   
\end{equation}
Based on the vertex and frequency mappings $\iota$ and $\hat \iota$, we define the restriction of $\x$ to the subgraph $\G_s$ as
\begin{equation}
    \label{eq:vertex_frequency_restriction}
    \x_s = \left(\UGs \circ \sum_{k = 1}^K \Smat_s \gk \hk^T \hat{\Smat}^T\right) \UGs^T \w_s.
\end{equation}
Similarly to the extension procedure, we define the model parameters of the restricted process $\x_s$ as $\g_{s,k} = \Smat_s \g_k$ and $\h_{s,k} = \hat{\Smat} \h_k$, which identify its vertex-frequency spectrum. Hence, the restriction operation defines a surjective map between the set of LSGPs on the supergraph $\G$ and the set of LSGPs on the subgraph $\G_s$. The formulation in \eqref{eq:vertex_frequency_restriction} reveals that the model order of the restricted process is at most $K$. If the restriction leads to a loss of information, it might be possible to represent the resulting process with a smaller order. 

\begin{rmark} Following the definitions of the extension and restriction operations, a pertinent question is whether the extension of a process $\x_s$ from $\G_s$ to $\G$, followed by its restriction back to $\G_s$ preserves it. Note that for irregular graph topologies, the matrices $\UG$ and $\UGs$ typically do not contain any zero entries. In this case, the matrix  $(\Smat_s \UG \hat{\Smat}^T) \circ \UGs $ has no zero entries as well. It is then easy to show that the consequent application of the operations in \eqref{eq:vertex_frequency_extension} and  \eqref{eq:vertex_frequency_restriction} result in an identity morphism on the set of LSGPs on $\G_s$ with model order $K$, hence ensures that the process $\x_s$ is preserved. 
\end{rmark}

\section*{Appendix D. Optimization Problem in Explicit Form}

Here we derive the explicit expressions for the terms $f_1(\Gam, \Bmat)$ and $f_2(\Gam)$ appearing in our problem formulation in Section \ref{sec:learning}. Defining $\Hmat_k= \sum_{q = 0}^{Q-1} b_{q,k} \LG^q$, we have $\Hmat = \sum^{K}_{k = 1} \Gmat_k \Hmat_k$, which gives
 \begin{equation} 
 \begin{split}
 \Hmat \Hmat^T &= \sum_{k=1}^K \sum_{l=1}^K \Gmat_k \Hmat_k \Hmat_l^T \Gmat_l^T \\
&= \sum_{k=1}^K \sum_{l=1}^K \sum_{q=0}^{Q-1}  \sum_{r=0}^{Q-1}  \Gmat_k b_{q,k}b_{r,l} \LG^{q+r} \Gmat_l.
 \end{split}
 \end{equation}
 We proceed by defining $\Zmat_k = [\zeros_{N\times N} \ \dots \ \eye_{N \times N} \ \dots \ \zeros_{N \times N}]$ $\in \mathbb{R}^{N \times NK}$ which contains the identity matrix in its $k$-th block. We then have $\Zmat_k \Gam \Zmat_l^T = \g_k \g_l^T$. Using Lemma \ref{lemma:basic_lemma}, we set $\Dmat = \Gmat_k$, $\Emat = \Gmat_l$, $\Amat = b_{q,k}b_{r,l} \LG^{q+r}$, $\Bmat = \mathbf{1}_{N \times N}$, and manipulate the resulting equation to obtain
 \begin{equation} 
 \begin{split}
 \Hmat \Hmat^T  =& \sum_{k=1}^K \sum_{l=1}^K \sum_{q=0}^{Q-1}  \sum_{r=0}^{Q-1}  \Gmat_k b_{q,k} b_{r,l} \LG^{q+r} \Gmat_l  \\
=& \sum_{k=1}^K \sum_{l=1}^K \sum_{q=0}^{Q-1}  \sum_{r=0}^{Q-1}   (\Zmat_k \Gam \Zmat_l^T)\circ (b_{q,k}b_{r,l} \LG^{q+r}).
 \end{split}
 \end{equation}
 
 Hence, $\Hmat \Hmat^T$ is shown to be a function of $\Gam$. In order to obtain the dependence of $\Hmat \Hmat^T$ on $\Bmat$, we define the matrix
 \[
 \Ymat_k = [\zeros_{Q\times Q} \ \dots  \ \eye_{Q\times Q} \ \dots \ \zeros_{Q \times Q}]^T \in \R^{QK \times Q}
 \] 
 which contains the identity matrix in its $k$-th block, and the vector $\w_q = [0 \ \dots \ 1 \ \dots 0]^T \in \R^{Q \times 1}$,  which contains the value 1 in its $q$-th entry. We thus obtain the function 
 \begin{equation}
 \label{eq:Gamma_B}
 \begin{split}
 f_1(\Gam, \Bmat) = ||\hatCx 
 &-  \sum_{k=1}^K \sum_{l=1}^K \sum_{q=0}^{Q-1}  \sum_{r=0}^{Q-1}  
 (\Zmat_k \Gam \Zmat_l^T) \\
 &\circ (\w_{q+1}^T \Ymat_k^T \Bmat \Ymat_l \w_{r+1} \LG^{q+r})||_F^2
\end{split}
 \end{equation}
 in terms of $\Gam$ and $\Bmat$.
 
Next, for the term $f_2(\Gam)$, we first observe that $f_2(\Gam)= \tr(\Gmat^T \LG  \Gmat) = \sum_{k=1}^K \g_k^T \LG  \g_k$. Writing $\LG = \sum_{i = 1}^N \lamG(i) \, \uv_i \uv_i^T$ and using the equality $\g_k \g_k^T = \Zmat_k \Gam \Zmat_k^T$, we obtain the explicit form of the term $f_2(\Gam)$ in \eqref{eq:optimization_poly} as
  \begin{equation}
 f_2(\Gam)= \tr(\Gmat^T \LG  \Gmat) =  \sum_{k = 1}^K\sum_{i = 1}^N \lamG(i) \, \uv_i^T \Zmat_k \Gam \Zmat_k^T \uv_i.
  \end{equation}

\section*{Appendix E. Complexity Analysis of Algorithm \ref{alg:lsp_learning}}

Here we analyze the complexity of  the LSGP learning method proposed in Algorithm \ref{alg:lsp_learning}. In Algorithm \ref{alg:lsp_learning}, the complexity of the preliminary step of finding the eigenvalue decomposition of $\LG$ is $O(N^3)$. The most significant  stage of the algorithm is Step-1, where we compute the $\Gam$ and $\B$  matrices by solving the optimization problems \eqref{eq:explicit_fd_b}-\eqref{eq:explicit_fd_gamma} based on semidefinite programming (SDP). The commonly used HKM algorithm can be taken as reference for the solution of SDP problems \cite{Toh1999}, whose complexity is $O(mn^3 + m^2 n^2)$ with respect to the number of equality constraints  $m$ and the number of variables $n$. Hence, the complexity of the alternating stages of solving for $\Gam$ and $\B$ can be obtained as $O(\poly(N) K^2)$ and $O(\poly(KQ))$ respectively, where $\poly(\cdot)$ denotes at least cubic polynomial complexity. Next, the complexity of computing rank-1 decompositions for obtaining the model parameters in Step-2 is $O(N^3 K^3)$ for $\g$ and $O(K^3 Q^3)$ for $\bv$. The evaluation of the polynomial functions in Step-3 requires $\Theta(NKQ)$ operations. Once the filter kernels $\h_k$ are found, Step-4 can be executed with a complexity of $O(N^2 K)$. Finally,  the estimation of the operator $\Hmat$ from $\M$ in Step-5 has complexity $O(N^3)$. Hence, assuming that $K, Q \ll N$, the overall complexity of the algorithm can be reported as $O(\poly(N) K^2)$.

 \begin{rmark}
In the above analysis, the primary computational bottleneck of Algorithm \ref{alg:lsp_learning}  is seen to be  Step-1, which has polynomial complexity in the number of nodes $N$. The high complexity in $N$ stems mainly from the nonparametric formulation of the membership functions in our model, which results in $N$ optimization variables to solve for, for each $\gk$ vector. In applications involving large networks, the computational complexity of the algorithm can be alleviated through several strategies. For instance, the membership functions $\gk$  can be formulated in a parametric form with a relatively small number of parameters, e.g., in terms of a linear combination of a small set of localized and smoothly varying graph signal prototypes, such as graph wavelets \cite{Hammond2011} and heat kernels  \cite{Thanou2017}. This would significantly reduce the complexity of Step-1, while preserving the locality and smoothness properties of the membership functions over the graph. Another strategy would be to locally approximate the graph with a smaller subgraph and learn a simpler model on the subgraph. This idea is elaborated in detail in Section \ref{sec:wss_lsp}.

 \end{rmark}

\section*{Appendix F. Proof of Theorem \ref{theorem:arbitrary_clusters}}

\begin{proof}

The cross-covariance matrix of the component processes $\xk$ and $\xm$ is given by 
  \begin{equation}
\Ckm = \UG  h_k(\LamG) h_m(\LamG) \UG^T.
  \end{equation}
The element-wise magnitude of the matrix $\Ckm$ can be bounded as
  \begin{equation}
    \label{eq:ckl_norm1}
  \begin{split}
 &   | \Ckm  | =  \left| \sum_{i = 1}^N \h_k(i) \h_m(i)  \uv_i \uv_i^T \right|
     \preccurlyeq  \sum_{i = 1}^N | \h_k(i) \h_m(i) \uv_i \uv_i^T | \\
     & \preccurlyeq  \sum_{i = 1}^N | \h_k(i) \h_m(i) |  \mathbf{1}_{N \times N}
     \preccurlyeq  \frac{1}{2} \sum_{i = 1}^N \left(| \h_k^2(i)| + |\h_m^2(i)| \right) \mathbf{1}_{N \times N} \\
   &  \preccurlyeq \mathbf{1}_{N \times N}
  \end{split}
  \end{equation}
where the second and the fourth inequalities follow respectively from the fact that the vectors $\uv_i$ and $\hk$ are unit-norm.

Next, we write the covariance matrix $\Cx$ of the process $\x$ as a weighted average of the cross-covariances $\Ckm$ as
\begin{equation}
  \label{eq:lsp_cx}
    \Cx = \sum_{k=1}^K \sum_{m = 1}^K \Gmat_k \Ckm  \Gmat_m^T.
\end{equation}
We can then bound the deviation between $\Ckm$ and the restriction of $\Cx$ to the subgraphs $\G_k$ and $\G_m$ as
  \begin{equation}
    \label{eq:bound_coarse}
    \begin{split}
    &\left| \Smat_k \invGk \Cx (\invGm)^T \Smat_m^T - \Smat_k \Ckm  \Smat_m^T\right| \\
    & = \left| \sum_{ (i,j) \neq (k,m)} \Smat_k \invGk  \Gmat_i \covMat_{\x_i \x_j} \Gmat_j^T (\invGm)^T \Smat_m^T \right|\\
    &  {\preccurlyeq} \sum_{(i,j) \neq (k,m)} | \Smat_k \invGk  \Gmat_i |   |\covMat_{\x_i \x_j}|  | \Gmat_j^T (\invGm)^T \Smat_m^T | \\
    &  {\preccurlyeq} \sum_{ (i,j) \neq (k,m)} |\Smat_k \invGk  \Gmat_i| \, \mathbf{1}_{N\times N} \, |\Gmat_j^T (\invGm)^T \Smat_m^T| 
    \end{split}
  \end{equation}
  where the first and the second inequalities are due to Lemmas \ref{lemma:absolute_matrix} and  \ref{lemma:absolute_matrix_2}, respectively. In order to bound the expression in \eqref{eq:bound_coarse} in terms of $\delta$ and $\mu$, we next examine the product $|\Smat_k \invGk \Gmat_i |$ for the cases $i\neq k$ and $i=k$. Due to Assumption \ref{ass:vertex_separation}, we have $|\Smat_k \invGk \Gmat_i| \preccurlyeq \frac{1}{\mu} \Smat_k |\Gmat_i| \preccurlyeq \frac{\delta}{\mu} \Smat_k$ for $i \neq k$; and $|\Smat_k \invGk \Gmat_i| \preccurlyeq \Smat_k$ for $i=k$. Using these inequalities in \eqref{eq:bound_coarse}, we get
  \begin{equation}
    \begin{split}
    &\left| \Smat_k \invGk \Cx (\invGm)^T \Smat_m^T - \Smat_k \Ckm  \Smat_m^T\right| \\
    & \preccurlyeq \sum_{ (i,j) \neq (k,m)} |\Smat_k \invGk  \Gmat_i| \, \mathbf{1}_{N\times N} \, |\Gmat_j^T (\invGm)^T \Smat_m^T| \\
    & = \sum_{i\neq k, j = m}  |\Smat_k \invGk  \Gmat_i| \, \mathbf{1}_{N\times N} \, |\Gmat_j^T (\invGm)^T \Smat_m^T| \\
    & \sum_{i = k, j \neq m}  |\Smat_k \invGk  \Gmat_i| \, \mathbf{1}_{N\times N} \, |\Gmat_j^T (\invGm)^T \Smat_m^T| \\
    & + \sum_{i \neq k, j \neq m}  |\Smat_k \invGk  \Gmat_i| \, \mathbf{1}_{N\times N} \, |\Gmat_j^T (\invGm)^T \Smat_m^T|\\  
    & \preccurlyeq \left(2 (K-1) \frac{\delta}{\mu} + (K-1)^2 \left(\frac{\delta}{\mu} \right)^2 \right) \Smat_k \mathbf{1}_{N \times N} \Smat_m^T\\
     &=  \left(2 (K-1) \frac{\delta}{\mu} + (K-1)^2 \left(\frac{\delta}{\mu} \right)^2 \right)  \mathbf{1}_{|\mathcal{V}_k | \times |\mathcal{V}_m |} 
    \end{split}
  \end{equation}
which concludes the proof. 
\end{proof}

\section*{Appendix G. Proof of Theorem \ref{cor:average_cov_bound}}

\begin{proof}
We begin by observing that for any $a,b \in \mathbb{R}$, we have
\begin{equation}
    |a + b|^2 \leq |a|^2 + |b|^2 + 2|ab| \leq 2 \left(|a|^2 + |b|^2\right).
\end{equation}
We can bound the average squared cross-covariance as 
\begin{equation}
\begin{split}
\label{eq:average_overall_sq}
        & \frac{1}{\mub ^4} \sum_{k=1}^K    \sum_{\substack{m=1\\ m \neq k}}^K \ \sum_{(i,j) \in \mathcal{V}_k \times \mathcal{V}_m} 
        \left| \Cx (i,j) \right|^2 \\
        & \leq  \sum_{k=1}^K    \sum_{\substack{m=1\\ m \neq k}}^K \ \sum_{(i,j) \in \mathcal{V}_k \times \mathcal{V}_m}  \g_k(i)^{-2} \left| \Cx (i,j) \right|^2 \g_m(j)^{-2}\\
        & \leq 2  \sum_{k=1}^K    \sum_{\substack{m=1\\ m \neq k}}^K \ \sum_{(i,j) \in \mathcal{V}_k \times \mathcal{V}_m} 
         \big| \g_k(i)^{-1} \Cx(i,j) \g_m(j)^{-1} \\
        & \quad \quad - \Ckm(i,j) \big|^2 
        + 2  \sum_{k=1}^K    \sum_{\substack{m=1\\ m \neq k}}^K \  \sum_{(i,j) \in \mathcal{V}_k \times \mathcal{V}_m} 
        \left|  \Ckm(i,j)\right|^2 \\
        &\leq 2  \sum_{k=1}^K    \sum_{\substack{m=1\\ m \neq k}}^K \ \sum_{(i,j) \in \mathcal{V}_k \times \mathcal{V}_m}       
         \left(2 (K-1) \frac{\delta}{\mu} + (K-1)^2 \left(\frac{\delta}{\mu}\right)^2 \right)^2 \\
        &+ 2 \sum_{k=1}^K    \sum_{\substack{m=1\\ m \neq k}}^K \ \sum_{(i,j) \in \mathcal{V}_k \times \mathcal{V}_m}     
        \left|   \Ckm(i,j) \right|^2 \\
\end{split}
\end{equation}
where the last inequality is due to Theorem \ref{theorem:arbitrary_clusters}. We proceed by upper bounding the cross-covariance sum as
\begin{equation}
\begin{split}
        & \sum_{k=1}^K    \sum_{\substack{m=1\\ m \neq k}}^K \ \sum_{(i,j) \in \mathcal{V}_k \times \mathcal{V}_m} \left|  \Ckm(i,j) \right|^2 \\
        & =  \sum_{k=1}^K    \sum_{\substack{m=1\\ m \neq k}}^K \ \sum_{(i,j) \in \mathcal{V}_k \times \mathcal{V}_m}   \left|\sum_{l = 1}^N \h_k(l) \h_m(l) \uv_l(i) \uv_l(j)\right|^2 \\
        & \leq \sum_{k=1}^K    \sum_{\substack{m=1\\ m \neq k}}^K \ \sum_{(i,j) \in \mathcal{V}_k \times \mathcal{V}_m}  \left(\sum_{l = 1}^N \h_k(l)^2 \h_m(l)^2 \right) \\
        & \hspace{5cm}
        \cdot \left(\sum_{n = 1}^N \uv_n(i)^2 \uv_n(j)^2 \right) \\
        &= \sum_{l=1}^N \sum_{n=1}^N   \sum_{k=1}^K    \sum_{\substack{m=1\\ m \neq k}}^K   \h_k(l)^2 \h_m(l)^2
         \sum_{(i,j) \in \mathcal{V}_k \times \mathcal{V}_m} \uv_n(i)^2 \uv_n(j)^2 \\ 
        &= \sum_{l=1}^N \sum_{n=1}^N
        \sum_{k=1}^K    \sum_{\substack{m=1\\ m \neq k}}^K
         \h_k(l)^2 \h_m(l)^2
         \sum_{i \in \mathcal{V}_k} \uv_n(i)^2  \sum_{j \in \mathcal{V}_m} \uv_n(j)^2 \\
        &\leq N \sum_{l=1}^N \sum_{k=1}^K    \sum_{\substack{m=1\\ m \neq k}}^K  
        \h_k(l)^2 \h_m(l)^2 \leq N K(K-1) \epsilon^2 \\
\end{split}
\end{equation}
where the last inequality is due to Assumption \ref{ass:frequency_separation}. Using this result in \eqref{eq:average_overall_sq}, we get the bound stated in the theorem

\begin{equation}
    \begin{split}
        &\frac{1}{\mub^4 N^2} 
        \sum_{k=1}^K    \sum_{\substack{m=1\\ m \neq k}}^K \ \sum_{(i,j) \in \mathcal{V}_k \times \mathcal{V}_m} 
         \left|  \Cx(i,j)   \right|^2\\
        & \leq \frac{2}{N}K(K-1)\epsilon^2\\
        & + \frac{2}{N^2} 
         \sum_{k=1}^K    \sum_{\substack{m=1\\ m \neq k}}^K \ \sum_{(i,j) \in \mathcal{V}_k \times \mathcal{V}_m} 
         \left(2 (K-1) \frac{\delta}{\mu} + (K-1)^2 \left(\frac{\delta}{\mu}\right)^2 \right)^2\\
        & = \frac{2}{N}K(K-1)\epsilon^2 \\
        & + \frac{2}{N^2} \left|  \bigcup_{k=1}^K \bigcup_{\substack{m=1\\ m \neq k}}^K  \mathcal{V}_k \times \mathcal{V}_m \right| \left(2 (K-1) \frac{\delta}{\mu} + (K-1)^2 \left(\frac{\delta}{\mu}\right)^2 \right)^2 .\\
\end{split}
\end{equation}
\end{proof}

\section*{Appendix H. Lower bound on the within-subgraph covariance magnitudes }

While Theorem \ref{cor:average_cov_bound} guarantees an upper bound on the cross-covariance, in order for this bound to be meaningful, it should be assessed relatively to the covariance of $\x$ on each subgraph $\G_k$. Therefore, in this section we aim to get a lower bound for the average strength of $\Cx$ over the individual subgraphs $\G_k$. In order to ensure such a lower bound, the process $\x$ must change sufficiently slowly on each subgraph $\G_k$, which can be imposed through a restriction on the bandwidths of the kernels $\hk$:

\begin{assumption}
  \label{ass:lowpass_kernel}
The kernels $\hk$ are band-limited such that $\h_k(i) = 0$ for $i >{\Kcut }$ for all $k \in \{1,2, \dots, K\}$, where $\Kcut $ is a cutoff parameter with $\Kcut \in \{1,2,\dots, N\}$.
\end{assumption}
 Before proceeding to our result, we also define the following parameters related to the topology of $\G$ and the process characteristics:
 \begin{definition}
 Let $D(i,j)$ denote the unweighted geodesic distance between two vertices $i,j \in \mathcal{V}$ given by 
 \begin{equation*}
 \begin{split}
 &D(i,j) \triangleq \min 
  \{ n: 
 \exists \ (l_0, l_1, \dots, l_n) \text{ such that } l_k \sim  l_{k+1} \\ 
 & \text{ for } k=0, \dots, n-1 ;  
  l_k\in \{1, \dots, N  \}; \  l_0=i, \  l_n=j \}.
\end{split}
 \end{equation*}
 Also let $w_{min} \triangleq \underset{i \sim j}{\min} \, \W(i,j)$ denote the minimum edge weight on $\G$, let $\tv_n \triangleq \sum_{i \sim j} \W(i,j) (\uv_n(i) - \uv_n(j))^2$ denote the total variation of $\uv_n$ on $\G$,   and let  $\sigma_k^2 \triangleq \sum_{i \in  \V_k} \Cxk (i,i)$ represent the total variance of the process $\xk$ on the subgraph $\G_k$.
 \end{definition}

We can now present our lower bound on the average magnitude of the process covariance on the individual subgraphs. 
 \begin{theorem}
 \label{theorem:cov_avg_low}
Let Assumptions \ref{ass:vertex_separation} and \ref{ass:lowpass_kernel} hold. Then the average magnitude of the covariance of the process $\x$ on the individual subgraphs $\{ \G_k \} $ is lower bounded as
      \begin{equation}
      \label{eq_thm_lb_cov}
      \begin{split}
             & \frac{1}{N^2 \mu^2}\sum_{k = 1}^K \sum_{(i,j) \in \mathcal{V}_k \times \mathcal{V}_k} | \Cx(i, j) |
             \geq             
            \frac{1}{N^2}\sum_{k = 1}^K | \mathcal{V}_k |   \sigma_k^2 \\
             & - \frac{1}   {2 \, N^2 \, w_{min}  }  \
           \sum_{k = 1}^K \ \sum_{(i,j) \in \mathcal{V}_k \times \mathcal{V}_k} D^2(i, j) \
           \sum_{n= 1}^{\Kcut } \tv_n \\
             & -  \frac{1}{N^2}  \left|\bigsqcup_{k = 1}^K \mathcal{V}_k \times \mathcal{V}_k \right|  \left(2(K-1) \frac{\delta}{\mu} + (K-1)^2 \left( \frac{\delta}{\mu}\right)^2 \right). \\
      \end{split}
     \end{equation}
 \end{theorem}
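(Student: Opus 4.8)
The plan is to work with the decomposition $\Cx = \sum_{i,m} \Gmat_i \covMat_{\x_i \x_m} \Gmat_m^T$ from \eqref{eq:lsp_cx} and isolate, on each within-subgraph block $\mathcal{V}_k \times \mathcal{V}_k$, the ``diagonal'' term $\Gmat_k \Cxk \Gmat_k^T$ from the remaining cross terms. First I would write, for $(i,j)\in\mathcal{V}_k\times\mathcal{V}_k$,
\[
|\Cx(i,j)| \geq |\g_k(i)\, \Cxk(i,j)\, \g_k(j)| - \big| \Cx(i,j) - \g_k(i)\,\Cxk(i,j)\,\g_k(j) \big|,
\]
then divide by $\mu^2$ and use Assumption \ref{ass:vertex_separation} ($\mu \Smat_k \preccurlyeq \Smat_k \Gmat_k$) on the first term and Theorem \ref{theorem:arbitrary_clusters} (with $k=m$, after appropriate scaling by $\invGk$) on the second term. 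The second term, summed over $\mathcal{V}_k\times\mathcal{V}_k$ and over $k$, contributes exactly the last line of \eqref{eq_thm_lb_cov} involving $\big|\bigsqcup_k \mathcal{V}_k\times\mathcal{V}_k\big|$. So the crux is a lower bound on $\tfrac{1}{N^2}\sum_k \sum_{(i,j)\in\mathcal{V}_k\times\mathcal{V}_k} |\Cxk(i,j)|$.

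For that crux I would lower bound $\sum_{(i,j)\in\mathcal{V}_k\times\mathcal{V}_k}|\Cxk(i,j)|$ by $\sum_{(i,j)}\Cxk(i,j)$ is not safe (signs), so instead I compare $\Cxk(i,j)$ to $\Cxk(i,i)$: since $\Cxk = \sum_{n\le \Kcut} \hk(n)^2 \uv_n\uv_n^T$ under Assumption \ref{ass:lowpass_kernel}, we have
\[
\Cxk(i,i) - \Cxk(i,j) = \sum_{n=1}^{\Kcut} \hk(n)^2\, \uv_n(i)\big(\uv_n(i)-\uv_n(j)\big),
\]
and a similar identity pairing $i$ and $j$ symmetrically lets me bound $|\Cxk(i,i) - \Cxk(i,j)|$ (and likewise with $\Cxk(j,j)$) by a quantity controlled by $|\uv_n(i)-\uv_n(j)|$. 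The standard estimate here is that for a band-limited Laplacian eigenvector, consecutive-vertex differences along a path of length $D(i,j)$ telescope; using Cauchy--Schwarz along a geodesic path together with $w_{min}$, one gets $(\uv_n(i)-\uv_n(j))^2 \le \frac{D(i,j)}{w_{min}} \sum_{p\sim q}\W(p,q)(\uv_n(p)-\uv_n(q))^2 = \frac{D(i,j)}{w_{min}}\tv_n$ along the path, and summing over the $D(i,j)$ edges of the path yields the factor $D^2(i,j)/w_{min}$. Combining this with $\sum_n \hk(n)^2 = 1$ (the Frobenius normalization) and $\Cxk(i,i)$ summing to $\sigma_k^2$ over $i\in\mathcal{V}_k$ gives, after summing over $j\in\mathcal{V}_k$, a bound of the form $\sum_{(i,j)\in\mathcal{V}_k\times\mathcal{V}_k}|\Cxk(i,j)| \ge |\mathcal{V}_k|\sigma_k^2 - \tfrac{1}{2w_{min}}\sum_{(i,j)}D^2(i,j)\sum_{n\le\Kcut}\tv_n$. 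Dividing by $N^2$ and assembling the three pieces yields \eqref{eq_thm_lb_cov}.

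The main obstacle I expect is the middle step: carefully converting the eigenvector-difference identity into the geodesic-distance bound with the correct constant $1/2$ and the correct power $D^2$, while keeping track of the two symmetric contributions ($i\to j$ versus $j\to i$) that produce the factor $\tfrac12$. The telescoping-along-a-path argument and the Cauchy--Schwarz step are routine in spirit, but bookkeeping the sum $\sum_{(i,j)\in\mathcal{V}_k\times\mathcal{V}_k}$ so that each edge's total-variation contribution $\tv_n$ is counted with the stated weight, and ensuring the $\hk(n)^2$ weights collapse via $\|\hk\|=1$ rather than leaving a residual, is where the delicate accounting lies. Everything else — the triangle-inequality split at the top and the invocation of Theorem \ref{theorem:arbitrary_clusters} for the cross-term remainder — is essentially mechanical.
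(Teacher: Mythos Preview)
Your overall plan matches the paper's: isolate the cross terms via Theorem \ref{theorem:arbitrary_clusters}, then lower-bound the within-block piece using a polarization-type identity together with a path bound on $(\uv_n(i)-\uv_n(j))^2$. Two execution issues, though.

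First, your triangle-inequality split does not produce the constant in the last line of \eqref{eq_thm_lb_cov}. After dividing by $\mu^2$, the remainder you must upper-bound is
\[
\tfrac{1}{\mu^2}\bigl|\Cx(i,j)-\g_k(i)\,\Cxk(i,j)\,\g_k(j)\bigr|
=\tfrac{\g_k(i)\g_k(j)}{\mu^2}\,\bigl|\g_k(i)^{-1}\Cx(i,j)\,\g_k(j)^{-1}-\Cxk(i,j)\bigr|,
\]
and the prefactor $\g_k(i)\g_k(j)/\mu^2$ can be as large as $\mub^2/\mu^2$, not $1$. So Theorem \ref{theorem:arbitrary_clusters} does not hand you exactly $2(K-1)\tfrac{\delta}{\mu}+(K-1)^2(\tfrac{\delta}{\mu})^2$; you pick up an extra $\mub/\mu$ in the linear term. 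The paper avoids this by passing to $\invGk\Cx(\invGk)^T$ \emph{before} the split (see \eqref{eq:reverse_tri_ALT}):
\[
\tfrac{1}{\mu^2}|\Cx(i,j)|\ \ge\ \bigl|\g_k(i)^{-1}\Cx(i,j)\,\g_k(j)^{-1}\bigr|\ \ge\ \Cxk(i,j)-\bigl|\g_k(i)^{-1}\Cx(i,j)\,\g_k(j)^{-1}-\Cxk(i,j)\bigr|,
\]
which is then bounded directly by Theorem \ref{theorem:arbitrary_clusters} with $k=m$.

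Second, your account of the $D^2$ factor is muddled. Cauchy--Schwarz along a geodesic of length $D(i,j)$ already gives the endpoint bound $(\uv_n(i)-\uv_n(j))^2 \le \tfrac{D(i,j)}{w_{min}}\,\tv_n$; there is nothing further to ``sum over the $D(i,j)$ edges of the path''. The paper's Lemma \ref{claim:combinatoric_smoothness} obtains the (weaker) $D^2(i,j)$ bound by a cruder route: the per-edge estimate $|\uv_n(p)-\uv_n(q)|\le\sqrt{\tv_n/w_{min}}$ plus the triangle inequality along the path give $|\uv_n(i)-\uv_n(j)|\le D(i,j)\sqrt{\tv_n/w_{min}}$, then square. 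Your Cauchy--Schwarz bound is in fact stronger and would suffice (since $D\le D^2$ for $D\ge 1$), but the mechanism you described for reaching $D^2$ is not a valid argument.

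One minor remark: once you ``pair $i$ and $j$ symmetrically'' you have exactly the polarization identity $\uv_n(i)\uv_n(j)=\tfrac12(\uv_n(i)^2+\uv_n(j)^2)-\tfrac12(\uv_n(i)-\uv_n(j))^2$, which the paper applies directly to $\Cxk(i,j)$ (not $|\Cxk(i,j)|$) in \eqref{eq_Cxij_lb_v1}. The $\h_k(n)^2$ weights are then dropped via the crude bound $\h_k(n)^2\le 1$ in \eqref{eq:lower2}, not by summing to~$1$.
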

The first term in the right hand side of \eqref{eq_thm_lb_cov} sets a reference value for the average covariance magnitude of $\x$ when restricted to the subgraphs $\{\G_k\}$. The average covariance magnitude has limited deviation from this reference value if the second and the third terms have restricted magnitudes. The second term improves as the bandwidth parameter $\Kcut$ of the kernels decreases, which also depends on the diameters of the subgraphs. The third term is bounded by the localization ratio $\delta/\mu$ of the membership functions as in Theorem \ref{cor:average_cov_bound}.

Before proving Theorem \ref{theorem:cov_avg_low}, we first present the following lemma, which will be useful in the proof.

\begin{lemma}
\label{claim:combinatoric_smoothness}
The following inequality holds for all edges $ (i,j) \in \mathcal{E}$ and all $n\ \in \{1, \dots, N \}$. 
\begin{equation}
\label{eq:smooth_upper_edge}
    \frac{\tv_n}{w_{min}} \geq (\uv_n(i)-\uv_n(j))^2
\end{equation}
Also, for all vertex pairs $(i, j) \in \mathcal{V} \times \mathcal{V}$,

\begin{equation}
\label{eq:smooth_upper_v2}
    D^2(i,j) \frac{\tv_n}{w_{min}} \geq ( \uv_n(i)- \uv_n(j))^2.
\end{equation}

\end{lemma}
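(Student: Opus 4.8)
# Proof proposal for Lemma \ref{claim:combinatoric_smoothness}

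The plan is to prove the edge inequality \eqref{eq:smooth_upper_edge} first, and then obtain the vertex-pair inequality \eqref{eq:smooth_upper_v2} from it by a telescoping/path argument combined with the Cauchy--Schwarz inequality.

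\textbf{Step 1 (edge inequality).} Recall that $\tv_n = \sum_{i \sim j} \W(i,j)(\uv_n(i)-\uv_n(j))^2$, where the sum runs over all edges of $\G$, and each summand is nonnegative since $\W(i,j) \geq 0$ and $(\uv_n(i)-\uv_n(j))^2 \geq 0$. Hence for any \emph{fixed} edge $(i,j) \in \E$, the single term $\W(i,j)(\uv_n(i)-\uv_n(j))^2$ is bounded above by the whole sum $\tv_n$. Since $\W(i,j) \geq w_{min} > 0$ on every edge, we get
\[
\tv_n \geq \W(i,j)(\uv_n(i)-\uv_n(j))^2 \geq w_{min}\,(\uv_n(i)-\uv_n(j))^2,
\]
which rearranges to \eqref{eq:smooth_upper_edge}. (Connectedness of $\G$ guarantees $w_{min}>0$, so the division is legitimate.)

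\textbf{Step 2 (vertex-pair inequality).} Fix $i,j \in \V$ and let $D(i,j)=n$, with a shortest path $i=l_0 \sim l_1 \sim \dots \sim l_n = j$. Telescoping,
\[
\uv_n(i)-\uv_n(j) = \sum_{p=0}^{n-1}\bigl(\uv_n(l_p)-\uv_n(l_{p+1})\bigr).
\]
(Here I will need to use a different dummy letter than $n$ for the eigenvector index inside the proof to avoid a clash with the path length; this is purely cosmetic.) Applying Cauchy--Schwarz to this sum of $n = D(i,j)$ terms,
\[
\bigl(\uv_n(i)-\uv_n(j)\bigr)^2 \leq D(i,j)\sum_{p=0}^{n-1}\bigl(\uv_n(l_p)-\uv_n(l_{p+1})\bigr)^2.
\]
Now each consecutive pair $(l_p,l_{p+1})$ is an edge, so by Step 1 each term satisfies $\bigl(\uv_n(l_p)-\uv_n(l_{p+1})\bigr)^2 \leq \tv_n/w_{min}$. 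Therefore the sum of $D(i,j)$ such terms is at most $D(i,j)\,\tv_n/w_{min}$, giving
\[
\bigl(\uv_n(i)-\uv_n(j)\bigr)^2 \leq D(i,j)\cdot D(i,j)\,\frac{\tv_n}{w_{min}} = D^2(i,j)\,\frac{\tv_n}{w_{min}},
\]
which is \eqref{eq:smooth_upper_v2}. The case $i=j$ is trivial since both sides are zero.

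\textbf{Main obstacle.} There is no serious obstacle here — the lemma is elementary. The only points requiring a little care are: (i) ensuring $w_{min}>0$, which follows from $\G$ being connected with positively weighted edges; (ii) keeping the bookkeeping clean when bounding a sum by its largest-possible term versus applying Cauchy--Schwarz, making sure the factor $D(i,j)$ appears exactly twice (once from Cauchy--Schwarz over the path, once from bounding each edge term); and (iii) notational hygiene with the index $n$ playing two roles. None of these is conceptually difficult, so the proof will be short.
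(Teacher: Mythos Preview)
Your proof is correct and follows essentially the same approach as the paper: the edge inequality is obtained by isolating a single nonnegative term from the sum defining $\tv_n$, and the vertex-pair inequality is obtained by chaining the edge bound along a shortest path. The only cosmetic difference is that the paper uses the triangle inequality on $\sqrt{\tv_n/w_{min}}$ and then squares, whereas you telescope and apply Cauchy--Schwarz; these yield the identical final bound.
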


\begin{proof}
We have 
\begin{equation}
   \begin{split}
       \tv_n & =  \sum_{i \sim j} \W(i,j) (\uv_n(i) - \uv_n(j))^2 \\
       & \geq w_{min} \sum_{i \sim j} ( \uv_n(i) - \uv_n(j))^2 \\
       & \geq w_{min} \, \underset{i \sim j}{\max} \  ( \uv_n(i) - \uv_n(j))^2. \\
   \end{split}
\end{equation}
Hence, \eqref{eq:smooth_upper_edge} is proved. For showing \eqref{eq:smooth_upper_v2}, consider a simple path $(i, l_1, \dots, l_{q-1}, j)$ of length $q=D(i,j)$ between nodes $i$ and $j$. Then,
\begin{equation}
  \begin{split}
     &D(i,j) \sqrt{\frac{\tv_n}{w_{min}}} \\
     & \geq |\uv_n(i) - \uv_n(l_1)| + \dots + |\uv_n(l_{q-1}) - \uv_n(j)| \\ 
     & \geq |\uv_n(i) - \uv_n(j)| .\\
  \end{split}
\end{equation}
Taking the square of both sides, we get the inequality in \eqref{eq:smooth_upper_v2}.
\end{proof}

We can now prove Theorem \ref{theorem:cov_avg_low}.

\begin{proof}
We first obtain an expression for the total covariances of the component processes on their corresponding subgraphs as
\begin{equation}
\label{eq_Cxij_lb_v1}
\begin{split}
    &\sum_{k = 1}^K \ \sum_{(i,j) \in \mathcal{V}_k \times \mathcal{V}_k} \Cxk(i,j) \\
    &=  \sum_{k = 1}^K \ \sum_{(i,j) \in \mathcal{V}_k \times \mathcal{V}_k}
     \sum_{n = 1}^{\Kcut}
      \h_k(n)^2 \, \uv_n(i) \, \uv_n(j) \\
     & = \frac{1}{2} 
     \sum_{k = 1}^K \ \sum_{(i,j) \in \mathcal{V}_k \times \mathcal{V}_k}
     \sum_{n = 1}^{\Kcut}
      \h_k(n)^2  \left( \uv_n(i)^2 + \uv_n(j)^2  \right) \\
     & - \frac{1}{2}
      \sum_{k = 1}^K \ \sum_{(i,j) \in \mathcal{V}_k \times \mathcal{V}_k}
     \sum_{n = 1}^{\Kcut}
      \h_k(n)^2  \left( \uv_n(i)  -  \uv_n(j) \right)^2
\end{split}
\end{equation}
where the first equality is due to Assumption \ref{ass:lowpass_kernel}. We first obtain the following relation 
\begin{equation}
\label{eq:lower1}
\begin{split}
     &
     \sum_{k = 1}^K \ \sum_{(i,j) \in \mathcal{V}_k \times \mathcal{V}_k}
     \sum_{n = 1}^{\Kcut}     
     \h_k(n)^2  \uv_n(i)^2 \\
    & =  \sum_{k = 1}^K  \sum_{i \in \V_k}    \sum_{n = 1}^{\Kcut}  
      |\mathcal{V}_k | \ \h_k(n)^2  \uv_n(i)^2 \\
     & =\sum_{k = 1}^K  \sum_{i \in \V_k}
      |\mathcal{V}_k|   \Cxk(i,i) 
      = \sum_{k = 1}^K  |\mathcal{V}_k | \, \sigma_k^2.
\end{split}
\end{equation}

Next, from Lemma \ref{claim:combinatoric_smoothness}, we get 
\begin{equation}
\label{eq:lower2}
\begin{split}
       -\sum_{n= 1}^{\Kcut} \h_k(n)^2 (\uv_n(i)-\uv_n(j))^2 & \geq -\sum_{n = 1}^{\Kcut}  (\uv_n(i)-\uv_n(j))^2 \\
       & \geq - \frac{D^2(i,j)}{w_{min}} \sum_{n = 1}^{\Kcut} \tv_n.
\end{split}
\end{equation}
Using \eqref{eq:lower1} and \eqref{eq:lower2} in \eqref{eq_Cxij_lb_v1}, we obtain 

 \begin{equation}
 \begin{split}
& \sum_{k = 1}^K \ \sum_{(i,j) \in \mathcal{V}_k \times \mathcal{V}_k} \Cxk(i,j)   
    \geq \sum_{k = 1}^K |\mathcal{V}_k| \sigma_k^2 \\
   & - \frac{1}{2 w_{min}} \sum_{k = 1}^K \ \sum_{(i,j) \in \mathcal{V}_k \times \mathcal{V}_k} D^2(i, j) \sum_{n = 1}^{\Kcut} \tv_n.
 \end{split}
 \end{equation}
Finally, from Assumption \ref{ass:vertex_separation} it follows that
 \begin{equation}
 \label{eq:reverse_tri_ALT}
 \begin{split}
  &\frac{1}{\mu^2} | \Cx(i,j) | \geq | \Gmat_k^{\dag}(i,i) \Cx(i,j) \Gmat_k^{\dag}(j,j) | \\
 & \geq  \Gmat_k^{\dag}(i,i) \Cx(i,j) \Gmat_k^{\dag}(j,j) \\
&  \geq  \Cxk(i,j) - |\Gmat_k^{\dag}(i,i) \Cx(i,j) \Gmat_k^{\dag}(j,j) - \Cxk(i,j)|.
 \end{split}
 \end{equation}
Using this result together with the bound in  Theorem \ref{theorem:arbitrary_clusters}, we get the inequality stated in Theorem  \ref{theorem:cov_avg_low}.

\end{proof}

\section*{Appendix I. Additional performance analyses for the proposed algorithms }

\subsection{Performance Analysis of Algorithm \ref{alg:lsp_learning}}

\subsubsection{Effect of model complexity}
\label{sec:effect_model}
Here we examine the effect of the model order parameters $K$ (number of process components) and $Q$ (polynomial order) on the estimation performance of the LSGP method proposed in Algorithm \ref{alg:lsp_learning}. A synthetic  $7$-NN graph $\G$ with $N=36$ nodes is formed as in Section \ref{sec:perf_noise} by combining $K$ subgraphs $\{\G_k\}_{k=1}^K$ each of which consists of $36/K$ nodes. The component processes $\xk$ are blended in $\x$ by setting the membership functions $\gk$ to $1$ within each subgraph $\G_k$ and to $0.1$ outside $\G_k$.

The variation of the covariance discrepancy is plotted in Fig.~\ref{fig_cd_variation_constant_Q} for variable $K$ by fixing $Q = 4$, and for variable $Q$ in Fig.~\ref{fig_cd_variation_constant_K} by fixing $K = 2$. As the model complexity increases, the number of realizations required to attain a target covariance discrepancy level increases in both cases as expected. The CD converges to 0 with increasing number of realizations in all cases, which confirms that Algorithm \ref{alg:lsp_learning} recovers the true process model.  The algorithm performance is more sensitive to the $K$ parameter than $Q$, which is expected since increasing $K$ by 1 increases the model dimension by $N$.

\begin{figure}[t]
  \begin{center}
    \hspace{-0.7cm}
       \subfloat[]
       {\label{fig_cd_variation_constant_Q}\includegraphics[height=3.5cm]{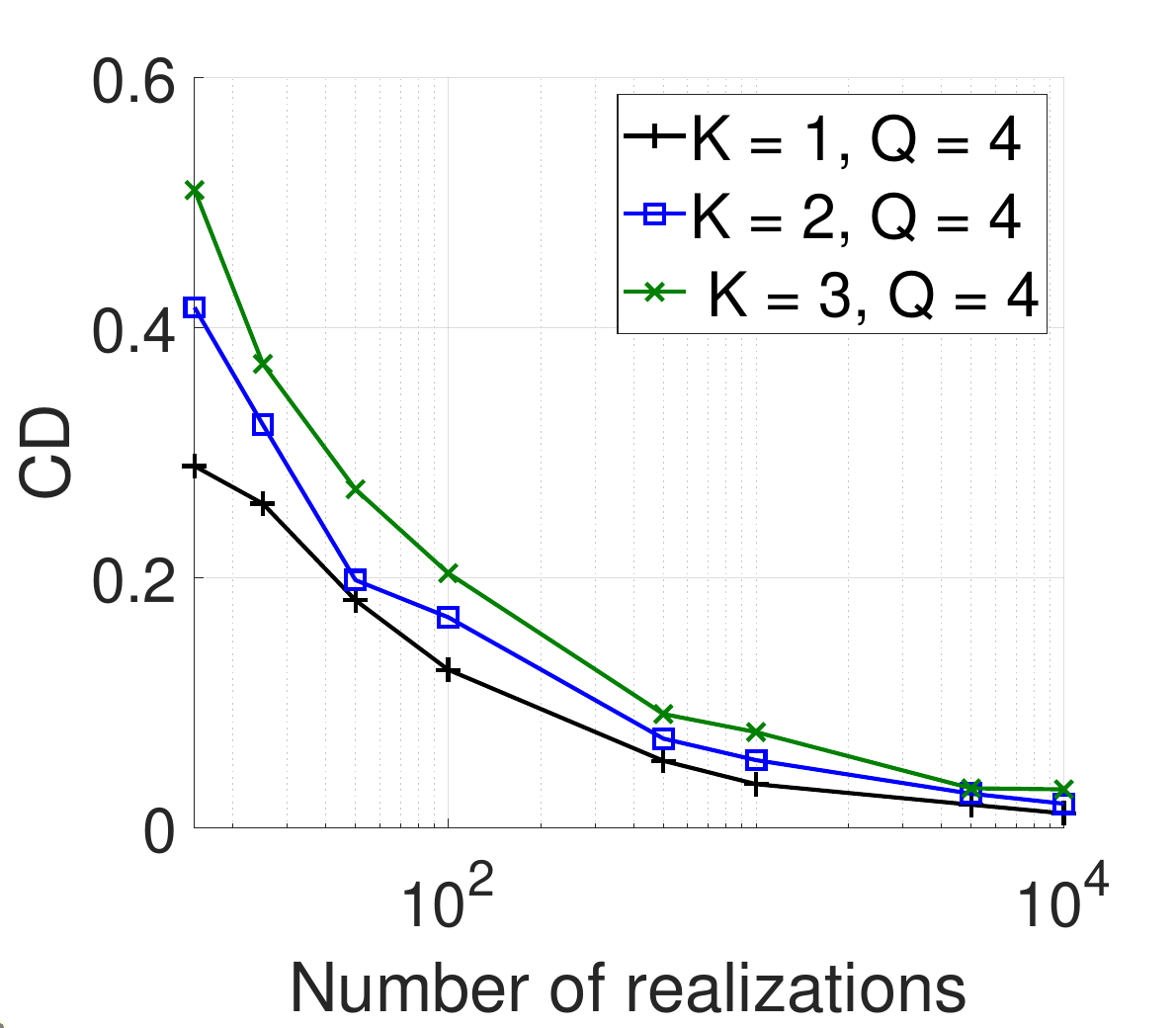}}
       \hspace{0.1cm}
      \subfloat[]
      {\label{fig_cd_variation_constant_K}\includegraphics[height=3.5cm]{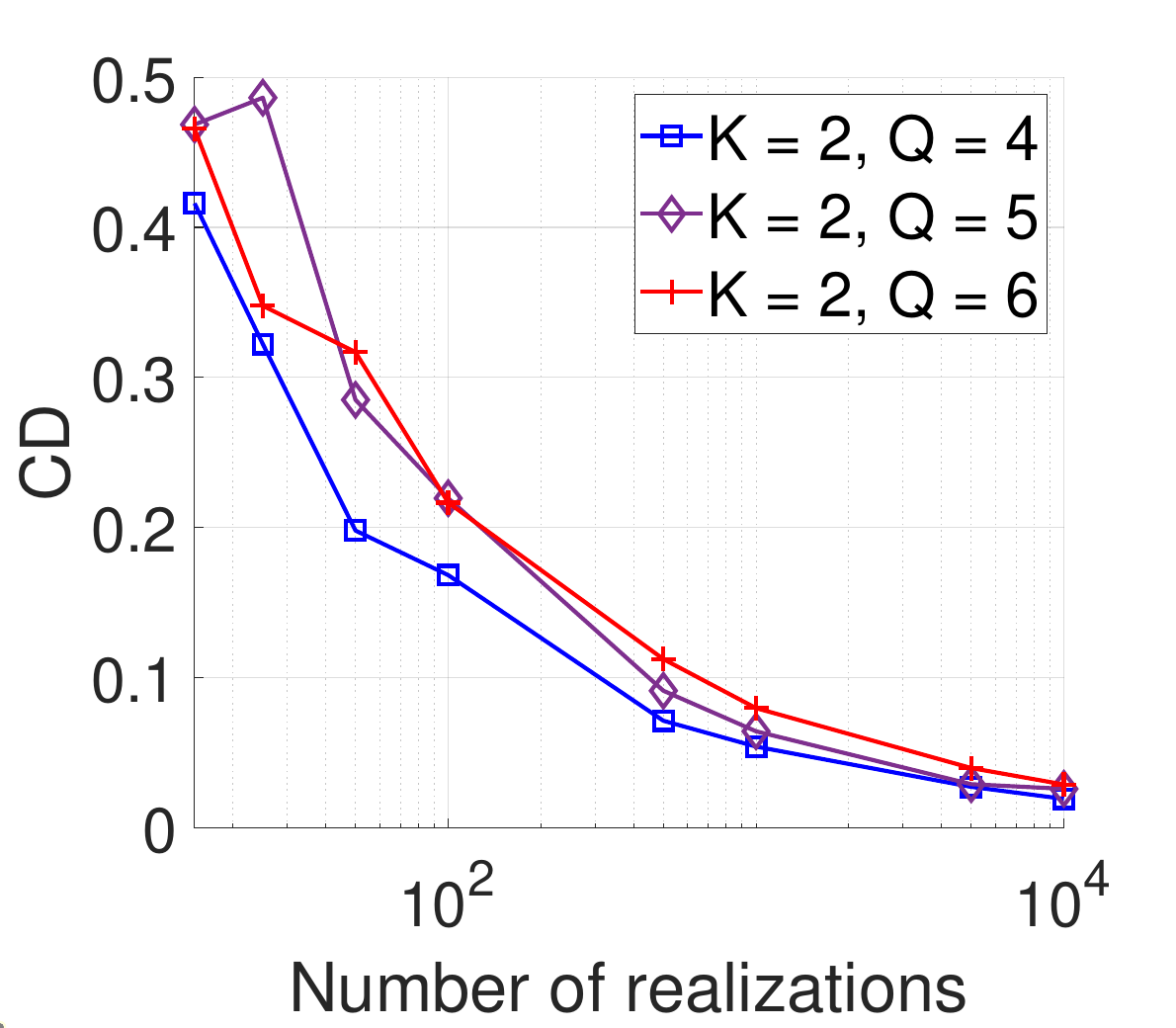}}\\
   \end{center}
    \caption{Variation of the covariance discrepancy with the model complexity}

   \label{fig_model_complexity}
  \end{figure}

\subsubsection{Sensitivity to regularization parameters}
\label{sec:mu_sensitivity}

We then investigate the effect of the weight parameters $\mu_1, \mu_2, \mu_3$ in the optimization problem \eqref{eq:optimization_poly} on the algorithm performance. We experiment on the COVID-19 and the Mol\`ene data sets described in Section \ref{sec:comparative_experiments}. The ratio of missing observations is fixed to $80\%$ and the NME and MAPE metrics for the LMMSE estimates are reported in Table \ref{table:mu_1_nme} for varying $\mu_1$ values and in Table \ref{table:mu_23_nme} for  varying $\mu_2, \mu_3$ combinations. The non-tested weight parameters are fixed to $0$ in each experiment, in order to focus on the effect of the tested ones.

In Table \ref{table:mu_1_nme}, the performance of the algorithm is seen to be stable with respect to the variations in $\mu_1$  over a rather large interval,  which controls the smoothness of the membership functions. The COVID-19 data set favors slightly lower  $\mu_1$ values compared to Mol\`ene, thus imposing the smoothness less strictly. This is in line with the finding that the COVID-19 data has weaker vertex stationarity than Mol\`ene \cite{Guneyi2023}.  A suitable choice for $\mu_1$ would lie in the interval $[10^{-8}, 10^{-6}]$, finding a trade-off between the NME and the MAPE metrics. Next, the results in Table \ref{table:mu_23_nme} show that relatively small  $(\mu_2,\mu_3)$ values lead to smaller NME. Recalling that these parameters impose the low-rank constraints in \eqref{eq:optimization_poly}, the decrease in the MAPE values for increasing  $(\mu_2,\mu_3)$ is misleading as the algorithm tends to compute a zero process model under too heavy regularization.  Taking into account both metrics, one may select $\mu_2$ in $[10^{-4}, 10^{-5}]$ and $\mu_3$ in $[0, 10^{-5}]$.

\begin{table}[t]
  \tiny
  \center
\begin{tabular}{|c|c|c|c|c|c|c|c|c|}
\hline
$\mu_1$ & 0&$10^{-8}$&$10^{-7}$&$10^{-6}$&$10^{-5}$&$10^{-4}$&$10^{-3}$&$10^{-2}$\\
\hline
Dataset & \multicolumn{8}{c|}{COVID-19}\\
\hline
NME & 0.8091 & 0.8091 & 0.8091 & 0.8091 & \underline{\textbf{0.8091}} & 0.8092 & 0.8130 & 0.8210 \\
\hline
MAPE & 2.1454 &\underline{\textbf{2.1454}} &2.1455 & 2.1459 & 2.1503 & 2.2075 & 2.4162 & 2.5038 \\
\hline
Dataset & \multicolumn{8}{c|}{Mol\`ene}\\
\hline
NME & 0.4470 & 0.4470 & 0.4470 & 0.4470 & 0.4470  & 0.4470  & 0.4465 & \underline{\textbf{0.4438}}\\
\hline
MAPE & 1.0257 &\underline{\textbf{1.0257}} & 1.0257 & 1.0257 & 1.0257 & 1.0258 & 1.0267 & 1.0369 \\
\hline
\end{tabular}
\vspace{0.1cm}
\caption{\label{table:mu_1_nme} Variation of the NME and MAPE with  $\mu_1$}
\end{table}

\begin{table}[t]
  \tiny
  \center
\begin{tabular}{|c|c|c|c|c|c|c|c|c|}
\hline
\diagbox[width = 1cm, height = 0.5cm]{$\mu_2$}{$\mu_3$}&0&$10^{-7}$&$10^{-6}$&$10^{-5}$&$10^{-4}$&$10^{-3}$&$10^{-2}$&$10^{-1}$\\
\hline
Metric & \multicolumn{8}{c}{NME (COVID-19)} \vline\\
\hline
0&0.8091&0.8091&0.8091&0.8091&0.8091&0.8091&9.5685&0.8091\\ 
\hline
$10^{-7}$&0.8091&0.8091&0.8091&0.8091&0.8091&0.8091&0.8093&0.8096\\ 
\hline
$10^{-6}$&0.8091&0.8091&0.8091&0.8091&0.8091&0.8091&0.8089&0.8118\\
\hline
$10^{-5}$&0.8091&0.8091&0.8091&0.8091&0.8093&0.8097&0.8089&0.8112\\ 
\hline
$10^{-4}$&0.8090&0.8090&0.8090&0.8090&0.8095&0.8101&0.8104&0.8328\\ 
\hline
$10^{-3}$&0.8096&0.8096&0.8096&0.8101&0.8098&0.8116&0.8133&0.9125\\ 
\hline 
$10^{-2}$&\textbf{0.8084}&\textbf{0.8084}&\underline{\textbf{0.8084}}&\textbf{0.8085}&0.8091&0.8126&0.9107&0.9392\\ 
\hline
$10^{-1}$&0.8105&0.8093&0.8095&0.8113&0.8126&0.9116&0.9533&0.9950\\ 
\hline
Metric & \multicolumn{8}{c}{MAPE (COVID-19)} \vline\\
\hline
0&2.1454&2.1454&2.1454&2.1454&2.1454&2.1454&2.6417&2.1456\\ 
\hline
$10^{-7}$&2.1455&2.1455&2.1455&2.1455&2.1456&2.1458&2.1298&2.2688\\
\hline
$10^{-6}$&2.1460&2.1460&2.1460&2.1460&2.1469&2.1492&2.1489&2.2752\\
\hline
$10^{-5}$&2.1508&2.1508&2.1508&2.1512&2.1530&2.1384&2.1703&2.3944\\ 
\hline
$10^{-4}$&2.1959&2.1959&2.1960&2.2140&2.2051&2.2234&2.2491&2.1158\\  
\hline
$10^{-3}$&2.3035&2.3034&2.3029&2.2797&2.2344&2.3254&2.3276&1.6488\\
\hline 
$10^{-2}$&2.2116&2.2118&2.2136&2.2237&2.2798&2.4312&1.7388&1.8048\\ 
\hline
$10^{-1}$&2.1697&2.1727&2.2483&2.2779&2.4244&1.7243&1.8460&\underline{\textbf{1.0557}}\\
\hline
\end{tabular}
\vspace{0.1cm}
\caption{\label{table:mu_23_nme} Variation of the NME and MAPE with $\mu_2$ and $\mu_3$}
\end{table}

\subsection{Performance Analysis of Algorithm \ref{alg:subgraph_clustering}}
\label{sec:exp_clustering}
We next verify the validity of our theoretical findings in Section \ref{sec:wss_lsp} by conducting a performance analysis of Algorithm \ref{alg:subgraph_clustering}. We construct a synthetic graph similar to the one in the model complexity experiments, consisting of $K = 5$ subgraphs and a total of $N = 300$ nodes. The subgraphs $\G_k$ are built with a $7$-NN connectivity pattern within themselves and combined with each other via extra edges. The membership functions $\gk$ to the component processes are also chosen as in the model complexity experiments. The spectral kernels $\hk$ are set by shifting and scaling a compactly supported bump function according to the intended spectral separation, which is defined as $h(\lambda)=\exp( (\lambda^{2n}-1)^{-1})$ for $\lambda \in (-1,1)$ and 0 elsewhere.

We study the problem of graph partitioning for locally approximating LSGPs  and examine how the membership ratio $\delta/\mu$ and the spectral separation parameter $\epsh$ influence the performance of Algorithm \ref{alg:subgraph_clustering}. In each instance of the experiment, LSGPs with the investigated $\delta/\mu$ and $\epsh$ parameters are generated;  Algorithm \ref{alg:subgraph_clustering} is provided the true covariance matrix $\Cx$ of the process as input; and the subgraphs $\{ \hat{\G}_k \}_{k=1}^K$ returned by the algorithm are compared to the true subgraphs $\{ \G_k \}$.  We evaluate the agreement between the true and the estimated subgraphs using the normalized mutual information (NMI) measure defined as
\begin{equation}
  \label{eq:nmi}
  \text{NMI} = \frac{1}{\text{max}(H(\mathcal{P}_\V),H(\mathcal{P}_{\hat{\V}})} \sum_{k,m} p(\V_k, \hat{\V}_m) \text{log}_2 \frac{p(\V_k, \hat{\V}_m)}{p(\V_k) p(\hat{\V}_m)}
\end{equation}
where $\mathcal{P}_{\V} = \{\V_1, \dots, \V_K\}$ is the true partitioning of the vertices and $\mathcal{P}_{\hat\V} = \{\hat\V_1, \dots, \hat\V_K\}$ denotes its estimate. In \eqref{eq:nmi}, $p(\V_k)$ and $p(\V_k, \V_m)$  represent the probability that a vertex chosen uniformly at random lies in $\V_k$ and in $\V_k \cap \V_m$, respectively. Also,
$
  H(\mathcal{P}_\V) = -\sum_{k} p(\V_k) \text{log}_2 p(\V_k)
$
denotes the entropy of a partitioning. A higher value of the NMI thus indicates a stronger agreement between the two partitions.

Table \ref{table:nmi_values} shows the variation of the NMI with the parameters $\delta/\mu$ and $\epsh$. As the $\delta/\mu$ ratio increases, the dominance of each membership function on the corresponding subgraph weakens, leading to a decrease in the NMI. This is coherent with the findings of Theorems \ref{cor:average_cov_bound} and \ref{theorem:cov_avg_low}, stating that $\delta/\mu$ must be low for ensuring weak between-subgraph and strong within-subgraph covariances. Similarly, the NMI decreases as $\epsh$ increases, which reduces the separation between the kernels and affects the partitioning performance, in coherence with Theorem \ref{cor:average_cov_bound}.

\begin{table}[t]
  \scriptsize
  \center
\begin{tabular}{|c|c||c|c|}
\hline
\multicolumn{2}{|c||}{\textbf{Membership ratio}} & 
\multicolumn{2}{c|}{\textbf{Spectral separation}} \\ 
\hline
 $\frac{\delta}{\mu}$& NMI & $\epsh$ & NMI \\
\hline
0 &  0.9764& 0.1 & 0.9822\\
\hline
0.13 &  0.979& 0.2 & 0.9853\\
\hline
0.27 &  0.965& 0.3 & 0.9740\\
\hline
0.4 &  0.962& 0.4 &  0.9488\\
\hline
0.53 &  0.918& 0.5 &  0.9559\\
\hline
0.67 &  0.917& 0.6 & 0.9489\\
\hline
0.8 &  0.853& 0.7 & 0.9306\\
\hline
\end{tabular}
\vspace{0.1cm}
\caption{\label{table:nmi_values} Variation of  the NMI with $\frac{\delta}{\mu}$ and $\epsh$}
\end{table}

\section*{Appendix J. Additional Comparative Experiments}

In this section, we provide some additional results complementary to the comparative experiments in  Section \ref{sec:comparative_experiments}. In Figures \ref{fig_maes} and \ref{fig_mapes}, we provide a comparison of the tested algorithms with respect to the MAE and the MAPE metrics on the COVID-19 and the Mol\`ene data sets.

\begin{figure}[ht]
  \begin{center}
    \hspace{-0.7cm}
       \subfloat[COVID-19 (Random data loss)]
       {\label{fig_mae_covid_sp}\includegraphics[height=3.5cm]{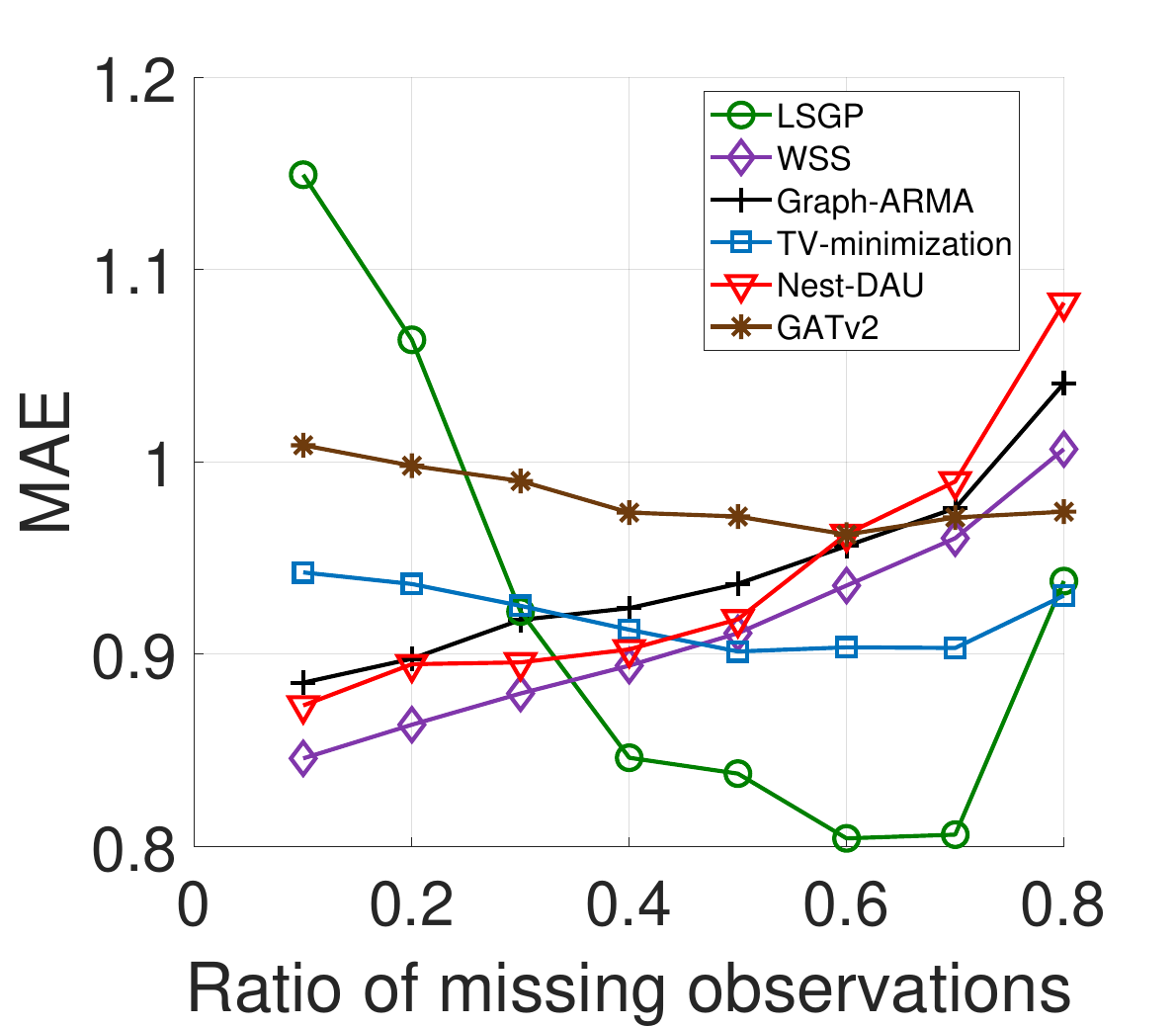}}
       \hspace{0.1cm}
      \subfloat[Mol\`ene (Random data loss)]
      {\label{fig_mae_molene_sp}\includegraphics[height=3.5cm]{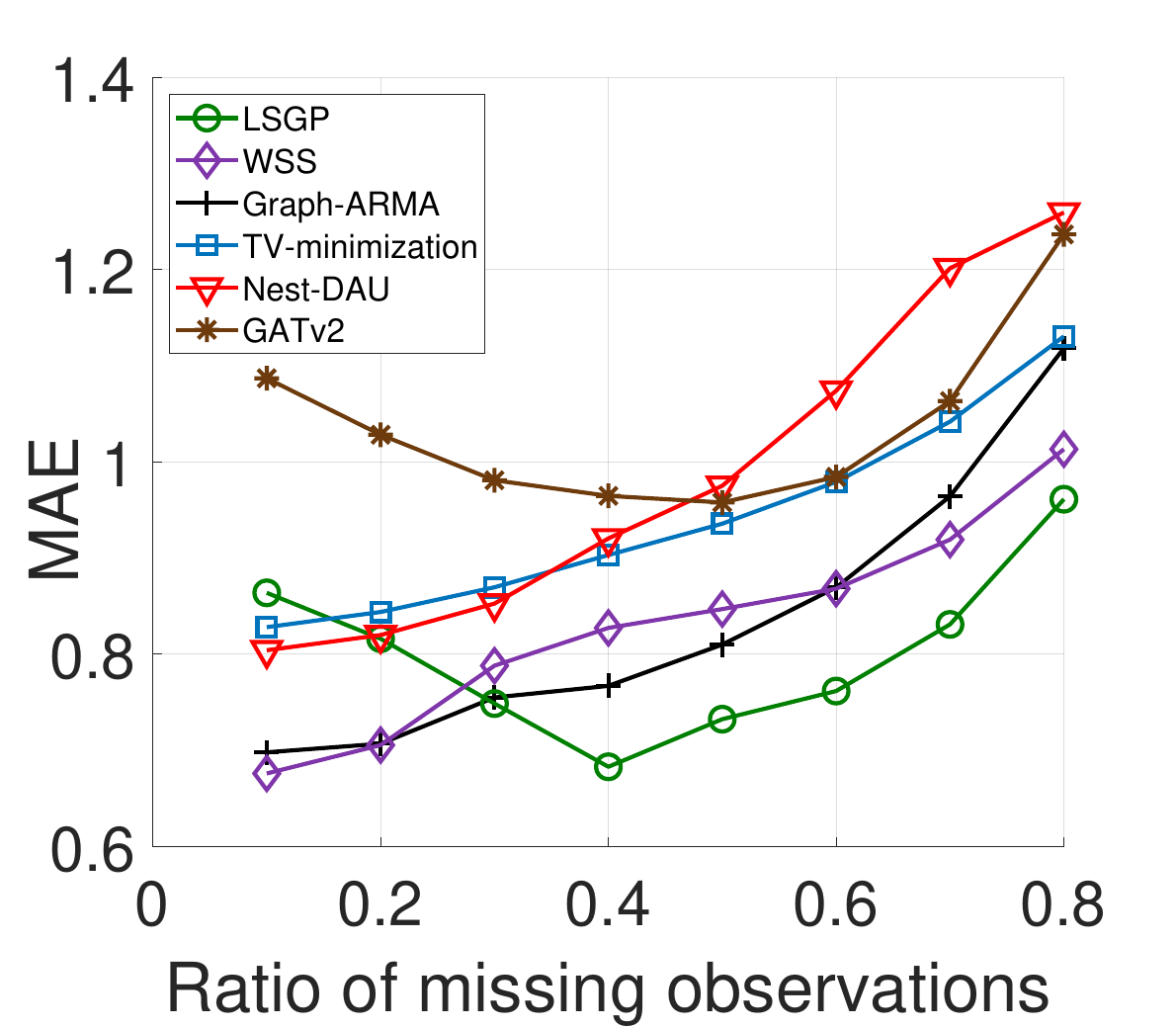}}\\
        \subfloat[COVID-19 (Structured data loss)]
         {\label{fig_mae_covid_nm}\includegraphics[height=3.5cm]{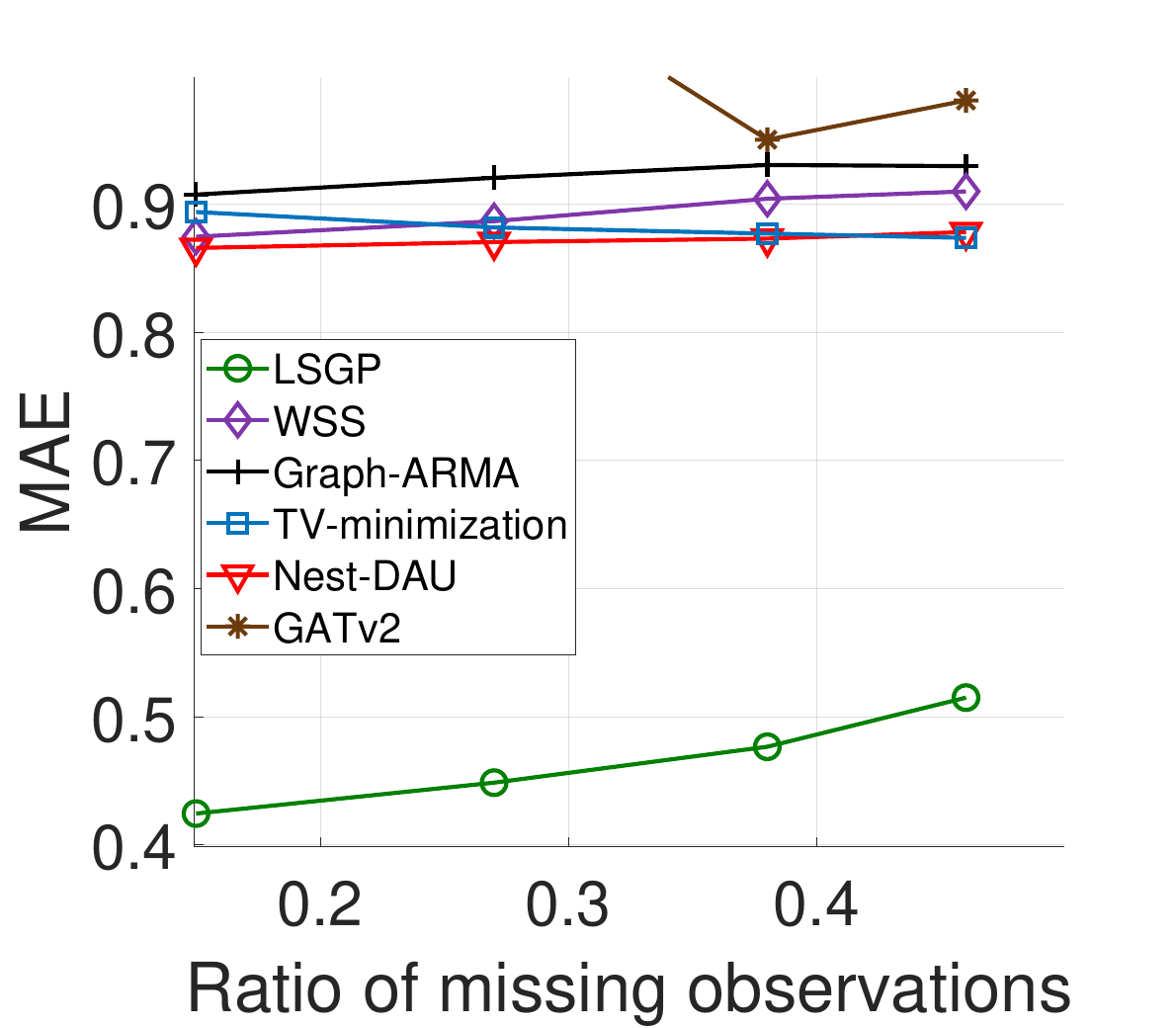}}
         \hspace{0.1cm}
        \subfloat[Mol\`ene (Structured data loss)]
        {\label{ffig_mae_molene_nm}\includegraphics[height=3.5cm]{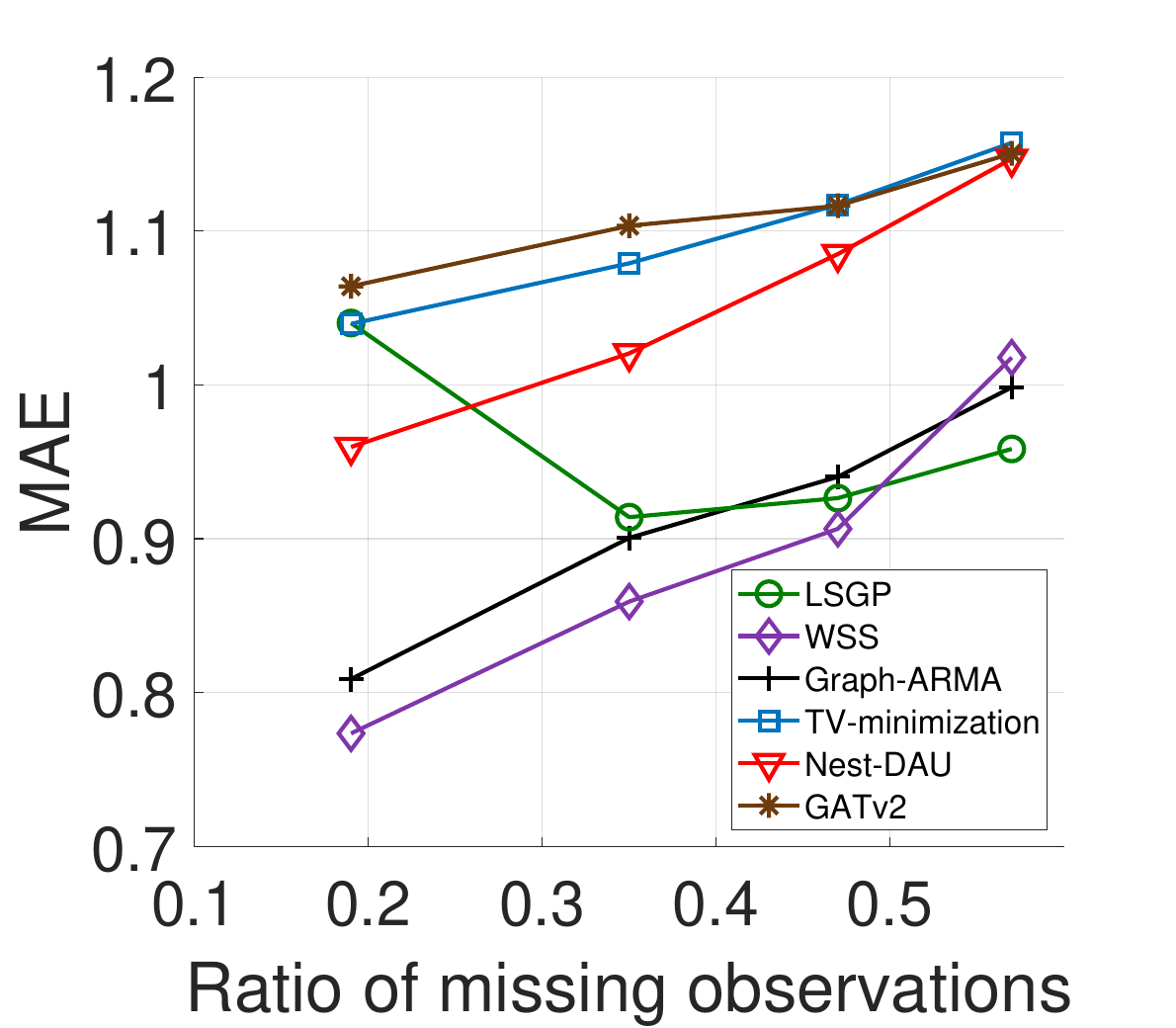}}\\
   \end{center}
    \caption{MAE of compared algorithms on COVID-19 and Mol\`ene data sets}   
   \label{fig_maes}
\end{figure}

\begin{figure}[ht]
  \begin{center}
    \hspace{-0.7cm}
       \subfloat[COVID-19 (Random data loss)]
       {\label{fig_mape_covid_sp}\includegraphics[height=3.5cm]{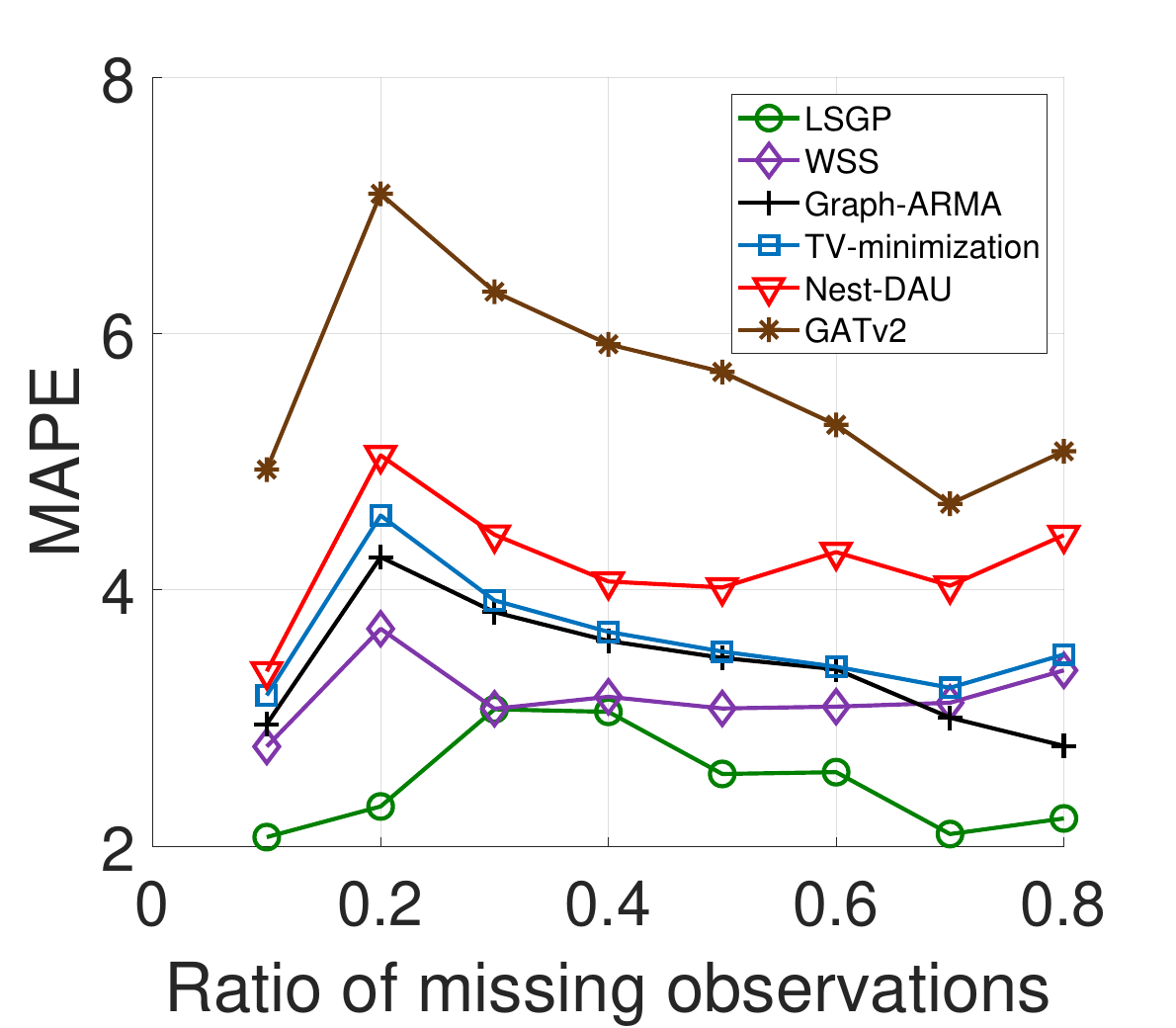}}
       \hspace{0.1cm}
      \subfloat[Mol\`ene (Random data loss)]
      {\label{fig_mape_molene_sp}\includegraphics[height=3.5cm]{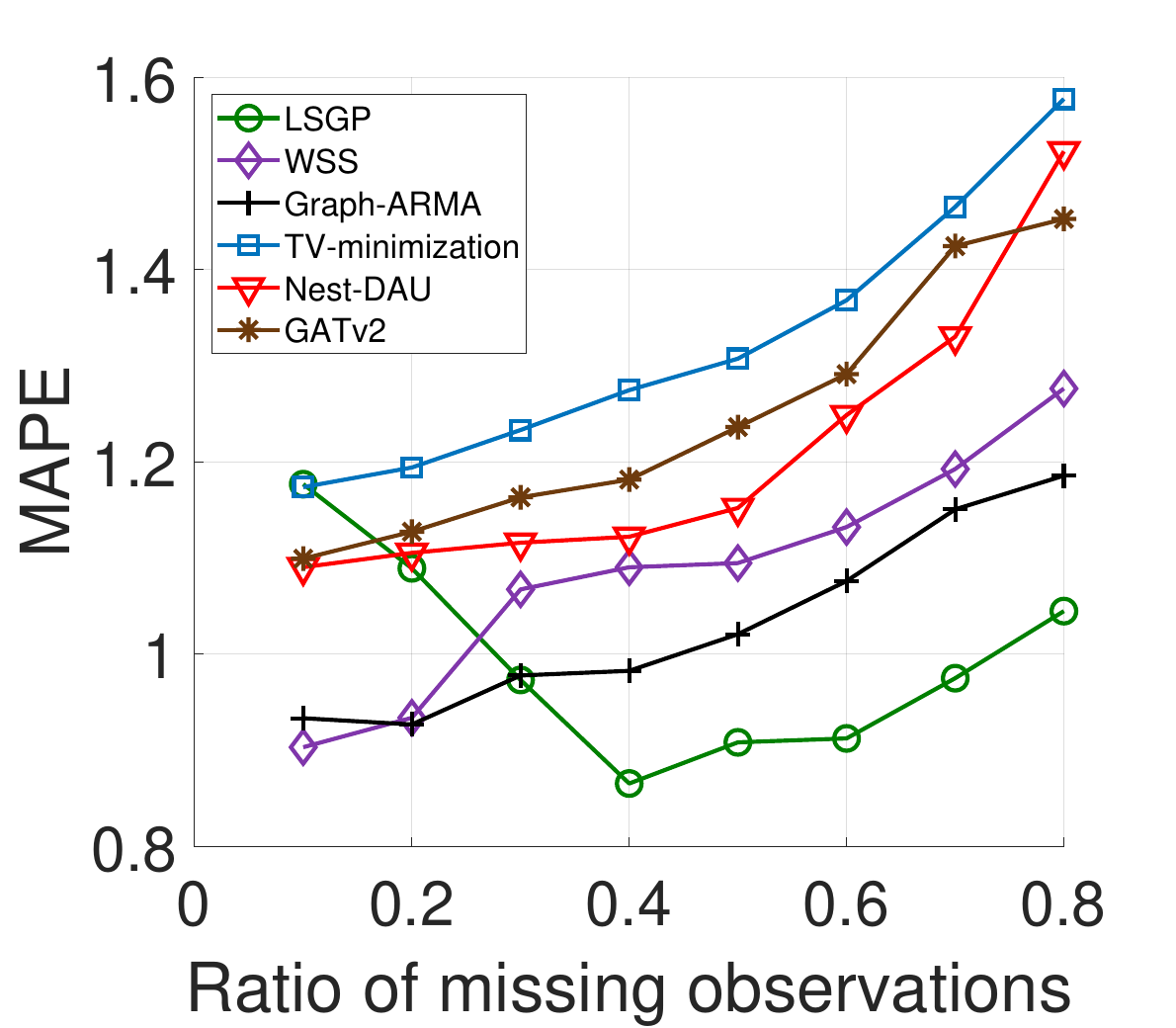}}\\
      \subfloat[COVID-19 (Structured data loss)]
 {\label{fig_mape_covid_nm}\includegraphics[height=3.5cm]{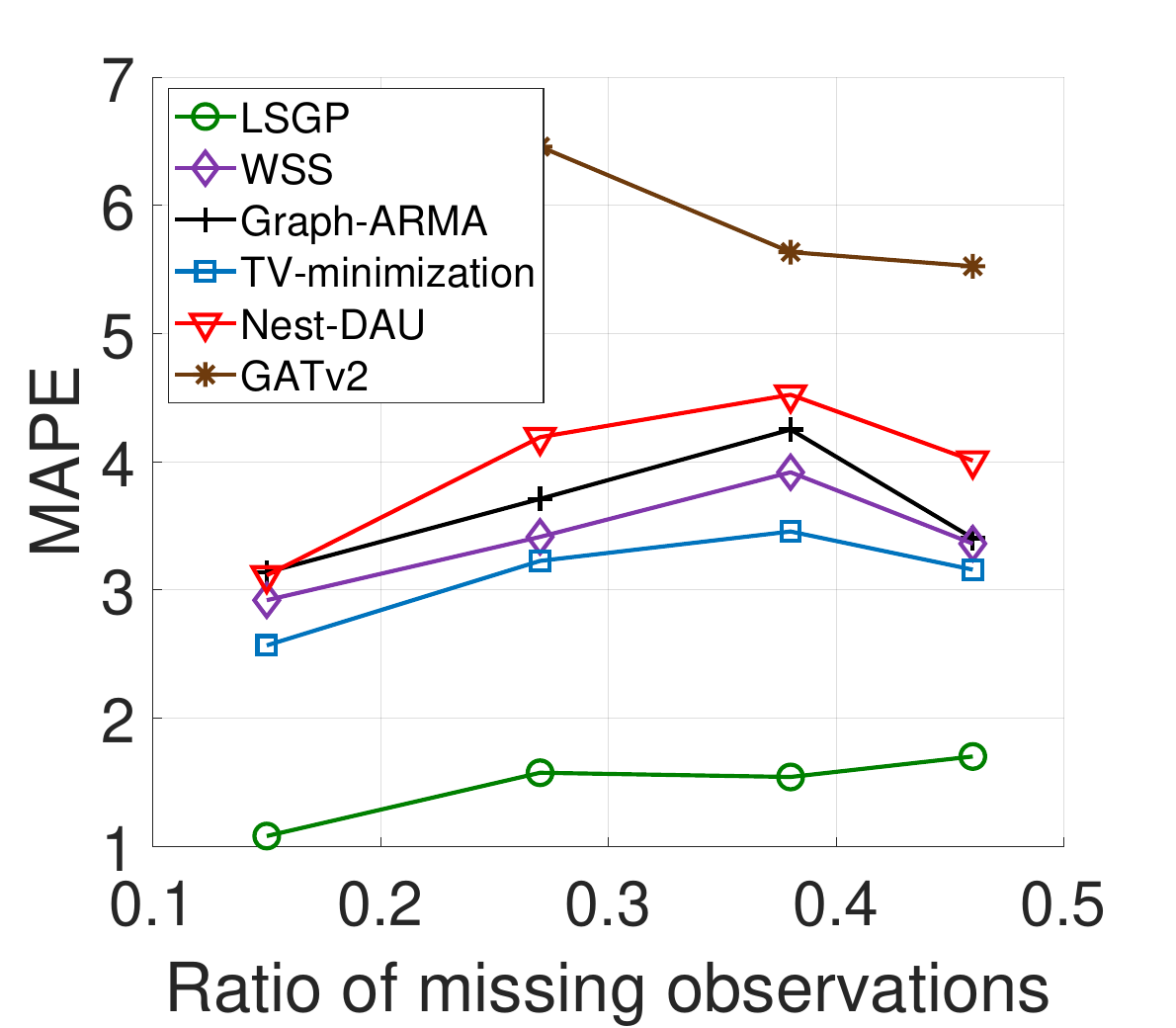}}
 \hspace{0.1cm}
\subfloat[Mol\`ene (Structured data loss)]
{\label{fig_mape_molene_nm}\includegraphics[height=3.5cm]{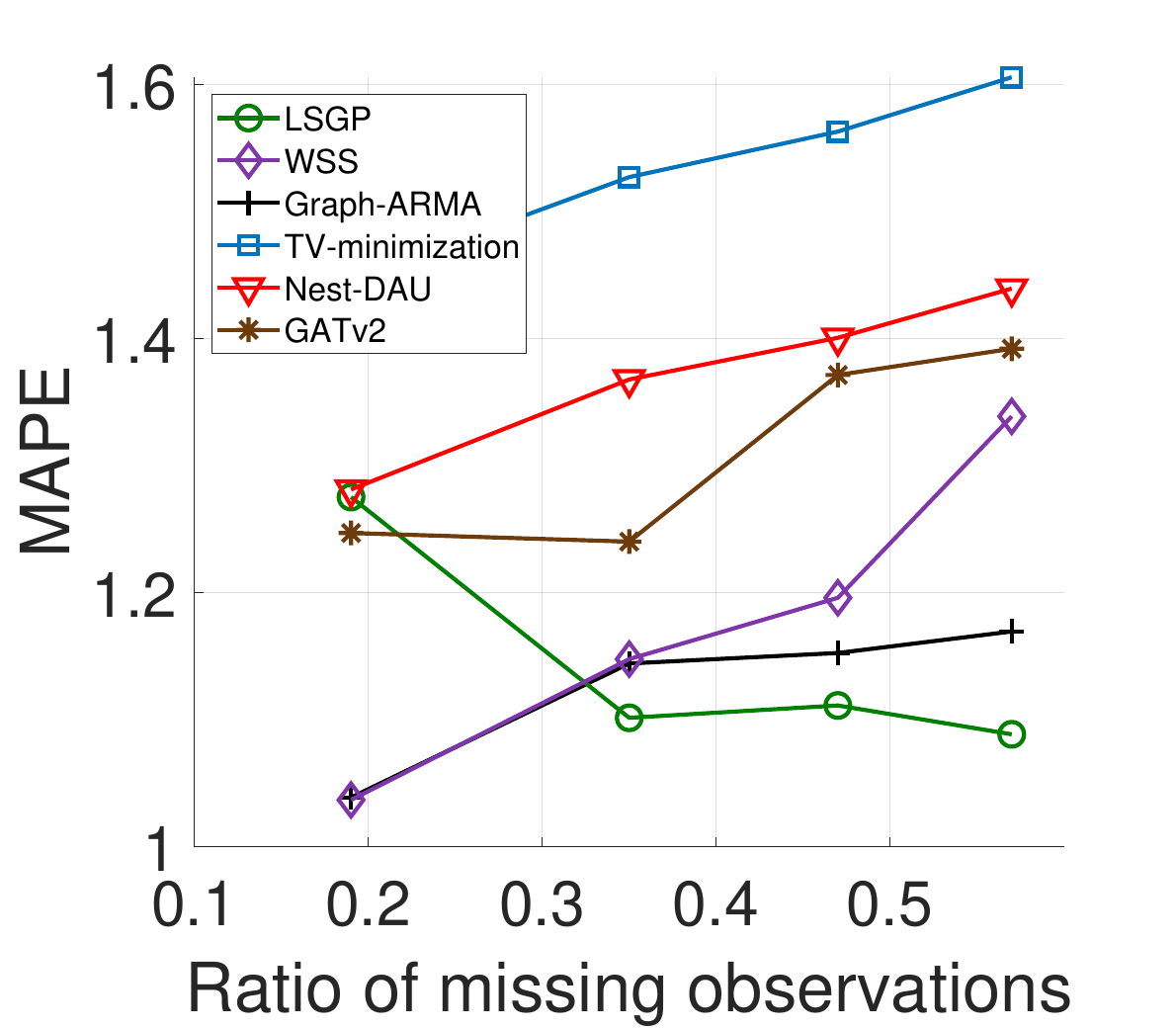}}\\
   \end{center}
    \caption{MAPE of compared algorithms on COVID-19 and Mol\`ene data sets}       
   \label{fig_mapes}
  \end{figure}

Next, in Figures \ref{fig_maes_partitioning} and \ref{fig_mapes_partitioning}, the methods are compared with respect to the MAE and the MAPE metrics on the NOAA and the USA COVID-19 data sets.

\begin{figure}[h!]
  \begin{center}
       \subfloat[NOAA data set]
      {\label{fig_mae_noaa_sp}\includegraphics[height=4.0cm]{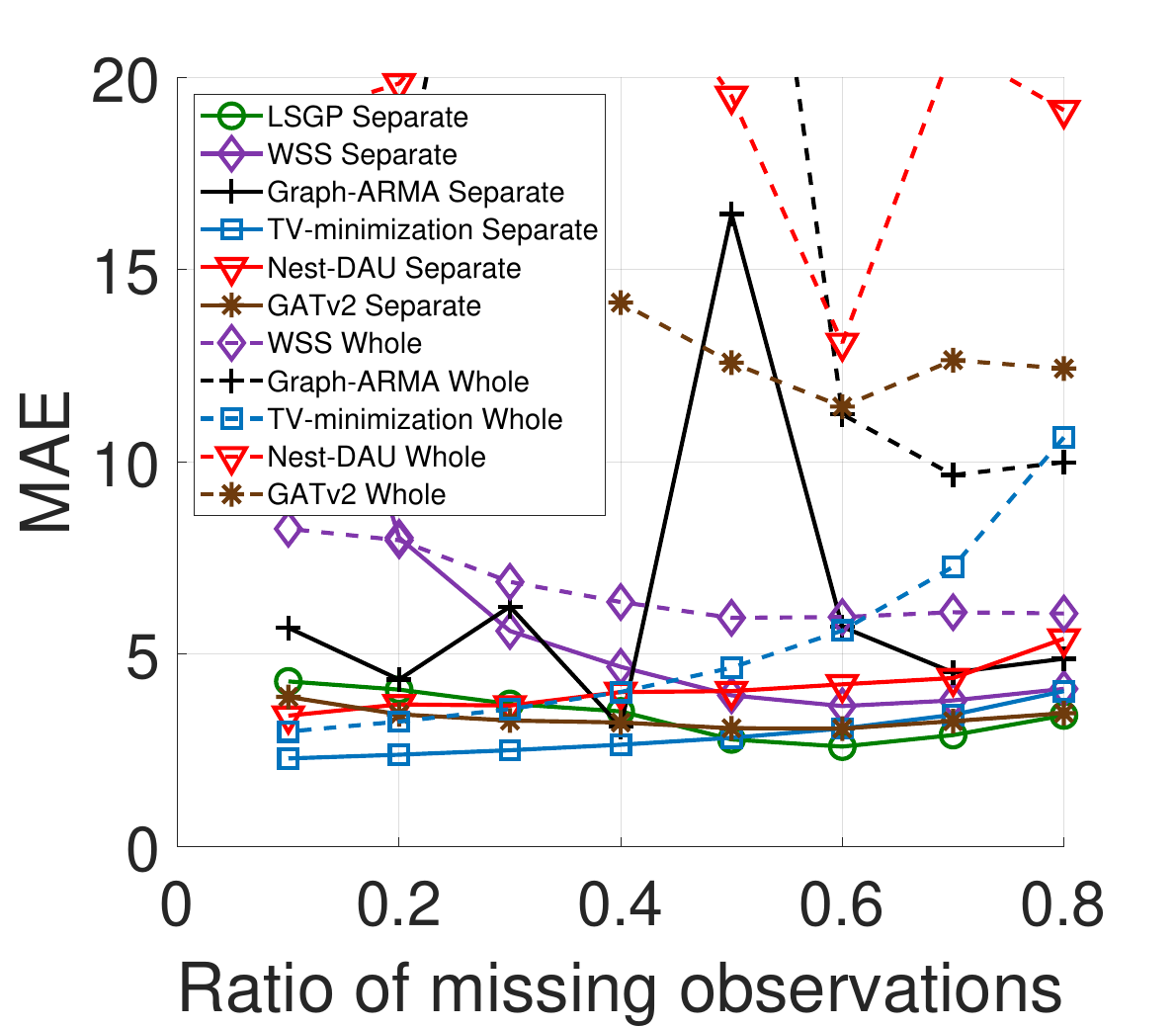}}
      \subfloat[USA COVID-19 data set]
      {\label{fig_mae_eus_sp}\includegraphics[height=4.0cm]{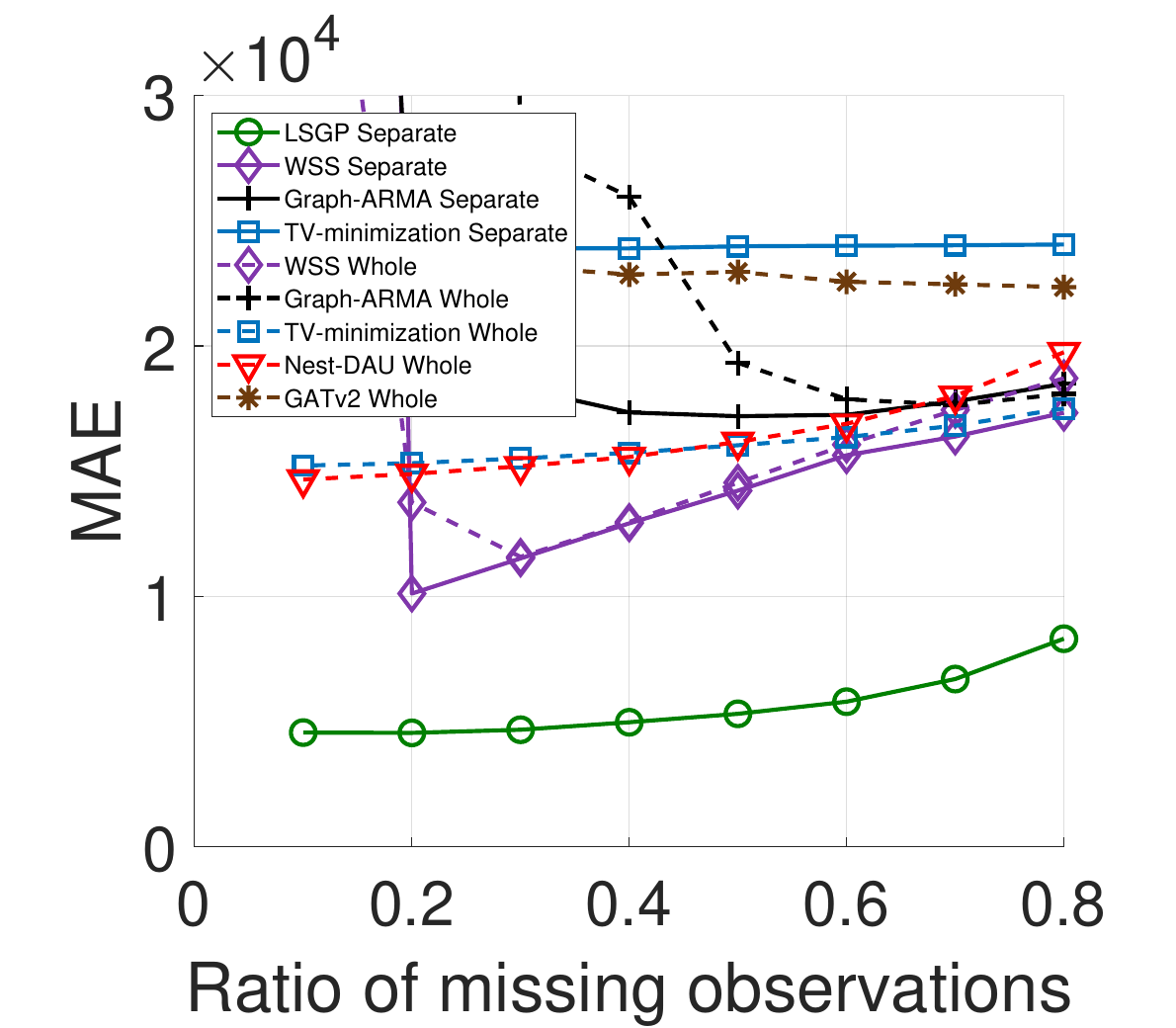}}
   \end{center}
    \caption{MAE of compared algorithms on NOAA and USA COVID-19 data sets}
   \label{fig_maes_partitioning}
  \end{figure}

\begin{figure}[h!]
  \begin{center}
       \subfloat[NOAA data set]
      {\label{fig_mape_noaa_sp}\includegraphics[height=4.0cm]{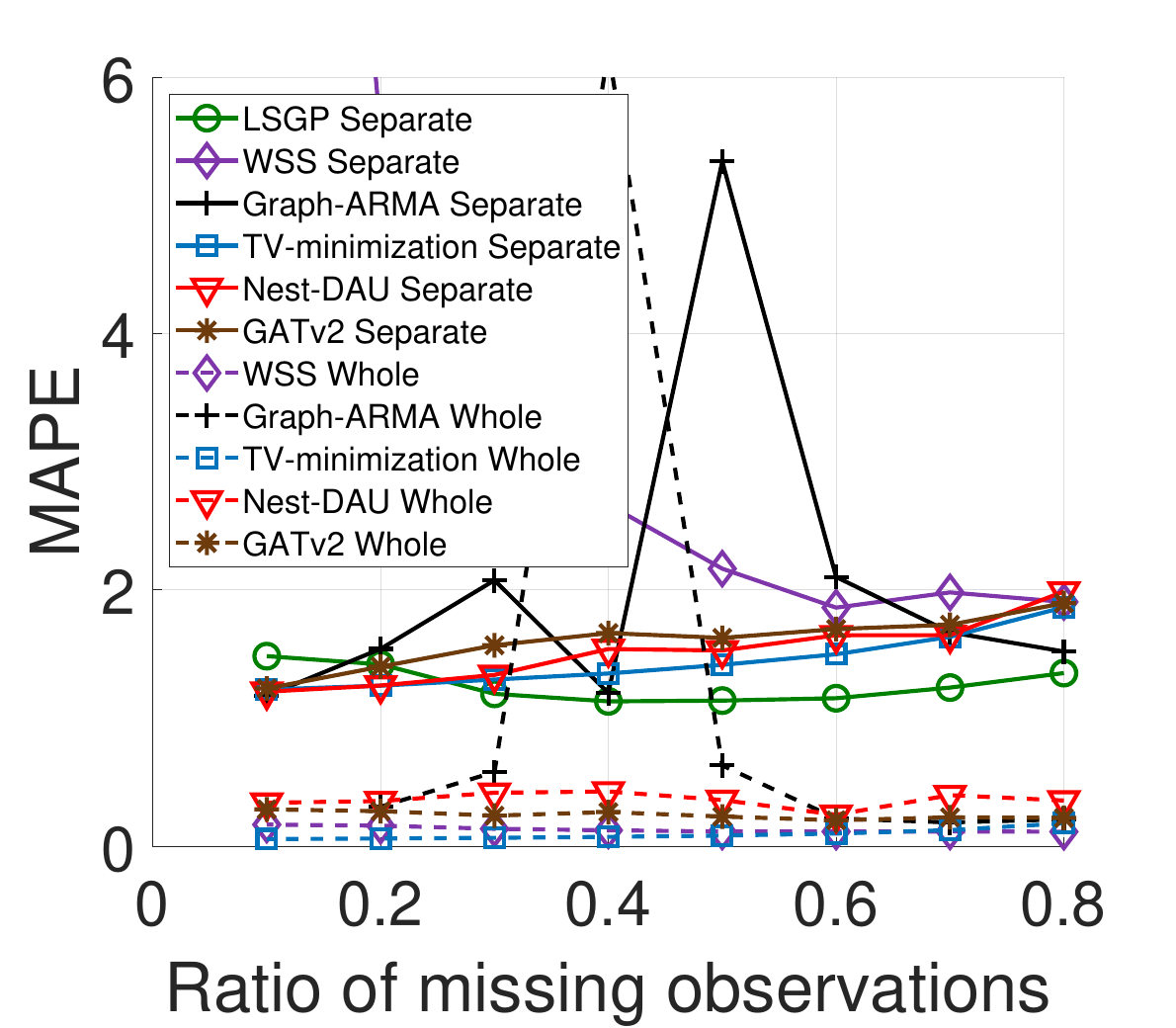}}
      \subfloat[USA COVID-19 data set]
      {\label{fig_mape_eus_sp}\includegraphics[height=4.0cm]{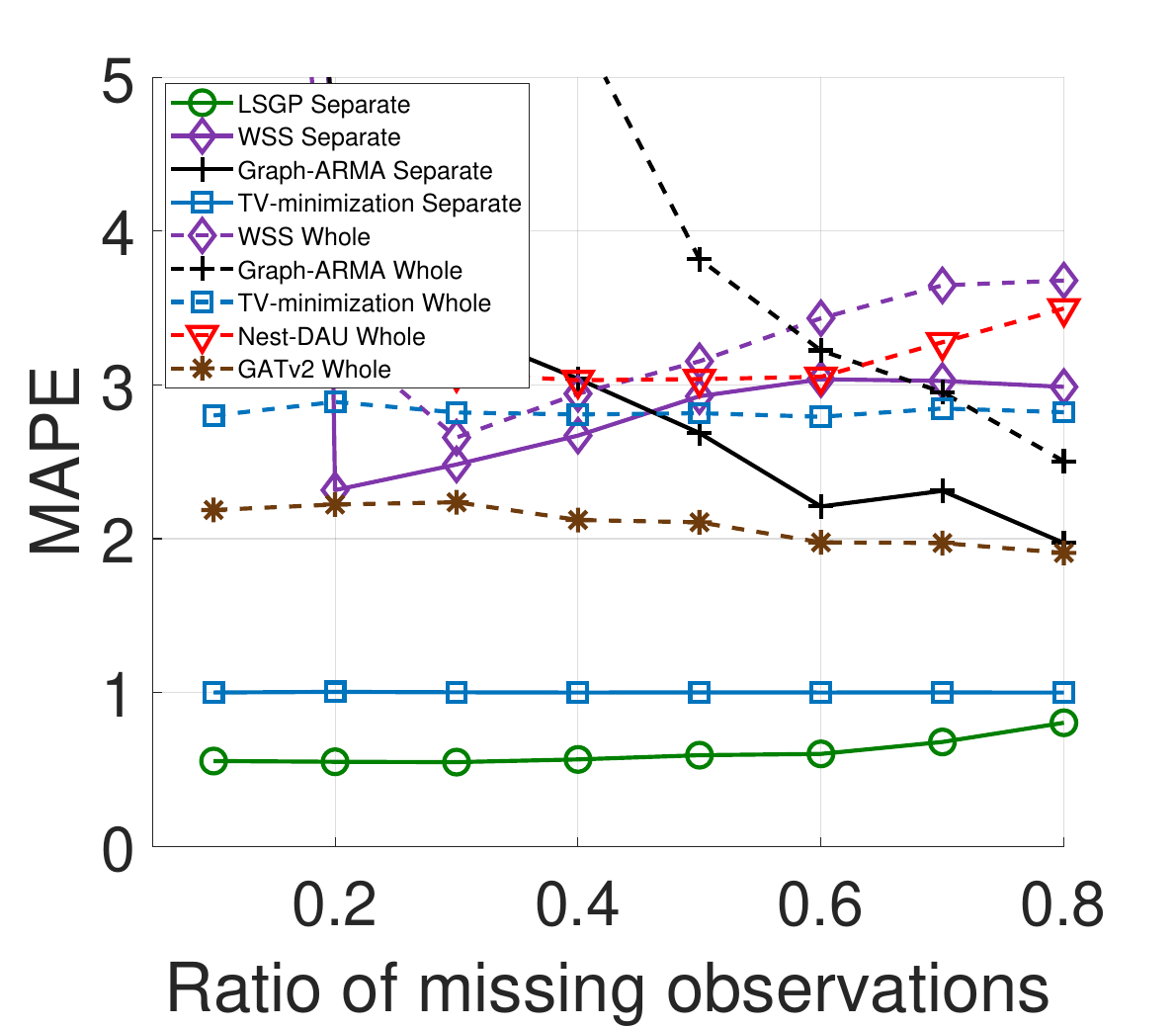}}
   \end{center}
    \caption{MAPE of compared algorithms on NOAA and USA COVID-19 data sets}
   \label{fig_mapes_partitioning}
  \end{figure}

Lastly, Table \ref{table:time_complexity} presents a comparison of the runtimes of the tested methods. The runtimes are measured on a laptop computer with 16 GB DDR5 RAM and 3.2 GHz CPU for the LSGP, WSS, Graph-ARMA, TV-minimization and the Nest-DAU methods. The experiments with the GATv2 (marked with \dag) method are conducted with the PyTorch library on Windows within the WSL environment on the same machine having RTX 3060 GPU. The tests of the Nest-DAU method on the USA COVID-19 data (marked with $*$) are done on a laptop computer with 32 GB DDR4 RAM and 2.6 GHz CPU and Linux operating system. The Nest-DAU method has a higher runtime in the separate graphs setting since the validation procedure has resulted in a larger number of layers. The difference in the runtimes of the GATv2 method between the separate and the whole graph settings is due to the memory management between the CPU and the GPU.

\begin{table}[th]
  \tiny
  \center
\begin{tabular}{|c||c|c||c|c|}
  \hline
  \textbf{Data Set} & \multicolumn{2}{c||}{\textbf{\textit{NOAA}}} & \multicolumn{2}{c|}{\textbf{\textit{USA COVID-19}}}\\
  \hline
  \textbf{Algorithm} & \textbf{Separate subgraphs} & \textbf{Whole graph} & \textbf{Separate subgraphs} & \textbf{Whole graph}\\
  \hline
  LSGP & 62.39 & - & 864.41& -\\ 
  \hline
  WSS & 5.14 & 20.30 & 10.80 & 101.92\\ 
  \hline
  Graph-ARMA & 4.90 & 18.34 & 10.18 & 90.12\\
  \hline
  TV-minimization & 338.84 & 408.20 & 184.29 & 239.16 \\
  \hline
  Nest-DAU  & 6145.51 & 1015.38 & - & 7269.87*\\
  \hline
  GATv2  & 1752.76 \dag& 1663.41 \dag& - & 518.40 \dag\\
  \hline
\end{tabular}
\vspace{0.1cm}
\caption{\label{table:time_complexity} Runtimes of compared methods (in seconds) on NOAA and USA COVID-19 data sets at 50\% missing observation ratio}
\end{table}

}

\end{document}